%% file: arkiv.tex
\documentclass{article}
\PassOptionsToPackage{authoryear,round}{natbib}
\PassOptionsToPackage{table}{xcolor}

\usepackage[T1]{fontenc}

\usepackage{microtype}
\usepackage{graphicx}
\usepackage{subcaption}
\usepackage{booktabs} 
\usepackage{algorithm}
\usepackage{algorithmic}
\usepackage{comment}
\usepackage[most]{tcolorbox}
\usepackage{xparse}
\usepackage{booktabs}
\usepackage{tabularx}
\usepackage{array}
\usepackage{placeins}
\usepackage{tcolorbox}
\usepackage{minted}
\usepackage{xcolor} 
\usepackage{listings}

\lstdefinestyle{promptlist}{
    basicstyle=\ttfamily\footnotesize,
    breaklines=true,
    breakatwhitespace=false,
    columns=fullflexible,
    keepspaces=true,
    showstringspaces=false,
    upquote=true
}
\definecolor{citationblue}{RGB}{0,32,128}
\definecolor{pystringred}{HTML}{BA2121}
\usepackage[breaklinks,colorlinks=true,citecolor=citationblue]{hyperref}       

\usepackage[preprint]{arxiv}

\usepackage{amsmath}
\usepackage{amssymb}
\usepackage{mathtools}
\usepackage{amsthm}
\usepackage{enumitem}
\usepackage{aliascnt}

\usepackage[capitalize,noabbrev]{cleveref}

\makeatletter
\renewcommand{\paragraph}{%
  \@startsection{paragraph}{4}%
  {\z@}{0ex}{-1em}%
  {\normalfont\normalsize\bfseries}%
}
\makeatother

\setlist[itemize,1]{itemsep=0.2\lineskip,topsep=0.2\lineskip,leftmargin=2.3em}
\setlist[itemize,2]{itemsep=0.2\lineskip,topsep=0.2\lineskip,leftmargin=1.4em}
\setlist[itemize,3]{itemsep=0.2\lineskip,topsep=0.2\lineskip,leftmargin=1.4em}

\setlist[enumerate,1]{itemsep=0.2\lineskip, topsep=0.2\lineskip, leftmargin=2.3em}
\setlist[enumerate,2]{itemsep=0.2\lineskip,topsep=0.2\lineskip,leftmargin=2em}
\setlist[enumerate,3]{itemsep=0.2\lineskip,topsep=0.2\lineskip,leftmargin=2em}

\input{math_commands}

\input{symbols}

\theoremstyle{plain}
\newtheorem{theorem}{Theorem}[section]

\newaliascnt{lemma}{theorem}
\newtheorem{lemma}[lemma]{Lemma}
\aliascntresetthe{lemma}

\newaliascnt{proposition}{theorem}

\aliascntresetthe{proposition}

\newaliascnt{corollary}{theorem}

\aliascntresetthe{corollary}

\theoremstyle{definition}
\newaliascnt{definition}{theorem}
\newtheorem{definition}[definition]{Definition}
\aliascntresetthe{definition}

\newaliascnt{assumption}{theorem}
\newtheorem{assumption}[assumption]{Assumption}
\aliascntresetthe{assumption}

\crefname{assumption}{assumption}{assumptions}
\Crefname{assumption}{Assumption}{Assumptions}

\crefrangeformat{assumption}{assumptions~#3#1#4--#5#2#6}
\Crefrangeformat{assumption}{Assumptions~#3#1#4--#5#2#6}

\theoremstyle{remark}
\newaliascnt{remark}{theorem}
\newtheorem{remark}[remark]{Remark}
\aliascntresetthe{remark}
\newcommand{\Attn}{\textsf{Attn}}
\newcommand{\Unif}{\mathrm{Unif}}

\newcommand{\wembed}[1]{\vw_{\texttt{#1}}}
\newcommand{\vembed}[1]{\vv_{\texttt{#1}}}
\newcommand{\qembed}[1]{\vq_{\texttt{#1}}}
\newcommand{\kembed}[1]{\vk_{\texttt{#1}}}
\newcommand{\eembed}[1]{\ve_{\texttt{#1}}}
\newcommand{\symI}{\texttt{<I:>}}
\newcommand{\symO}{\texttt{<O:>}}
\newcommand{\ttC}{\texttt{C}}
\newcommand{\Str}{\textsf{Str}}
\newcommand{\Aref}[1]{Appendix \ref{#1}}
\newcommand{\symn}{\texttt{<\textbackslash n>}}
\newcommand{\MASK}{\mathrm{MASK}}
\newcommand{\RMSNorm}{\mathrm{RMSNorm}}

\newcommand{\haodong}[1]{[{\color{cyan}\textbf{Haodong: #1}}]}

\newcommand{\eos}{\texttt{<EOS>}}

\usepackage[textsize=tiny]{todonotes}

\title{Frontier Language Models Struggle to Copy: \\
Text Can Be Better Viewed in 2D}

\author{%
  Haodong Wen\footnotemark[1] \quad
  Yiran Zhang\footnotemark[1] \quad
  Yingfa Chen\footnotemark[1] \quad
  Kaifeng Lyu \\ 
  Tsinghua University \\
  \texttt{\{whd25,zhangyir22,yingfa-c24\}@mails.tsinghua.edu.cn}\\
  \texttt{klyu@mail.tsinghua.edu.cn}
}

\begin{document}

\maketitle

{
\renewcommand{\thefootnote}{\fnsymbol{footnote}}
\footnotetext[1]{Equal contribution}
}

\renewcommand{\thefootnote}{\arabic{footnote}}
\setcounter{footnote}{0}

\begin{abstract}
  While large language models (LLMs) can solve advanced reasoning problems in seconds, we show that even frontier models fail to perform a much simpler operation: exactly copying an input string that lies well within their context windows.
  We attribute this failure to positional encodings in Transformer architectures, whose inductive bias favors copying through a shortcut based on matching local contexts rather than carefully locating the corresponding input positions.
  To address this issue, we introduce 2D-RoPE, which organizes text into a 2D grid rather than a 1D sequence and assigns each token a row ID and a column ID.
  Under this view, copying becomes simply retrieving input tokens at a fixed column offset, which makes the task easy to learn.
  In synthetic copy experiments, shallow Transformers with 2D-RoPE achieve perfect copying at input lengths hundreds of times longer than those seen during training, whereas standard positional encodings fall far behind.
  We further show that the advantage of 2D-RoPE language models on copy tasks consistently holds in large-scale pretraining on DCLM with model sizes up to $1.4$B parameters. 
  Overall, our results suggest that viewing text in 2D can benefit language modeling, and we hope this encourages future work to further explore the potential of 2D positional encodings. Our code is available at \href{https://github.com/hhhhhh-925/copy-2dRoPE}{https://github.com/hhhhhh-925/copy-2dRoPE}.
\end{abstract}

\input{intro}
\input{related_work}
\input{preliminary}
\input{rope_discussion}
\input{theoretical_analysis}

\input{experiments}

\input{discussion}
\input{conclusion}

\bibliography{ref}
\bibliographystyle{plainnat}

\newpage

\tableofcontents

\newpage

\appendix

\input{appendix}


\end{document}

%% file: math_commands.tex
\usepackage{amsmath,amsfonts,bm}

\def\1{\bm{1}}

\def\vtheta{{\bm{\theta}}}

\def\vb{{\bm{b}}}

\makeatletter
\newcommand{\ve}{\@ifnextchar\bgroup{\velong}{{\bm{e}}}}
\newcommand{\velong}[1]{{\bm{#1}}}
\makeatother

\def\vh{{\bm{h}}}

\def\vk{{\bm{k}}}

\def\vq{{\bm{q}}}

\def\vv{{\bm{v}}}
\def\vw{{\bm{w}}}

\def\vz{{\bm{z}}}

\def\mA{{\bm{A}}}

\def\mH{{\bm{H}}}

\def\mK{{\bm{K}}}

\def\mQ{{\bm{Q}}}
\def\mR{{\bm{R}}}

\def\mV{{\bm{V}}}
\def\mW{{\bm{W}}}
\def\mX{{\bm{X}}}
\def\mY{{\bm{Y}}}
\def\mZ{{\bm{Z}}}

\DeclareMathAlphabet{\mathsfit}{\encodingdefault}{\sfdefault}{m}{sl}
\SetMathAlphabet{\mathsfit}{bold}{\encodingdefault}{\sfdefault}{bx}{n}

\newcommand{\E}{\mathbb{E}}
\newcommand{\R}{\mathbb{R}}

\newcommand{\softmax}{\mathrm{softmax}}

\DeclareMathOperator*{\argmax}{arg\,max}
\DeclareMathOperator*{\argmin}{arg\,min}



%% file: symbols.tex
\newcommand{\cD}{\mathcal{D}}

\newcommand{\cL}{\mathcal{L}}

\newcommand{\cS}{\mathcal{S}}

\newcommand{\cV}{\mathcal{V}}

\newcommand{\linf}{\ell_{\infty}}
\newcommand{\vbeta}{{\bm \beta}}

\newcommand{\norm}[1]{\left\|#1\right\|}

\newcommand{\normtwo}[1]{\norm{#1}_2}

\newcommand{\diag}{\mathrm{diag}}


%% file: intro.tex
\section{Introduction}\label{sec:intro}

While large language models (LLMs) have achieved tremendous success on a wide range of complex reasoning problems, 
we show that they still struggle with a seemingly simple task: given a string as input, reproduce the exact same string as output, possibly with minor adaptations to match a required output format.
We refer to this as the \textit{copy} task and specifically consider the following two variants:
\begin{itemize}
    \item \textbf{Binary Copy.} We give the model a simple string consisting of two types of tokens (such as $0$ and $1$) and the model is instructed to output the exact same sequence of tokens.
    \item \textbf{Python List Conversion.} We generate a comma-separated list of data points from synthetic physical experiments, feed it to the LLM, and ask the LLM to convert the data into a Python list that can be used to generate plotting code for the data.
\end{itemize}
Although solving these copy tasks may not require the same level of intelligence as an LLM, it is arguably a fundamental capability that any intelligent agent should possess.
For example, an agent may need to copy parameters from a configuration file for function calling, or organize unstructured user input into a specific format for downstream processing.
After all, if current LLMs are already intelligent enough to achieve gold-medal performance in Olympiad-level mathematics and competitive programming~\citep{openai2025competitive,deepmind_gemini_imo_2025,deepmind_gemini_icpc_2025}, why shouldn't we expect them to perform this very basic copying task perfectly?


However, as shown in~\Cref{fig:api-acc}, all the frontier LLMs we evaluate, including GPT-5.5, Gemini 3.1 Pro, and DeepSeek V4 Pro, fail on a substantial fraction of input strings that are well within their context lengths.
Their copying accuracy drops further on longer inputs, often falling well below $50\%$.




\begin{figure}[t]
    \includegraphics[width=0.9\linewidth]{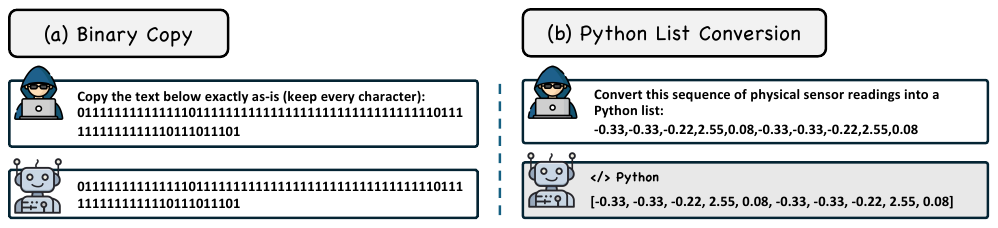}
    \centering
    \includegraphics[width=\textwidth]{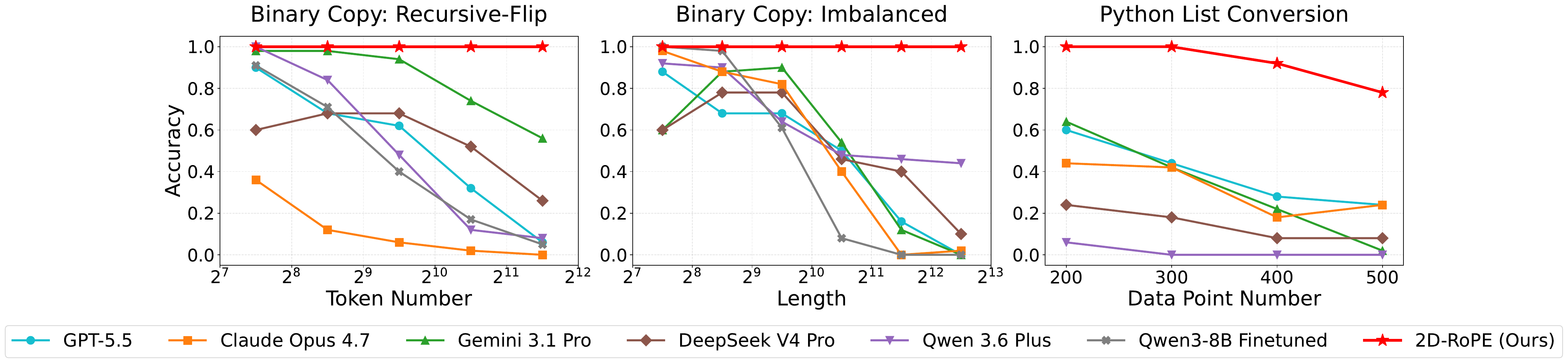}
    \caption{Frontier language models struggle to perform two representative copy tasks, the binary copy and Python list conversion tasks, while our 2D-RoPE models achieve significantly better performance. See~\Cref{sec:copy-benchmarks} for details on the benchmark construction.
    }
    \label{fig:api-acc}
\end{figure}

\paragraph{Understanding the Failure of Copying.}
Our experimental design is motivated by the following conjecture: LLMs may not copy a string by directly using the absolute index $i$ to retrieve the $i$-th character from the input. Instead, copying may be implemented in a way closer to the induction head mechanism~\citep{olsson2022context,chen2024unveiling}: the model searches for a previous occurrence of a similar local context and then predicts the token that appears right after this occurrence.

Under this view, repeated substrings create ambiguous local matches and can therefore interfere with exact copying. Indeed, in our experiments, the tested strings are not arbitrary but are deliberately chosen to contain repeated substrings. For example, in the Python List Conversion task, part of the data is generated from periodic sine or triangular waves.
Our further experiments in~\Cref{fig:repeat-acc} show that copying accuracy is negatively correlated with the degree of repetition in the input strings. 

In~\Cref{sec:LLM-faliure-why-rope}, we develop expressivity theory and mechanistic interpretability results to support this conjecture, and show that the copying failure is closely tied to the positional encoding used in Transformers. Although RoPE has appealing properties such as shift-invariance, it does not provide a sufficiently strong inductive bias for the model to align the source and target tokens by their relative positions.
As a result, the learned attention patterns are not cleanly aligned with the corresponding source positions, but are instead easily attracted by locally similar substrings.

\paragraph{Our Method: 2D Positional Encoding.}
This analysis points to a natural design goal: a positional encoding with a stronger inductive bias towards copying from the corresponding position in the input. 
To this end, a key observation is that the copy task has an implicit 2D structure.
When copying an input string $x_1,\ldots,x_n$ into an output string $y_1,\ldots,y_n$, if we only view the whole context as a 1D sequence, for example:
\[
    \texttt{<Input>}, \texttt{<:>}, x_1, \ldots, x_n, \symn, \texttt{<Output>}, \texttt{<:>}, y_1, \ldots, y_n,
\]
then, producing an output token $y_k$ requires the model to retrieve $x_k$ from an offset that depends on the input length $n$.
This is unnatural for relative positional encodings such as RoPE, which more readily represent fixed-offset retrieval than a length-dependent retrieval.
It is thus no wonder that models may fall back to an easier shortcut such as matching similar substrings.
Instead, if we arrange the input and output strings into two rows, for example:
\begin{align*}
    &\texttt{<Input>}, \texttt{<:>}, x_1, \ldots, x_n, \symn, \\
    &\texttt{<Output>}, \texttt{<:>}, y_1, \ldots, y_n,
\end{align*}
then, producing $y_k$ reduces to retrieving a token from the same column, or from a column shifted by a fixed constant, depending on the prompt template.

Motivated by this key observation, we introduce 2D-RoPE. The construction is inspired by how RoPE is applied to 2D images in vision models~\citep{jeevan2022resource,heo2024rotary,chu2024visionllama}. Given any text, 2D-RoPE uses the line break token $\symn$ (also called the newline token) as the row separator and views the text as a 2D grid, where each token is assigned a 2D position ID pair $(\text{row ID},\text{column ID})$.
RoPE attention is then applied to this 2D grid to encode relative positions along both axes.
Under this representation, copying a string can be implemented by a single attention layer that retrieves each token from the corresponding position in the previous row, which is arguably a simpler structure for the model to learn.

Empirically, 2D-RoPE leads to strong length generalization on the copy task. Here, length generalization refers to the ability to generalize from short training sequences to longer test sequences. In synthetic copy experiments, we train 2D-RoPE models on the binary copy task and show that one-layer models achieve perfect copying with \textbf{up to $1000\times$ length generalization}, while $12$-layer models maintain perfect performance up to $100\times$ length generalization. 


Theoretically, we first prove that a one-layer Transformer equipped with 2D-RoPE can represent the binary copy problem and achieve length generalization. Furthermore, we show that the global minima of a one-layer 2D-RoPE, trained on sequences of length at most $L$, can generalize to sequence lengths polynomial in $L$.



Beyond this synthetic setting, we further ask whether 2D-RoPE retains its
advantage when trained as a real language model rather than on the copy task
alone. We therefore pretrain 2D-RoPE models ranging from $350$M to $1.4$B
parameters on DCLM~\citep{dclm}. After finetuning, these models exhibit
both length generalization and out-of-distribution generalization on copy tasks, while
keeping the performance on common-sense reasoning tasks comparable to RoPE.

Finally, since the 2D structure in natural documents may not always be explicitly marked by line breaks, we introduce Auto-2D-RoPE, which learns a data-dependent transformation to automatically assign 2D coordinates to each token. Our experiments show that Auto-2D-RoPE maintains length generalization on binary copy even without line breaks, whereas the original 2D-RoPE does not.

\begin{itemize}
    \item We show that frontier LLMs fail on the copy task, and develop expressivity theory and mechanistic interpretability results to explain this failure (\Cref{sec:LLM-faliure-why-rope}).
    \item We introduce 2D-RoPE, a positional encoding that views text as 2D rather than 1D. Empirically, we show that training 2D-RoPE models specifically on the copy task leads to strong length generalization. We further theoretically show that this 2D-RoPE improves model expressivity and leads to a better loss landscape for learning the copy task with length generalization. More broadly, this may shed light on how positional encodings shape a model's learning behavior beyond the specific setting studied here~(\Cref{sec:main}).
    \item We further conduct LLM pretraining experiments on DCLM~\citep{dclm} dataset with model sizes ranging from $350$M to $1.4$B. Our results show that 2D-RoPE can exploit the 2D structure of text and significantly improve performance on the copy tasks with comparable common sense reasoning performance~(\Cref{sec:llm-experiments}).
    \item To reduce the explicit reliance of 2D-RoPE on line breaks, we introduce Auto-2D-RoPE, which automatically determines the 2D coordinates for each token and length-generalizes on the copy task even without line breaks~(\Aref{app:adaprope}).
\end{itemize}
Overall, our results suggest that viewing text in 2D can benefit language modeling, and we hope this encourages future work to further explore the potential of 2D positional encodings.

%% file: related_work.tex
\section{Related Works}
\paragraph{Positional Encoding.} 
The attention mechanism relies on position encoding methods to model the positional information of tokens~\citep{vaswani2017attention}.
Most state-of-the-art LLMs employ RoPE~\citep{su2024roformer}, which has demonstrated strong language modeling performance and length generalization.
\citet{kazemnejad2023impact} show that attention with just a causal mask can reconstruct absolute and relative position information.
More recently, \citet{yang2026path} propose a position encoding scheme based on accumulating Householder transformations and show that it can solve $NC^1$-complete problems under $AC^0$-reductions. 

\paragraph{Structured and Multidimensional Positional Encodings.}
Prior work has explored positional encodings that assign multiple structural coordinates to each token.  BiPE~\citep{hetwo2024} decomposes the input sequence into segments, where the segment boundaries are constructed with full stops and line breaks. It then applies absolute positional embeddings within each segment and uses RoPE or ALiBi across segments, which is connected to bilevel nondeterministic finite automata (BiNFA) in theory. For coding-related tasks, HiRoPE~\citep{zhang2024hirope} constructs hierarchical RoPE coordinates from the syntactic structure of source code. 
Although these methods introduce hierarchical positional structure, their constructions do not provide the inductive bias required for copying.
For BiPE, the absolute positional encoding within one segment does not provide the relative alignment along the inner coordinate required for copying.
HiRoPE, on the other hand, relies on the hierarchical structure inherent in coding data and therefore does not directly apply to copying task. 
In computer vision, RoPE has been applied along the height and width coordinates of pixels or image patches~\citep{jeevan2022resource, heo2024rotary, chu2024visionllama}. To the best of our knowledge, prior work has not systematically studied how positional encoding affects the copying abilities of language models.

\paragraph{Length Generalization.} Many early studies show that transformers sometimes succeed and sometimes fail at length-generalization, correlated with multiple factors such as positional encoding, data format, and other training hyperparameters~\citep{bhattamishra2020ability,anil2022exploring,kazemnejad2023impact,awasthi2023improving,jelassi2023length,wang2024length,zhou2023algorithms,zhou2024transformers,changlanguage,jelassi2024repeat}. Specifically on the copy task, Transformers trained
to copy short strings often do not generalize well to longer strings when the input string includes repeated substrings~\citep{zhou2024transformers,morwanifeature}. Theoretically, \citet{huangformal} introduce a formal framework, proving that Transformers can solve a class of problems expressible by the C-RASP formalism~\citep{yangcounting}. Unfortunately, the repeated copy is provably difficult in the sense of C-RASP expressiveness.

\paragraph{The Representation Power of Transformer.} There is a long line of work studying the representation power of Transformer~\citep{perez2021attention,yao2021self,chiang2022overcoming,sanford2023representational}. \citet{merrillexpressive,feng2023towards} analyze the representation power of Transformer with Chain-of-Thought. \citet{wenrnns} compared the expressive ability of Transformer with Recurrent Neural Networks. Later, \citet{chen2024theoretical} showed the limitation of multi-layer Transformers' expressive power. 

\paragraph{Copy Capability of Transformers.}
Copying from context has been studied in several related settings. Some works use synthetic copy tasks to study Transformer length generalization~\citep{kazemnejad2023impact,huangformal,jelassi2024repeat}, while other benchmarks include copying as part of broader LLM evaluations~\citep{chen2025icleval,wang2024needle,hsieh2024ruler,wang2025stringllm}. However, these works do not isolate exact copying of user-provided strings as a standalone failure mode of frontier LLMs, nor do they provide a positional-encoding-based remedy. In contrast, we directly evaluate modern LLMs on exact copy tasks, connect the failure to positional alignment, and propose 2D-RoPE to better align source and target tokens.

\paragraph{Comparison with \citet{jelassi2024repeat}.} \citet{jelassi2024repeat} is most closely related to our work, which studies synthetic copy tasks and gives a two-layer Transformer construction based on context matching. Our focus is different in several aspects. First, we show that frontier LLMs fail to copy user-provided strings exactly, even within their context lengths. Second, their construction and analysis focus on the uniform data distribution and do not imply perfect copying for arbitrary strings with repeated substrings or provide the length-generalization guarantee considered in our work. Finally, their construction is context-based (see~\Cref{subsec:why}), whereas our results suggest that repeated substrings make context matching unreliable.

\paragraph{Comparison with \citet{bai2021segatron}.}
Segatron-XL assigns each token three hierarchical positions: its position within the sentence, the position of that sentence within the paragraph, and the position of that paragraph within the document. It encodes relative differences along these three dimensions using the relative attention mechanism of Transformer-XL~\citep{dai2019transformer}. If the input and output strings are placed in separate segments with aligned local token positions, corresponding tokens can have a fixed relative coordinate that does not depend on the sequence length. Segatron-XL may therefore provide a positional inductive bias similar to that of 2D-RoPE for copying. However, Segatron-XL was introduced within the Transformer-XL architecture and was evaluated only for language modeling; its behavior on exact copying and length generalization was not studied. In contrast, our work shows that even frontier language models struggle with exact copying and provides 2D-RoPE to solve it. We further introduce Auto-2D-RoPE, which learns two-dimensional position structure directly from text.

%% file: preliminary.tex
\section{Language Models Fail to Copy}\label{sec:LLM-faliure-why-rope}
In this section, we first introduce our copy benchmarks in detail, and then present our analysis to diagnose the failure of LLMs on the copy tasks.

\begin{figure}[t]
    \centering
    \includegraphics[width=\linewidth]{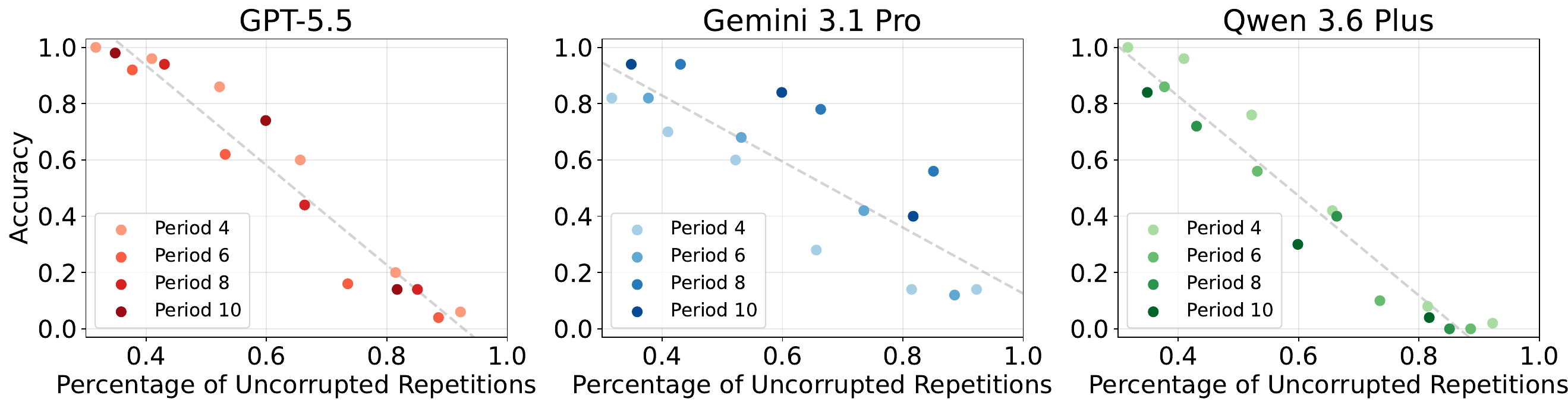}
    \caption{\textbf{Repeat-structure copy test for frontier LLMs.} Each input is
built by repeating a short base block (period $4$--$10$) until length $1000$, and
then independently flipping each token with some probability. The $x$-axis,
$(1-p)^m$, is the fraction of base blocks that survive uncorrupted, so a larger
value means the input contains more exact repetitions; the $y$-axis is the exact
copying accuracy. Across all three models and all four periods, accuracy
decreases as repetitions are preserved more strongly, indicating that repeated
substrings make exact copying harder. See \Aref{sec:repeat-test} for details.}
    \label{fig:repeat-acc}
\end{figure}


\subsection{Our Copy Benchmarks}
\label{sec:copy-benchmarks}
To test the copy capabilities of LLMs, we construct the following copy tasks. See example prompts in \Cref{fig:api-acc} and full prompt templates used in our experiments in \Aref{appendix:prompts}.

\paragraph{Binary Copy Task.} In binary copy, 
we fix a vocabulary $\cV$ of size $|\cV| =2$, which means that the input string $s$ only consists of two types of tokens.
Given an input string $s$, the model is required to output an exact copy of $s$. For example, 
the input string can be $s=``\texttt{011010101010100}"$, with the vocabulary fixed to $\cV =\{\texttt{0},\texttt{1}\}$.
We generate such input strings from three different distributions:
\begin{itemize}
\item \textbf{Uniform Generation:} Each token is sampled independently and uniformly from $\cV$.
\item \textbf{Imbalanced Generation:} We fix a probability set $p \in \{0.05, 0.15, 0.3, 0.5, 0.7, 0.85, 0.95\}$. For each string, we first sample $p$ uniformly from this set, and then each input token is sampled as the first token in $\cV$ with probability $p$ and the second token with probability $1-p$.
\item \textbf{Recursive-Flip Generation:} We initialize $s$ with a single token sampled uniformly from $\cV$. Then we iteratively update $s \leftarrow s + c + s$ for several rounds, where $c$ is a randomly sampled token from $\cV$ in each round.
After $K$ rounds, we obtain a string $s$ with length $2^{K+1} -1$, which contains a complicated recursive structure as well as many random flips. When we need to generate a string with an arbitrary length $n$, we simply truncate $s$ to its first $n$ tokens.
\end{itemize}

\paragraph{Python List Conversion Task.} 
We generate a comma-separated list of data points from synthetic physical experiments and ask the LLM to convert the data into a Python list. This mimics the scenario where an agent needs to copy user inputs into a specific format for downstream processing with Python code, such as visualizing the data.
Our evaluation includes 9 synthetic experiments in total, and each data-point list is generated randomly from one of the $9$ experiments.
These experiments can be further categorized into three types: smooth periodic oscillations, piecewise or reset-driven oscillations, and nonlinear or pulse-like signal responses. See \Aref{app:api-details} for details. 

\paragraph{Qwen3-8B Finetuning Baseline.} We also finetune Qwen3-8B~\citep{yang2025qwen3} on the same
imbalanced binary copy data and exactly the same setups as we used for our other pretrained models in \Cref{sec:llm-experiments},
denoted as \emph{Qwen3-8B Finetuned} in \Cref{fig:api-acc}.
Still, we found that the copy failure persists even after finetuning.

%% file: rope_discussion.tex
\subsection{Evaluation of Frontier LLMs}
\Cref{fig:api-acc} shows that frontier LLMs fail to perform perfect copying on both binary copy and Python list conversion tasks.
For binary copy, we test the models on Recursive-Flip and Imbalanced data.
For each length interval $[2^k,2^{k+1}-1)$ with $k\in\{7,8,9,10,11\}$, we generate $50$ samples with lengths sampled uniformly from that interval, and report the average copying accuracy within each interval. 
For Recursive-Flip, the vocabulary is either \{``\texttt{0}'', ``\texttt{1}''\} or \{``\texttt{000}'', ``\texttt{111}''\}, depending on whether the model's tokenizer tends to merge consecutive $01$ characters into a single token.
For Imbalanced data, we fix the same vocabulary for all models: \{``\texttt{ a}'', ``\texttt{ b}''\}, since all the tokenizers used in our evaluation recognize these two tokens as unique tokens.
For the Python List Conversion task, we evaluate four list lengths, $200,300,400,500$, with $50$ samples for each length.
See details of the evaluation in \Aref{app:api-details}.


There are two notable features in the results. First, as the input length increases, the accuracy of all models decreases.
This suggests that the models may only have learned an ad-hoc copy rule that works for some specific input length range, rather than a rule that can be generalized to all lengths.
Throughout the paper, we use the term \emph{length generalization} to refer to the ability of a model to perform well on inputs with lengths beyond the training range.

Second, frontier LLMs perform especially poorly on strings with many repeated substrings.
To see this, we sample a short binary base string, repeat it multiple times to generate a fixed-length long string, and independently flip each token with probability $1-p$. As we increase $p$ from $0$ to $1$, each repeated substring becomes less likely to be corrupted, and thus the input contains more repeated substrings.
As shown in \Cref{fig:repeat-acc}, the copying accuracy consistently decreases as the number of repeated substrings increases in this way.
Details of this experiments can be found in \Aref{sec:repeat-test}.

\subsection{Why Do LLMs Fail to Copy?}\label{subsec:why}

Motivated by the above observations, we now state our conjecture on why frontier LLMs fail to copy.
There are two natural algorithms for solving the copy task:
\begin{itemize}
    \item \emph{Position-based algorithm}: to output the $k$-th output token $y_k$, derive some representation of the index $k$ and use it to retrieve the corresponding input token $x_k$.
    \item \emph{Context-based algorithm}: first identify the local context around the current output token $y_k$, such as $y_{k-3}y_{k-2}y_{k-1}$, and then find a similar local context in the input, and then copy the token associated with that matched context.
\end{itemize}
The position-based algorithm sounds the most natural, but implementing it in a Transformer requires careful arithmetic over positions, since the relative position between $x_k$ and $y_k$ depends on the input length $n$, which may vary across input strings.
Without handling this dependence properly, the model may learn to copy only for some input lengths but not others.

The context-based algorithm is more flexible and can be implemented with a few attention layers, but it is NOT a correct algorithm in general: if there are many repeated substrings in the input, the context-based algorithm may not be able to locate the correct input position.

It is clear from the imperfect copying accuracy that LLMs do not exactly implement either of the two algorithms above, but what do they do instead?
We conjecture that LLMs instead implement a mixture of the two algorithms.
In other words, LLMs exhibit an inductive bias towards a mixture of position-based and context-based copying algorithms.
This conjecture is supported by the two failure modes: copying accuracy decreases as either the input length or the degree of repetition increases.
\vspace{-0.1in}
\subsection{Diagnosis via Expressivity Theory and Synthetic Experiments}\label{sec:LLM-synthetic-experiment}

In this subsection, we aim to understand the failure of LLMs on copy via a joint analysis of expressivity theory and synthetic experiments, as a support of our above conjecture. To analyze the copy task more precisely, we fix a template parameterized by a
positive integer pair $(a,b)$. Under this template, a full copy example is the
sequence
\[
(\texttt{A}_1,\cdots,\texttt{A}_a,s,\texttt{B}_1,\cdots,\texttt{B}_b,\; s),
\]
where $s \in \{0, 1\}^+ = \bigcup_{k\ge 1} \{0, 1\}^k$ is a binary string to be copied, and LLMs are asked to
autoregressively reproduce the second occurrence of $s$. The tokens
$\texttt{A}_1,\cdots,\texttt{A}_a$ form a fixed prefix and $\texttt{B}_1,\cdots,\texttt{B}_b$ a fixed delimiter (with
$\texttt{B}_1=\texttt{<\textbackslash n>}$) that marks the end of the source string and the
start of the copy. All of $\texttt{A}_1,\cdots,\texttt{A}_a,\texttt{B}_1,\cdots,\texttt{B}_b$ are distinct tokens
outside $\{0,1\}$, so that they can serve as positional anchors without being
confused with the binary content. In our synthetic experiments
(\Cref{sec:synthetic-experiments}), we take $a=2$ and $b=3$.



\subsubsection{Evidence from Expressivity Theory}
First of all, we ask: can Transformers express the position-based copying algorithm?
The answer is mixed.
Given the apparent simplicity of the copy task, one might expect a single Transformer layer to be expressive enough to solve it. This expectation is natural: single attention layers are closely related to associative recall~\citep{ramsauer2021hopfield,bricken2021attention,smart2025context}, and positional encodings can in principle support retrieval tokens at a fixed relative offset~\citep{vaswani2017attention,weiss2021thinking}. Perhaps surprisingly, this is not the case for binary copy. We show that even for input strings with length at most $5$, 
one-layer Transformers cannot exactly perform copying.
\begin{theorem}
\label{thm:1Lnegative}
    No one-layer Transformer with shift-invariant positional encoding can perform the binary copy task for all input strings with length $n \le 5$.
\end{theorem}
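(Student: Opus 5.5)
We argue by contradiction. Fix a one-layer Transformer with shift-invariant positional encoding, meaning the attention logit between query position $i$ and key position $j$ depends only on the two tokens and on $i-j$. Suppose it copies correctly every binary string of length $n\le 5$ under the template $(\texttt{A}_1,\dots,\texttt{A}_a,s,\texttt{B}_1,\dots,\texttt{B}_b,s)$. At the query position that predicts $y_k$, the single layer outputs $F(x_{\text{cur}}, \sum_j \alpha_j \vv(x_j, i-j))$. Here $F$ is the readout (residual plus MLP), and $\alpha_j\propto\exp(\phi(x_{\text{cur}},x_j,i-j))$. The query token $x_{\text{cur}}$ is either $\texttt{B}_b$ or the previous output bit $y_{k-1}$.

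The key point is that the attention output is a convex combination. For a fixed query token and a fixed query position, the output therefore lies in the convex hull of finitely many per-position contributions. The weights are proportional to products of per-key factors $\exp(\phi)$, so they are not arbitrary. Concretely, the aggregated vector is a ratio $N/D$, where $N$ and $D$ are sums over positions of terms that depend only on the (token, offset) pair at each position. Our plan is to exhibit two sets of strings with the following property:
\begin{itemize}
\item at the same query token and the same offsets, the multisets of (token, offset) pairs coincide, or are mixtures of each other;
\item the required next bits differ.
\end{itemize}
This forces the same attention output but different correct answers.

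The cleanest route is a mixing argument. Pick strings $s^{(1)},s^{(2)}$ and $s^{(3)},s^{(4)}$ of equal length $n\le5$ that agree everywhere except at two positions, with the values at those two positions swapped between the pairs, e.g.\ $s^{(1)}=\dots0\dots0\dots$, $s^{(2)}=\dots1\dots1\dots$, $s^{(3)}=\dots0\dots1\dots$, $s^{(4)}=\dots1\dots0\dots$. We then use the output prefix and the query position to steer which bit is requested. Write $u_p(t)=\exp(\phi)\,\cdot\,$(value) and $w_p(t)=\exp(\phi)$ for token $t$ at offset $p$. Then $N$ and $D$ are additive across the two varying positions, so
\[
N^{(1)}+N^{(2)}=N^{(3)}+N^{(4)},\qquad D^{(1)}+D^{(2)}=D^{(3)}+D^{(4)}.
\]
Hence a certain convex combination of outputs 1 and 2 equals a convex combination of outputs 3 and 4.

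It remains to ensure that the target bit is the same on $\{1,2\}$ but opposite on $\{3,4\}$, or some similar sign pattern. Readout is linear-then-threshold only if $F$ is linear, which fails in general because of the MLP, so this needs care. To handle it, we either restrict to queries where the output is a fixed linear functional (the logit difference of tokens $0$ and $1$ after an MLP is still just some function of the attention vector), or we upgrade the argument. The upgrade is to find two strings whose full (token, offset) multisets relative to the query coincide up to a common scaling, so that the attention vectors are exactly equal while the targets differ. With only two bits available and the output prefix fixed by previous correct outputs, exact equality is hard. So we plan to combine both ideas: use output-prefix bits to create ties, and a length-$\le5$ enumeration to find a configuration where the logit function $g$ must be positive on two vectors and negative on a point of the segment between them at the same query.

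The main obstacle is this last step: bridging from convex-combination equalities to a contradiction despite the nonlinear MLP readout, since $F$ can in principle carve non-convex decision regions. We would first try to exploit that varying a single input bit moves the attention vector along a one-parameter family $(N_0+\lambda u)/(D_0+\lambda w)$, which is a straight segment. Several bits varied jointly then give points that are collinear or coplanar in a controlled way. We would then find strings of length $\le5$ whose correct answers alternate along one such segment three or more times. If this is impossible, we would instead exploit the fact that the answers must alternate more times than a $0$-$1$ labeling along a line permits given continuity constraints from varying query positions.
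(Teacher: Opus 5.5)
Your proposal has a genuine gap, and it sits exactly at the step you flag as the main obstacle. With a residual-plus-MLP readout the statement is false, so no convex-combination or alternation argument can close it. There are only finitely many inputs with $n\le 5$. Take one-hot values, so that the attention output is the vector $N/D$ of per-token attention masses. Here $D$ is a sum of distinct weights $e^{f(o,\cdot,\cdot)}$, one for each (token, offset) pair present. Now take two different inputs with the same query token. If $D^{(1)}\neq D^{(2)}$, then $N^{(1)}_tD^{(2)}=N^{(2)}_tD^{(1)}$ would force the irreducible linear form $D^{(1)}$ to divide a proper nonzero sub-sum $N^{(1)}_t$. So the identity fails as a polynomial identity, and generic scores map all inputs to distinct points. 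An MLP can then interpolate the correct bit on those points, including labelings that alternate along a line. This defeats both of your proposed upgrades. The paper's model has no MLP and no residual: the logits are $(\Attn(\mX)\mW_O)_{T,:}$. Hence the sign of the logit gap is the sign of $\sum_j e^{\mA_{T,j}}\Delta_{x_j}$, which is additive over (token, offset) pairs. Any proof has to use this linearity.

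Even with a linear readout, your four-string identity yields nothing within a single context, because the target is $s_i$ for a fixed $i$. If $i\notin\{p,p'\}$, all four targets agree. If $i=p$, the required signs are $(+,-,+,-)$, and if $i=p'$ they are $(+,-,-,+)$. You never get the $(+,+,-,-)$ pattern needed to contradict $N^{(1)}+N^{(2)}=N^{(3)}+N^{(4)}$.

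The paper's idea is to compare two contexts. The effect of swapping the bits at offsets $o_1,o_2$ depends only on the query token and $(o_1,o_2)$. The paper therefore uses query token $0$ and offsets $(b+4,b+2)$ twice: once with the requested bit at $b+4$ ($\Str(00010,2)$ vs.\ $\Str(01000,2)$), and once with it at $b+2$ ($\Str(100,3)$ vs.\ $\Str(001,3)$).

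If you follow this route, note that in the second pair $s_1$ also appears in the copied prefix at offset $1$, and the paper's proof drops that term. Define $\delta(o):=e^{f(o,\vq_{(0)},\vk_{(0)})}\Delta_{(0)}-e^{f(o,\vq_{(0)},\vk_{(1)})}\Delta_{(1)}$. Then the two pairs give $\delta(b+4)>\delta(b+2)>\delta(b+4)+\delta(1)$, which is a contradiction only if $\delta(1)\ge 0$. That holds when $\Delta_{(0)}\ge 0\ge\Delta_{(1)}$. The reversed signs are ruled out by $\Str(00,2)$ vs.\ $\Str(01,2)$, which force $\delta(b+1)>0$. But $\delta(1)\ge 0$ fails in general when $\Delta_{(0)}$ and $\Delta_{(1)}$ share a sign: a copied bit can then cancel its source. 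For $(a,b)=(2,3)$ one can check that this cancellation extends to a one-layer model that copies every string with $n\le 5$. So, at least for that template, the statement as formalized needs an extra hypothesis, for example on the signs of $\Delta_{(0)},\Delta_{(1)}$, before either route can succeed.
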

Note that shift-invariant positional encodings cover a wide range of positional encodings, including RoPE, NoPE, ALiBi, etc.
This follows from a simple proof by contradiction.
Intuitively, one-layer transformer fails to copy since copy requires retrieval at a length-dependent offset rather than a fixed offset. 
Thus, the failure of one-layer models reflects a lack of capability for processing length-dependent tasks.
The formal statement of this theorem and its proof are in \Aref{sec:1Lnegative}.

Beyond one layer, we show that two-layer Transformers with RoPE can express binary copy.
\begin{theorem}\label{thm:2L-RoPE-represent}
Let $a,b\geq 2$. There exists a constant embedding dimension $d$ such that for any norm constraint $\rho > 10^4$, there exist RoPE frequencies $\vbeta$ and a parameter vector $\vtheta$ with $\normtwo{\vtheta} \le \rho$ such that the two-layer Transformer with embedding dimension $d$, RoPE frequencies $\vbeta$, parameters $\vtheta$ can perfectly perform copying for all lengths $1 \le L \le O(\rho^2/\log\rho)$.
\end{theorem}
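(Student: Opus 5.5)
We build an explicit two-layer construction that implements the \emph{position-based} algorithm. The first layer computes, at every position, a representation of the length-dependent offset between an output token and its source. The second layer then uses that representation to retrieve the source token.

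The natural quantity to compute is the distance from the current position to the delimiter $\texttt{B}_1=\symn$, or equivalently the current column index $k$ in the copy. Because all prefix and delimiter tokens are distinct from $\{0,1\}$, they can serve as anchors.

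Concretely, in layer one we will use two NoPE-like heads, with all RoPE frequencies on those coordinates set to $0$.
\begin{itemize}
\item The first head attends uniformly over the causal prefix, reading a value that equals $1$ only on $\texttt{A}_1$. Its output at position $t$ is $1/t$.
\item The second head attends uniformly but reads a value supported on $\texttt{B}_1$ (or on $\texttt{B}_b$). Its output is $\mathbb 1[\text{delimiter seen}]/t$.
\end{itemize}
Together with a fixed constant coordinate, these scalars determine $t$ and whether we are in the copy phase. The source position of the token to be produced at position $t$ is then $t' = t-(n+b)$, where $n=(t_{\mathrm{delim}}-a-1)$. Since the template is fixed, $t'$ is an affine function of $t$ and of the delimiter position $t_{\mathrm{delim}}$.

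To make the delimiter position available at later positions, we will let the delimiter token compute $1/t_{\mathrm{delim}}$ in layer one and broadcast it. Alternatively, because output position $t$ satisfies $t = 2t'+(\text{const})$ in the template, the source position is simply $t'=(t-c)/2$ with $c=a+b$ fixed. This is the cleaner route: the required quantity is a fixed \emph{ratio} of $t$, and a one-layer head can produce $\phi(t)=1/t$.

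The second layer must attend from query position $t$ to key position $t'=(t-c)/2$ with margin. We will use query/key features encoding $1/t$ and $1/t'$ and build a quadratic score peaked at the target. Writing $u=1/t$ for the query, the query features are $(u,u^2,1)$-type and the key features are $(v,v^2,1)$, where $v$ is the key's own $1/s$ from layer one (adjusted by the ratio). The score is then $-\rho^2 (\alpha u - v)^2$ with $\alpha$ fixed so that the maximum sits exactly at $s=t'$. Because $(\alpha u-v)^2$ expands into bilinear terms of the features, it is expressible as $q^\top k$ with weights of size $O(\rho)$ per matrix, consistent with $\|\vtheta\|\le\rho$.

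RoPE is kept only so that the query/key interaction stays shift-consistent. One can use frequency $0$ for these coordinates, with RoPE being harmless elsewhere. Where needed, a small nonzero frequency block also supplies the relative-offset term $-(b)$ correction.

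The main obstacle is the margin analysis. Adjacent candidate keys $s$ and $s\pm1$ give values of $v$ differing by about $1/L^2$. The score gap is therefore about $\rho^2\cdot 1/L^4$, while the softmax must overcome up to $L$ competing keys, which requires a gap of order $\log L$. A naive bound of this kind yields only $L\lesssim \rho^{1/2}$.

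To reach $L=O(\rho^2/\log\rho)$, the first thing to try is replacing $1/t$ with a linear-in-$t$ representation. A RoPE head with a small frequency $\beta\approx 1/L$ lets the query at $t$ and the key at $s$ have a score $\rho^2\cos(\beta(\,\cdot\,))$-like form in which the relevant offset enters linearly. This gives a gap of about $\rho^2\beta^2\approx\rho^2/L^2$, and hence a per-neighbor gap of $\rho^2/L^2\cdot L\gtrsim\log L$, i.e.\ $L\lesssim \rho^2/\log\rho$ up to exponents. I would tune $\beta$ and verify the exact exponent carefully there.

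Finally, an output MLP/unembedding maps the retrieved value (approximately $\pm1$ for $0/1$, with error $e^{-\Omega(\log L)}$) to the correct argmax token. The constant $10^4$ absorbs the thresholds.
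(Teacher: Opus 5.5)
There is a genuine gap in how your query is made to point at its source.

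\textbf{The alignment step.} The ``cleaner route'' rests on $t=2t'+\mathrm{const}$, and this is false. For $|s|=n$, the query predicting $s_i$ sits at $T_i=n+a+b+i-1$ and its source sits at $p_i=a+i$. Hence $T_i-2p_i=n-i+b-a-1$, which varies with $n-i$; it is constant only for the last copied token. The true offset $T_i-p_i=n+b-1$ is exactly the length-dependent quantity that makes copying hard. Your second layer uses the score $-\rho^2(\alpha u-v)^2$ with $u=1/t$, $v=1/s$ and a fixed $\alpha$. It therefore selects a position determined by $t$ alone, which is the wrong one for generic $(n,i)$.

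The alternatives do not close this:
\begin{enumerate}
\item With frequency-$0$ heads, layer one gives a copy-phase query only its own token and permutation-invariant statistics of its prefix. These determine $n+i-1$ but not $n$ and $i$ separately, because source and copy bits are indistinguishable.
\item Broadcasting $1/t_{\mathrm{delim}}$ from the delimiter needs an extra attention step before the retrieval, and you have no spare layer for it.
\item Your final RoPE idea presupposes that the query already carries a rotation by an angle proportional to $n+b-1$. Nothing in your layer one produces this.
\end{enumerate}

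\textbf{The missing idea.} The paper never computes absolute positions. Instead, it gives the source $s_i$ and the query predicting $s_i$ the \emph{same} length-independent feature of $i$.
\begin{enumerate}
\item Layer one uses RoPE with $\beta_2=\pi/(4L_0)$. Among the anchors, source tokens see only $\texttt{A}_{a-1},\texttt{A}_a$, with key weight $\alpha=\rho/10$. Copy-phase queries are dominated by $\texttt{B}_{b-1},\texttt{B}_b$, with key weight $2\alpha$.
\item A rotated term adds $\sin(i\beta_2)$ to one anchor's score, so the attention mass splits in ratio $1:\exp(\sin(i\beta_2))$. After RMSNorm, the source gets a unit vector $\vh_i$ in the span of the $\texttt{A}$-anchor values. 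The query gets the matching $\tilde{\vh}_i$ in the span of the $\texttt{B}$-anchor values.
\item Layer two needs no positional encoding. The matrix $(\mK^{(1)})^\top\mQ^{(1)}$ pairs the two spans, so the logit to position $j+a$ is proportional to $\langle\vh_j,\vh_i\rangle$, up to an $e^{-\alpha}$ perturbation. This is uniquely maximized at $j=i$, with a gap of order $1/L_0^2$, because $\sin$ is increasing on $[0,\pi/4]$.
\end{enumerate}

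\textbf{The margin bookkeeping.} Your accounting does not reach $O(\rho^2/\log\rho)$. With one frequency $\beta\approx1/L$, your own estimate gives a nearest-neighbour logit gap of order $\rho^2/L^2$. This gap must be at least a constant, and at least $\log L$ with a union bound over competitors as the paper uses. So it only supports $L\lesssim\rho$ up to logarithms. The extra factor $L$ in ``$\rho^2/L^2\cdot L$'' is unjustified.

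The paper reaches $\rho^2/\log\rho$ because its layer-two logit is quartic rather than quadratic in the parameters. The RMSNorm gain $\lambda$, which is counted in $\|\vtheta\|_2$, scales both the key-side and the query-side inputs of layer two. The logit is therefore about $\lambda^2\cdot 100L_0\,\langle\vh_j,\vh_i\rangle$, with $\lambda^2=100L_0\log L_0$. The $\Omega(1/L_0^2)$ inner-product gap then becomes a logit gap of at least $100\log L_0$. That is enough to put more than half the attention on the source among $O(L_0)$ positions. Meanwhile $\|\vtheta\|_2^2\le\rho^2/10+100+400L_0+100L_0\log L_0\le\rho^2$ for $L_0=\rho^2/(1000\log\rho)$.
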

The key intuition here is that with multiple layers, the model can construct intermediate positional features and use them to align each output position with its corresponding input position.
The detailed setup and proof of this theorem is in \Aref{subsec:ropepositive}. Notice that in our synthetic experiment for binary copy in \Cref{sec:synthetic-experiments}, we set $a=2,b=3$, which is within the scope of our theoretical arguments above.

Putting these two theorems together, we can conclude that: (1)
the inductive bias induced by standard positional encodings, including RoPE, 
are not strong enough to support perfect copying for one-layer Transformers;
(2) Transformers with multiple layers can express the position-based copying algorithm, suggesting that training, rather than expressivity alone, biases Transformers away from position-based algorithms.





\subsubsection{Synthetic Experiments} \label{sec:synthetic-experiments}

\paragraph{Setup.} We further analyze what Transformers learn through synthetic experiments by training on the binary copy task. Here we consider two settings: Transformers with $1$ layer and $2$ attention heads, and Transformers with $12$ layers and $12$ attention heads.
For each setting, we consider $4$ position encodings: RoPE, ALiBi~\citep{presstrain}, NoPE, and RNoPE~\citep{yang2025rope}.
The template of the data are for $a=2,b=3$ as defined at the beginning of \cref{sec:LLM-synthetic-experiment}.

\paragraph{Uniform Data.}
We first train these models on uniform data with lengths from $1$ to $100$. 
While they perform perfectly on uniform data, they perform poorly on other data distributions, such as Imbalanced (see \cref{fig:rope-2distribution}).
This suggests that the models may have learned a context-based algorithm.

\paragraph{Imbalanced Data.}
Instead, when we train these models on imbalanced data for sufficient iterations, they can perform almost perfectly on imbalanced data as well as the other data distributions we construct.
However, all the position encodings mentioned above fail to generalize to longer inputs.
See \Aref{app:synthetic-details} for more details of experimental setups.

\begin{figure}[tb]
    \centering
    \includegraphics[width=\linewidth]{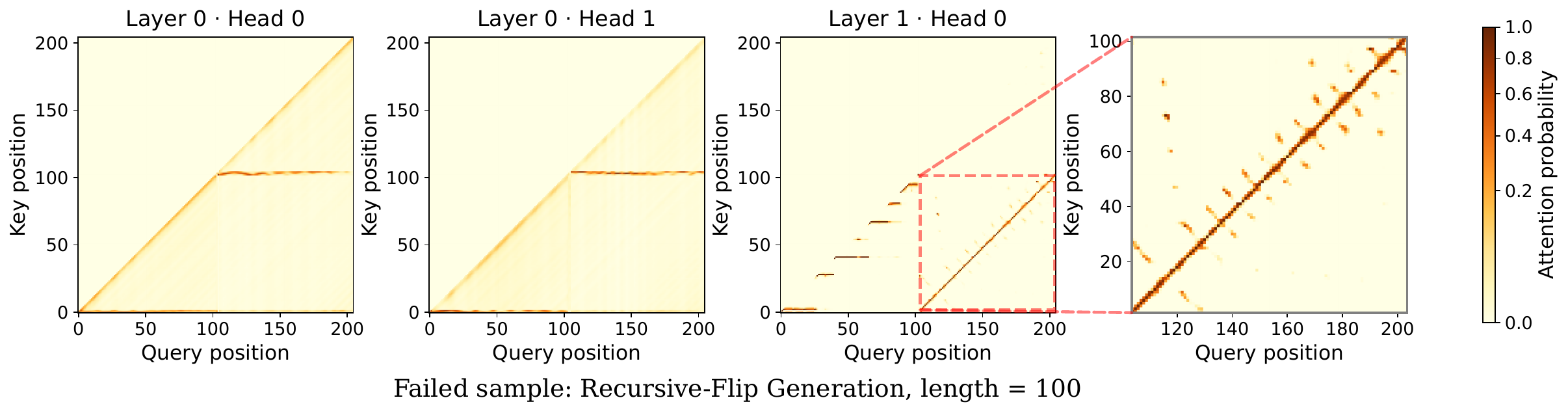}
    \caption{
    Attention map of a RoPE model on a failed sample with sequence length within training length.
    In Layer $0$, attention heads exhibit both \textbf{horizontal patterns} (attending to special tokens before and after the input string) and \textbf{diagonal patterns} (semantic/induction-style behavior).
    In Layer $1$, RoPE would attend to wrong tokens near themselves periodically.
    }
    \label{fig:rope-attn-copy}
\end{figure}

\paragraph{Attention Patterns.} 
We further show that these models use a mixture of position-based and context-based algorithms when copying.
To see this, we analyze the attention patterns of a trained RoPE model on binary copy under the imbalanced distribution. We consider a small model with $2$ heads in the first layer and $1$ head in the second layer, trained on sequences with lengths from $1$ to $100$. \Cref{fig:rope-attn-copy} shows the attention maps of  all layers and heads on representative copy examples.


We first inspect the last layer, since exact copying requires each output position to align with its corresponding input position. However, even for a failed sample with length $100$, which is within the training range, the learned attention is not a clean position-based alignment. As shown in the zoomed attention map, the final-layer head attends not only to the correct diagonal positions, but also periodically assigns large attention scores to nearby wrong tokens. This suggests that the final copy decision is affected by ambiguous local patterns.



In Layer $0$, we observe two types of attention patterns. 
First, some attention scores form \textbf{horizontal stripes} on the special tokens around the boundary between the input and output strings, namely the delimiters $\texttt{<\textbackslash n>}$, $\texttt{<I:>}$, and $\texttt{<O:>}$. A position-based algorithm requires each token to locate itself relative to fixed anchors, and recover a position-related signal from this relative information. Since these delimiters sit at fixed locations in the prompt, attending to them is one natural way to obtain such a signal. The horizontal stripes are consistent with this mechanism: they suggest that the model uses the anchor tokens to recover a position-related signal, which is the behavior we would expect from a position-based algorithm.
Second, Layer $0$ also contains \textbf{diagonal patterns},
which attend to tokens at a fixed relative offset from the query. This is a typical signature of an induction head~\citep{olsson2022context,singhneeds,jelassi2024repeat}: the model locates a previous occurrence of the current local context, such as $y_{k-3}y_{k-2}y_{k-1}$, and copies whatever token follows it. Such a mechanism is context-based, and therefore becomes ambiguous precisely when the input contains repeated substrings, since the same local context may then appear at several positions.



Taken together, the above analysis confirms the conjecture of Section~\ref{sec:LLM-faliure-why-rope} that
the trained RoPE model does not implement a purely position-based copy rule.
Instead, it relies on a mixture of a position-based algorithm, which anchors on
the fixed delimiters $\texttt{<\textbackslash n>}$, $\texttt{<I:>}$, and
$\texttt{<O:>}$ (the horizontal stripes), and a context-based algorithm, which
performs local matching (the diagonal patterns). The first component is tied to
the seen prompt lengths and does not extend reliably to longer inputs, while the
second is confused by repeated local structure. This mixture therefore explains
both failure modes observed in \Cref{fig:api-acc} that copying accuracy degrades
as the input grows longer, and as the degree of repetition increases. We further show that for a failure case where the test length is beyond the training length, the above attention patterns are similarly observed. (\Aref{app:discussion-attn-pattern})

%% file: theoretical_analysis.tex
\section{2D Positional Information Encoding with 2D-RoPE}
\label{sec:main}





Given that current LLMs exhibit inductive bias towards a mixture of position-based and context-based algorithms when copying and can fail, a natural question is whether we can design a positional encoding that explicitly favors the position-based algorithm.
In this section, we propose a novel architecture based on 2D-RoPE~(Section~\ref{sec:2drope}). Then, we describe an adaptive variant of 2D-RoPE~(Section~\ref{sec:auto-2d-rope}). Then, we show the superior performance of 2D-RoPE in copying~(Section~\ref{sec:empirical-results-2drope}). Finally, we present a theoretical analysis of 2D-RoPE~(Section~\ref{sec:theoretical-analysis-2drope}) in expressivity.

\subsection{2D-RoPE Architecture}
\label{sec:2drope}


\paragraph{2D Position ID Generation.}
Recall the diagnosis in ~\Cref{sec:LLM-faliure-why-rope} that to copy perfectly, a RoPE-based model must
retrieve the source token $x_k$ from an offset that grows with the input length
$n$, while standard relative encodings such as RoPE favor fixed-offset retrieval
and thus do not provide a strong enough inductive bias to learn this
length-dependent retrieval reliably. The idea behind 2D-RoPE is to remove this
length dependence by assigning each token a two-dimensional position ID instead of
a single one. As we sketch below and make precise in the rest of this section, the
right choice of coordinates turns the length-dependent retrieval into a
fixed-offset one. More specifically, 2D-RoPE supports retrieval from the same column in the previous row, which is
far easier for a single attention layer to express and to learn. The following paragraphs introduce the construction in detail.


Specifically, the key idea of 2D-RoPE is to assign each token a pair of position IDs instead of a single position ID. We use the line break token $\symn$ as a row separator. Let the 2D position of token $i$ be $(r_i,c_i)$, where $r_i$ is the row index and $c_i$ is the column index within that row. We set the first token position to $(r_1,c_1)=(0,0)$. For each subsequent token $i>1$,
\[
(r_i,c_i)
=
\begin{cases}
(r_{i-1},\, c_{i-1}+1), & \text{if token } i-1 \text{ is not a line break token},\\[3pt]
(r_{i-1}+1,\, 0), & \text{if token } i-1 \text{ is a line break token}.
\end{cases}
\]
Thus, tokens on the same line share the same row index, and their column index records the offset within the line. In a copy task, this representation makes the source token $x_k$ and the target token $y_k$ share the same column index, so copying can be reduced to retrieval from the same column.
This assumes that the input and output are separated by a line break. 
For the case where the separator is not a line break, we propose Auto-2D-RoPE to automatically identify the separator (see~\Aref{app:adaprope}).

\paragraph{From 1D RoPE to 2D-RoPE.}
Standard RoPE~\citep{su2024roformer} applies a positional rotation to the query and key vectors according to the 1D token position. In 2D-RoPE, we apply the same idea separately to the row and column coordinates. Concretely, we split the head dimension into two equal-sized parts: one encodes relative row positions, and the other encodes relative column positions. This gives the model direct access to both row differences and column differences. For copy, the column coordinate is especially useful, because source and target tokens that should be aligned have the same column ID, although their 1D distance depends on the input length.

We now give the formal definition. Let $d$ be the head dimension, divisible by $4$. For an input $\mX\in\R^{T\times d}$, a 2D-RoPE attention layer is defined as
\[
\Attn(\mX)
=
\cS\left(
\MASK\left(
R(\mX\mQ)\bigl(R(\mX\mK)\bigr)^\top
\right)
\right)\mX\mV,
\]
where $\cS$ is the row-wise softmax, $\MASK$ is the causal attention mask, and $\mQ,\mK,\mV$ are trainable projection matrices.

It remains to define the rotation function $R$. For any matrix $\mZ\in\R^{T\times d}$, the $i$-th row $\mZ_i$ is mapped to $\mZ_i \mR_{r_i,c_i}^\top,$
where 
$$\mR_{r_i,c_i} = \diag\bigl(
\mR_1(r_i),\mR_2(r_i),\ldots,\mR_{d/4}(r_i),
\mR_1(c_i),\mR_2(c_i),\ldots,\mR_{d/4}(c_i)
\bigr).$$ 

For the $j$-th block, the rotation is
\[
R_j(t) = \begin{pmatrix} \cos(t\beta_j) & -\sin(t\beta_j) \\
\sin(t\beta_j) & \cos(t\beta_j) \end{pmatrix},
\]
which is the standard RoPE rotation matrix with frequency parameter $\beta_j$.
In our synthetic experiments in \Cref{sec:LLM-synthetic-experiment}, we use $\beta_j=\theta^{-4j/d}$ where $\theta = 100$. In our LLM experiments in \Cref{sec:llm-experiments}, we set $\theta$ to $1000$. For the original RoPE architecture, see \Aref{sec:appendix_prel}.
We also discuss differences from prior vision 2D-RoPE variants in \Aref{app:2drop-imp}.

\subsection{How to Establish 2D Position Encodings in Text} 

Previous vision models have explored applying RoPE to pixels or image patches
using their height and width coordinates, so that attention can encode relative
spatial displacement on the image grid~\citep{jeevan2022resource, heo2024rotary,
chu2024visionllama}. In language modeling, however, the input is typically treated
as a 1D token sequence, and there is no fixed 2D grid analogous to image patches.
The 2D structure must therefore be inferred from the text itself rather than read
off from a layout.

Fortunately, many text formats already carry such a structure implicitly. Lists use
line breaks to separate items, tables use rows and columns to organize aligned
entries, and code uses newlines and indentation to express block structure. These
line breaks are thus not merely formatting tokens, but often signal how different
parts of the text are organized. Motivated by this observation, we use the
line-break token to induce a 2D structure in text: each token receives a row ID,
determined by the number of preceding line breaks, and a column ID, determined by
its offset within the current line. The 2D coordinates are therefore not
externally given as in images, but constructed from the textual structure itself.



\subsection{From Line Break to Adaptive 2D Position: Auto-2D-RoPE}
\label{sec:auto-2d-rope}
However, the above construction also reveals a limitation of 2D-RoPE. When the input contains few or no line breaks, the row coordinate changes rarely, and 2D-RoPE largely reduces to a 1D positional encoding along the column direction. This motivates a more flexible variant that automatically infers 2D positional structure from the input sequence, rather than relying on a fixed separator token. We introduce Auto-2D-RoPE for this purpose. 

At each layer, Auto-2D-RoPE uses the hidden representation of each token to predict two update coefficients through a shared linear projection. These coefficients define a learnable affine update of the token's 2D coordinate, and the final 2D position IDs are obtained by composing these updates along the sequence. In this way, the model can learn when to behave like moving to the next column and when to behave like resetting the column and moving to a new row. The learned coordinates are then used in the same 2D-RoPE rotation as before. Empirical results in \Cref{sec:synthetic-experiments} and \Cref{sec:llm-ablation} show that Auto-2D-RoPE outperforms standard PEs in binary copy and even remains effective when we replace the line-break token $\symn$ with another separator $*$, where rule-based 2D-RoPE can no longer identify row boundaries and largely loses its length-generalization advantage. We provide the full architectural details in \Aref{app:adaprope}.

\subsection{Empirical Results of 2D-RoPE}
\label{sec:empirical-results-2drope}


\paragraph{Results of Binary Copy Test.} In \Cref{fig:api-acc}, the 2D-RoPE model maintains near-perfect accuracy on Recursive-Flip strings, suggesting that explicitly aligning source and target positions provides a more reliable inductive bias for copying than relying on local-context matching alone. 
The model in the figure has $1.4$B parameters and is finetuned on the imbalanced binary copy task.

\paragraph{Results of Python List Conversion Test.} In the right subfigure in \Cref{fig:api-acc}, our fine-tuned 2D-RoPE model in \Cref{sec:llm-experiments} is trained only on lists with $50$ to $150$ data points and periods between $2$ and $8$. Although the training data are easy, the 2D-RoPE model maintains substantially higher accuracy than all tested LLMs. 
This result is consistent with the binary copy experiments: when the input contains repeated local patterns, as in periodic or oscillatory sequences, exact copying is difficult for standard position encoding while 2D-RoPE provides an explicit positional alignment between the source data points and their copied output positions.

\paragraph{Results of Synthetic Experiments on Binary Copy.}
\begin{figure}[tb]
    \centering
    \begin{subfigure}[t]{0.95\linewidth}
        \centering
        \includegraphics[width=\linewidth]{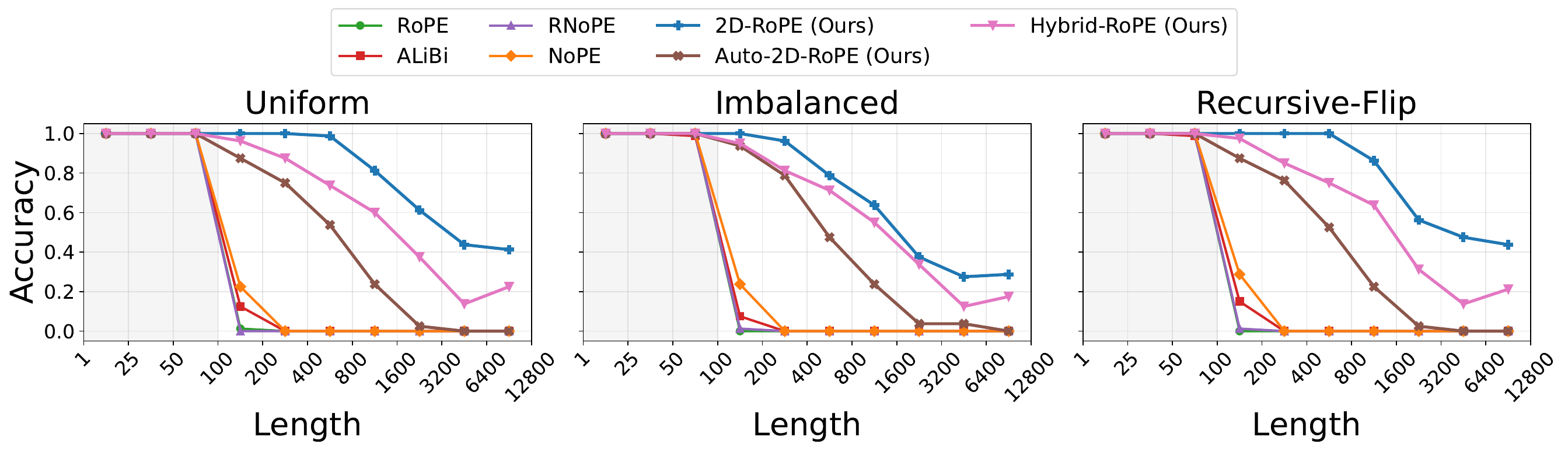}
        \caption{Length generalization results for different positional encodings with 12 layers and 12 heads.}
        \label{fig:scratch-12}
    \end{subfigure}
    
    \vspace{0.5em}
    
    \begin{subfigure}[t]{0.95\linewidth}
        \centering
        \includegraphics[width=\linewidth]{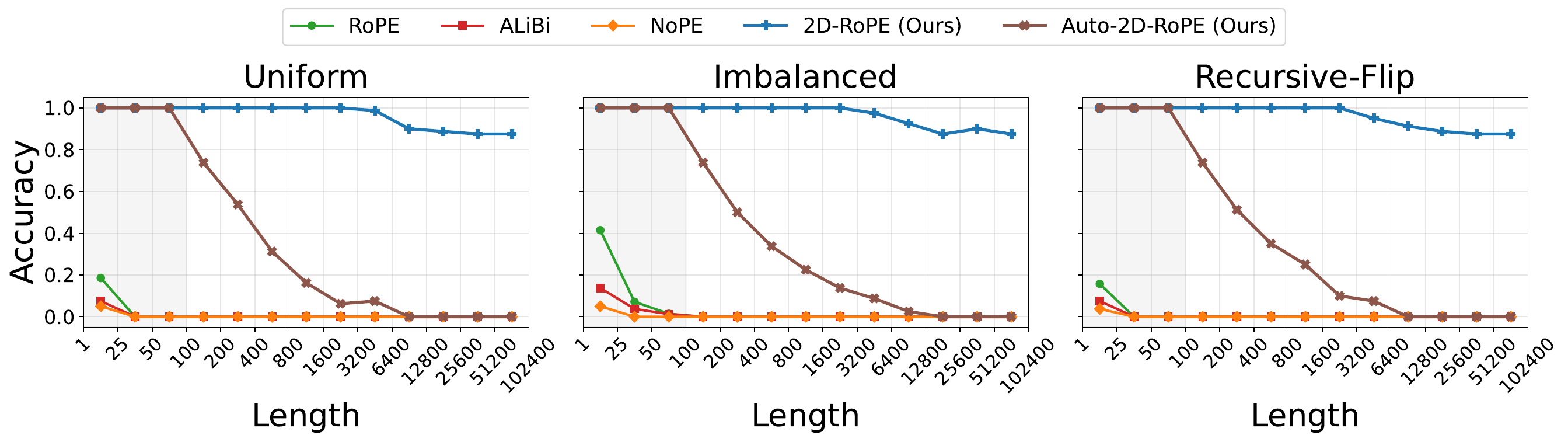}
        \caption{Length generalization results for different positional encodings with 1 layer and 2 heads.}
        \label{fig:copy-LG-from-scratch}
    \end{subfigure}
    
    \caption{\textbf{2D-RoPE models exhibit stronger length generalization for synthetic experiments on
binary copy.} Models are trained only on lengths $1$--$100$ and tested on much
longer inputs. The $x$-axis is the sequence length (log scale) and the $y$-axis is
the copying accuracy; the three columns correspond to the Uniform, Imbalanced, and
Recursive-Flip distributions, and each curve is a different positional encoding.
\textbf{(a)} $12$ layers and $12$ heads; \textbf{(b)} $1$ layer and $2$ heads.}
    \label{fig:length-generalization-combined-full}
\end{figure}

\Cref{fig:length-generalization-combined-full} shows that 2D positional structure provides a strong inductive bias for binary copy.
2D-RoPE-based methods exhibit much stronger length generalization than standard positional encoding.
In the $12$-layer setting, positional encodings with 2D structure substantially improve over standard positional encodings.
The advantage is even clearer in the $1$-layer setting as the 2D-RoPE model maintains perfect accuracy up to more than $1000\times$ its training context length.



\input{2drope_theory_new}

%% file: 2drope_theory_new.tex
\subsection{Theoretical Analysis of 2D-RoPE}
\label{sec:theoretical-analysis-2drope}

The experiments above suggest that 2D-RoPE provides a much stronger inductive bias for copying than standard 1D positional encodings: the source token to be copied and the corresponding target position have fixed relative positions, even though their 1D distance depends on the input length. This complements the diagnosis in \Cref{sec:LLM-faliure-why-rope}. 
In this subsection, we show that 2D-RoPE also has positive theoretical guarantees. We prove that a one-layer 2D-RoPE Transformer can explicitly represent the position-based copy rule with bounded parameters. 
Second, under the $\ell_{\infty}$ regularization-path analysis, the global minima of the one-layer 2D-RoPE in training loss provably learn a solution that generalizes far beyond the training length.

\subsubsection{One-Layer 2D-RoPE Can Represent Binary Copy}

We study the expressive power of one-layer 2D-RoPE. In contrast to the impossibility result for one-layer shift-invariant positional encodings, 2D-RoPE makes the correct source position geometrically simple: when predicting the $i$-th copied token, the query position in the output row and the source token $s_i$ in the input row have the same column coordinate. Thus, a single attention layer does not need to infer the input length or compute a length-dependent 1D offset. It only needs to attend to the token in the previous row with the same column.

The following theorem formalizes this intuition. The parameter norm controls the range of lengths for which the random 2D-RoPE frequencies separate the correct same-column position from all wrong positions. The length bound is polynomial in the norm, and the exponent grows with the RoPE dimension. We still consider format $(\texttt{A}_1,\cdots,\texttt{A}_a,s,\texttt{B}_1,\cdots,\texttt{B}_b)$ with $a,b\geq 1$ as in \Cref{sec:LLM-synthetic-experiment}.

\begin{theorem}
\label{thm:2dropepositive}
Assume that the 2D-RoPE dimension $d\geq 100+4a+4b$ is divisible by $4$. For any $\rho>d^2$, let the 2D-RoPE frequencies $\beta_i\overset{i.i.d.}{\sim}\Unif[0,2\pi]$ for all $i\in[d/2]$. Then, with probability at least $1-\frac{d}{\rho}$ over the randomness of $\vbeta$, there exists a one-layer Transformer with 2D-RoPE whose $\ell_2$ parameter norm $\|\vtheta\|_{2}\leq \rho$ such that correctly copies every binary string $s$ with length $|s|\leq \rho^{d/12}$.
\end{theorem}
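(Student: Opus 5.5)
We will construct the one-layer Transformer explicitly and use the random frequencies only to certify a uniform separation margin. Under 2D-RoPE the sequence $(\texttt{A}_1,\dots,\texttt{A}_a,s,\texttt{B}_1,\dots,\texttt{B}_b,s_{<i})$, with $\texttt{B}_1=\symn$, places the prefix and $s$ in row $0$ at columns $0,\dots,a+n-1$. Here $s_j$ sits at column $a+j-1$. The delimiters $\texttt{B}_2,\dots,\texttt{B}_b$ and the copy sit in row $1$. The query that must emit $s_i$ is the token immediately preceding the output position, at row $1$, column $c=b-2+i$. Its target $s_i$ is at row $0$, column $a+i-1$. Writing $\delta=a-b+1$, the target therefore sits at a fixed relative offset $(\Delta r,\Delta c)=(-1,\delta)$ for every $i$ and every $n$. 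The case $b=1$, where the query is $\symn$ itself in row $0$, is handled by the same offset bookkeeping.

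The plan is to make the attention score at relative offset $(\Delta r,\Delta c)$ equal to
\[
f(\Delta r,\Delta c)=\lambda\sum_{j\le d/4}\bigl[\cos(\beta_j(\Delta r+1))+\cos(\beta'_j(\Delta c-\delta))\bigr]
\]
for a scale $\lambda$. This is done by placing a fixed unit vector in every query and key block, pre-rotating the keys by the offset, and putting all of it in a constant (bias-like) embedding coordinate. The value matrix maps the token $0$ or $1$ to a logit vector. The score is maximal, $\lambda d/2$, exactly at the target and strictly smaller elsewhere. The extra $4a+4b$ dimensions are reserved to make the special tokens $\texttt{A}_\ell,\texttt{B}_\ell$ unattractive: they receive a large negative key bias, or alternatively they get value zero so that they only dilute the attention. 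The norm of this construction is about $\lambda\sqrt d$ plus constants, so we may take $\lambda\approx\rho/\sqrt d$.

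The key probabilistic step is a margin bound. For every wrong offset with $|\Delta r|,|\Delta c|\le N:=\rho^{d/12}$ we need
\[
\sum_j(1-\cos(\beta_j k))\ \ge\ \gamma
\]
for each nonzero integer shift $k$ up to $2N$, and similarly for the column frequencies. For a fixed $k$, $\Pr[1-\cos(\beta k)<\epsilon]\approx\sqrt{\epsilon}$ for uniform $\beta$, because $\beta k\bmod 2\pi$ is again uniform. Independence over the $d/4$ frequencies then gives probability about $\epsilon^{d/8}$ that all terms are small. A union bound over roughly $4N$ shifts yields failure probability at most $N\epsilon^{d/8}$. Choosing $\epsilon=\rho^{-c}$ with the constants tuned against $N=\rho^{d/12}$ drives this below $d/\rho$. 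The resulting margin is $\gamma\ge\epsilon$, or $\lambda\epsilon$ after scaling.

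The final step, and the main obstacle, is that softmax must put nearly all mass on the target, because up to $2N+a+b$ competitors each score at least $\lambda\epsilon$ lower. This needs $\lambda\epsilon\gtrsim\log N\approx (d/12)\log\rho$. We have $\lambda\approx\rho/\sqrt d$, while $\epsilon$ is only polynomially small in $\rho$, so the exponents in the small-ball estimate must be balanced with care. That balancing is exactly where $d\ge100$ and $\rho>d^2$ enter, and where the exponent $d/12$ is set. Once the target weight exceeds $1/2+$const, the output logit for $s_i$ dominates: the value vectors are $\pm$ with the special tokens carrying value zero, so the argmax decoding is correct for all $|s|\le\rho^{d/12}$.
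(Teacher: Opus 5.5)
\paragraph{Main gap: the logit scaling.} Your construction is the paper's: the same query and key vector for every token, so scores depend only on the relative 2D offset; a pre-rotation that cancels the fixed offset; values $\pm$ on the bits and $0$ elsewhere; the bound $\Pr[\cos\theta\ge 1-\epsilon]\le\sqrt{\epsilon}$ raised to the power $d/4$ by independence; and a union bound over column shifts.

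The gap is the balancing step you defer, and it rests on a scaling error. Attention logits are bilinear in $\mQ,\mK$, so they scale quadratically with the parameter norm.
\begin{itemize}
\item With per-coordinate magnitude $m$, the norm is about $m\sqrt{d}$ while your logit scale is $\lambda=2m^2$. So $\lambda\approx\rho^2/d$, not $\rho/\sqrt{d}$.
\item With your $\lambda\approx\rho/\sqrt d$, the balancing cannot be done for $\rho$ anywhere near $d^2$. The union bound over $N=\rho^{d/12}$ shifts forces $\epsilon\lesssim\rho^{-2/3-8/d}$. Then $\lambda\epsilon\lesssim\rho^{1/3}/\sqrt{d}$, which for $d=100$ and $\rho\approx d^2$ is a small constant. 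That is far below the required $\log N\approx (d/12)\log\rho\approx 77$.
\end{itemize}

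Here is how the paper carries out the step. It puts $\vk_{\ttC}=(M,\dots,M)$ and a pre-rotated copy as $\vq_{\ttC}$ on every token, with $M=\rho/d$. The embeddings are one-hot, so there is no bias coordinate; replicating over the token rows gives $\|\vtheta\|_2^2=2(a+b+3)dM^2+4M^2$, and this is what $d\ge 100+4a+4b$ pays for. The score gap is $2M^2\delta$. Taking $\delta=M^{-3/2}$ and $N=M^{d/6}$:
\begin{itemize}
\item the failure probability is $(N+a+b+4)\delta^{d/8}\le 2M^{-d/48}\le 1/M=d/\rho$, which is where $d\ge100$ is used;
\item the margin is $2\sqrt{M}$;
\item $\rho>d^2$ enters only through $(\rho/d)^{d/6}\ge\rho^{d/12}$.
\end{itemize}
Even there, the final softmax condition $3M^{d/6}e^{-2\sqrt{M}}\le 1$ needs roughly $\sqrt{M}\gtrsim (d/12)\log M$. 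The paper only asserts this for ``$M$ sufficiently large''; it does not follow from $\rho>d^2$ alone.

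\paragraph{Bookkeeping issues.} First, an off-by-one. The token after $\symn$ gets column $0$, so $\texttt{B}_b$ sits at column $b-2$ and the last token of $\Str(s,i)$ sits at column $b+i-3$. Your $b+i-2$ is the column of the output slot, not of the token preceding it. The correct offset is $(a-b+2,-1)$, consistent with the paper's remark that $b=a+2$ gives same-column alignment. Pre-rotating by $a-b+1$ would make the model retrieve $s_{i-1}$. The paper's $\mR_{a+1-b,-1}$ has the same slip, so this is cosmetic.

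Second, and more seriously, $b=1$ is not handled by the same bookkeeping. For $i=1$ the query is $\symn$ itself, in row $0$ at column $a+n$, and $s_1$ lies in the same row at column offset $-n$. That offset depends on the length, so no fixed pre-rotation targets it, and your position-only attention cannot single out $s_1$ for all $n$. The paper's proof silently assumes row difference $-1$ for every query. So both arguments actually require $b\ge 2$.
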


The construction behind \Cref{thm:2dropepositive} is purely position-based and
comes from one simple geometric fact: under the 2D coordinates, the source token
$s_i$ to be copied always sits in the previous row at a fixed relative column, no
matter how long the string is. To leverage this offset, we make the query and key
vectors the same for every token, so that each attention score depends only on
the relative 2D coordinate of two positions, not on the tokens themselves.
Attention is thus based on position alone, with no context matching. We then
rotate the query to cancel the offset, so that the correct source has zero
relative rotation, which gives it the highest attention score. Every wrong
position differs from it in either column or row, and so has a nonzero relative
rotation. By sampling the RoPE frequencies at random, we ensure that with high
probability, none of these rotations up to the target length happens to sum back
to the maximum. This leaves a score gap that puts more than half of the attention
mass on $s_i$. Finally, only $0$ and $1$ have nonzero value vectors, and they
point in opposite directions, while all other tokens have zero value. So once
most of the attention is on $s_i$, the output logits predict $s_i$ correctly.

The detailed theoretical setup of this subsection is in \Aref{subsec:setup}. A more detailed version of \Cref{thm:2dropepositive} and its proof are in \Aref{subsec:2dropeexpress}. 

\subsubsection{One-Layer 2D-RoPE Provably Learns Binary Copy with Length Generalization}\label{sec:margin-main}
Beyond the expressivity results, we theoretically show that one-layer 2D-RoPE can learn binary copy with length generalization. For technical convenience, we consider the setting in which $b=a+2$ and $\texttt{B}_1=\symn$ (remind that the input is $(\texttt{A}_1,\cdots,\texttt{A}_a,s,\texttt{B}_1,\cdots,\texttt{B}_b)$ and Transformer is asked to output string $s$). Under this assumption, when the model generates a token, the corresponding target token has the same column index as the current position, which simplifies the theoretical analysis.

We study a global minimizer of the population loss for training length $L$ under an $\ell_\infty$-norm constraint:
\begin{align*}
    \vtheta_{M,L}^*
    \in
    \arg \min_{\vtheta:\|\vtheta\|_{\infty}\leq M}
    \cL_L(\vtheta),
\end{align*}
where $\vtheta$ denotes the learnable parameter of 2D-RoPE, as expanded and explained in \Aref{subsec:setup}.

Previous work has analyzed how solutions evolve as the norm constraint $M$ increases in the limit $M\to\infty$ under various settings. This is a perspective often referred to as the regularization path, including homogeneous neural networks~\citep{wei2019regularization} and simplified attention models~\citep{ataee2023max}. In contrast, our analysis establishes that, for every sufficiently large but finite $M$, a global minimizer of the 2D-RoPE population loss exhibits strong length generalization. 

Previous analyses of regularization paths have primarily focused on $\ell_2$-norm regularization~\citep{wei2019regularization,ataee2023max}. We instead consider an $\ell_\infty$ constraint, motivated by recent results on the implicit bias of Adam and AdamW~\citep{xie2024implicit,fan2025implicit,zhang2024implicit}, as all experiments in this paper use AdamW. In particular, \citet{xie2024implicit} showed that AdamW can converge to KKT points of an $\ell_\infty$-constrained optimization problem, while \citet{zhang2024implicit,fan2025implicit} showed that, for linear models on separable data, Adam converges to max-margin solutions with respect to the $\ell_\infty$ norm. Similar $\ell_\infty$ regularization paths have also appeared in theoretical analyses of grokking and algorithm learning~\citep{mohamadi2024you}. Thus, the $\ell_\infty$ constraint is not only technically convenient but also aligned with the implicit bias of the optimizer used in our experiments.

We now state the main theorem of this subsection, which guarantees length generalization for $\vtheta_{M,L}^*$ whenever $M$ is sufficiently large.

\begin{theorem}
\label{thm:margin}
Assume that the 2D-RoPE frequency $\beta_i\overset{i.i.d.}{\sim}\Unif[0,2\pi]$ for $\forall i \in[d/2]$.
Under \Cref{ass:prob,ass:Lbound,ass:zerovalue}, with probability at least $1-\frac{3}{d}$ over the randomness of $\beta$, there exists $M_0>0$ such that for all $M>M_0$, the one-layer 2D-RoPE Transformer with parameter $\vtheta_{L,M}^*$ can correctly copy all binary strings with length at most $L^{\sqrt{d}/2}$.
\end{theorem}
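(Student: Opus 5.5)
The plan is to analyze the $\ell_\infty$-constrained population minimizer through a margin argument, in the spirit of regularization-path results. I assume the assumptions referenced in \Cref{thm:margin} say the following. \Cref{ass:prob} gives every binary string of length at most $L$ positive probability, in particular strings with arbitrary repetition patterns. \Cref{ass:Lbound} lower-bounds $L$ relative to $d$. \Cref{ass:zerovalue} says non-binary tokens have zero value vectors, so the output logit difference at a query is an attention-weighted sum of $\pm v$ over binary positions.

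\textbf{Step 1: reduce to positional attention scores.} Because $b=a+2$ and $\texttt{B}_1=\symn$, the target token for query position $(1,c)$ is the input token at $(0,c)$. I would first show that, at any global minimizer, the part of the score that depends on token identity is asymptotically irrelevant. Suppose the score depended on content. By \Cref{ass:prob}, pairs of strings that agree locally but differ at the target column would then force errors or loss strictly above the infimum. So the score reduces to a function $f(\Delta r,\Delta c)=\sum_j \langle \cdot\rangle\cos(\Delta\beta_j+\phi_j)$ of relative 2D coordinates, whose amplitudes are bounded by $M$ in $\ell_\infty$.

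\textbf{Step 2: characterize the limiting margin problem.} As $M\to\infty$, the loss $\cL_L$ tends to its infimum only if, for every query with training length at most $L$, the score at offset $(1,0)$ exceeds the scores at all other reachable binary offsets by a gap growing linearly in $M$. The reachable offsets are $(0,\Delta c)$ with $\Delta c<0$ and $(1,\Delta c)$ with $\Delta c\ne 0$. I would take $M_0$ large enough that $\vtheta^*_{M,L}$ attains, up to $o(1)$ per unit of $M$, the maximal normalized $\ell_\infty$ margin $\gamma_L$ over offsets with $|\Delta c|\le L$. The proof of \Cref{thm:2dropepositive} supplies a feasible point with a positive margin, which lower-bounds $\gamma_L$.

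\textbf{Step 3: extrapolate from the training window to length $L^{\sqrt d/2}$.} Here I need to show that any near-maximal-margin trigonometric polynomial $f$ must satisfy $f(1,0)-f(1,\Delta c)>\log(\text{length})$-type slack for all $|\Delta c|\le L^{\sqrt d/2}$, and not merely for $|\Delta c|\le L$. The approach is to argue that the $\ell_\infty$ geometry forces the optimal amplitudes toward the fully aligned configuration, with all phases zero and all coefficients saturated at $\pm M$. Then $f(1,0)-f(1,\Delta c)=M\sum_j(1-\cos(\Delta c\,\beta_j))$ up to a small perturbation. With random $\beta$, a union bound plus anti-concentration of $\sum_j(1-\cos(k\beta_j))$ over $k\le L^{\sqrt d/2}$ shows that this sum stays bounded below with probability at least $1-3/d$. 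This mirrors the frequency argument behind \Cref{thm:2dropepositive}. To control the softmax mass over up to $L^{\sqrt d/2}$ wrong positions, I need the gap times $M$ to exceed $\log$ of the length, which holds once $M>M_0$.

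\textbf{Main obstacle.} I expect Step 3 to be the hardest: proving that the margin-optimal solution on the training window is close enough to the aligned solution. The worry is that $f$ could in principle exploit the finite window to fit non-aligned phases that fail just outside it. My first attempt would be a perturbation and strict-optimality argument. With random $\beta$, the constraint vectors $(\cos k\beta_j,\sin k\beta_j)_j$ for $k\le L$ span enough directions that any phase deviation of size $\epsilon$ loses margin of order $\epsilon^2$. Such deviations therefore have size $o(1)$, and the $\sqrt d$ exponent in $L^{\sqrt d/2}$ absorbs the resulting slack.
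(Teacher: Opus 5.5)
\textbf{Verdict: genuine gap in Step 3.} Steps 1--2 broadly match the paper's reduction: it too compares a loss upper bound at an explicit feasible point with a loss lower bound when the margin-like quantity $f_L$ is deficient. It also proves $V_0,V_1>M-1$, which you need as well because the two value columns are independent. Note too that attention scores scale as $M^2$, not $M$.

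The problem is Step 3, which carries the whole length-generalization claim: the mechanism you propose for it is false. The aligned configuration is generically not even a local maximizer of the training-window margin. In one column block, keep $\vq=(M,M)$ and move $\vk$ from $(M,M)$ to $(M,M-\eta)$, respectively $(M-\eta,M)$. That block's contribution to the score gap between column offset $0$ and offset $t$ changes by $-M\eta\bigl(1-\cos(t\beta_j)\pm\sin(t\beta_j)\bigr)$. One of the two signs gives a first-order gain whenever $t\beta_j$ lies within $\pi/2$ of $0$ modulo $2\pi$. Some block satisfies this at the binding training offset $t^*$, because $t^*$ maximizes $\sum_j\cos(t^*\beta_j)$; since $t^*$ is generically unique, the minimum margin strictly increases. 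So there is no $\epsilon^2$-type strict optimality. Near-optimal solutions genuinely fit the finite window with non-aligned phases, and their misalignment need not be $o(1)$; the natural control is only through the margin slack, of order $M^2\sqrt{d\log L}$. Step 1 is likewise only asserted: content-dependent keys such as $\vk_0\neq\vk_1$ are not ruled out at the minimizer. The paper never needs that reduction.

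\textbf{Missing idea: you never need to characterize the minimizer.} The paper uses only $f_L(\vtheta^*)>M^2d/2-3M^2\sqrt{d\log L}$. This bound comes from comparing with the aligned construction, whose margin is within $2M^2\sqrt{d\log(L+a)}$ of $M^2d/2$. The merely positive margin you invoke in Step 2 would not suffice. The paper then argues as follows.
\begin{enumerate}
\item Given a wrong token at an arbitrary long-range offset, it builds a length-$L$ training string with the same query token. The correct token appears only at offsets where it also occurs in the long string, and $L-2$ wrong tokens sit in the wrong token's row.
\item Averaging the margin inequality over those $L-2$ positions cancels their column contributions up to $O(M^2d^3/L)$, via Dirichlet-kernel bounds. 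Hence the correct score exceeds the wrong token's row contribution by nearly $M^2d/2$.
\item Separately, for any $\vq,\vk$ in the $\ell_\infty$ box, a block's column contribution at offset $t$ is at most $2M^2\max(|\cos t\beta_j|,|\sin t\beta_j|)$. This quantity has mean $2\sqrt2/\pi<1$ in units of $2M^2$.
\item Hoeffding plus a union bound over offsets up to $L^{\sqrt d/2}$ caps the wrong token's column score at about $\frac{\sqrt2}{\pi}M^2d$. This leaves a gap of at least $0.01M^2d$ with no structural information about $\vtheta^*$.
\end{enumerate}
That parameter-free cap is where the $\ell_\infty$ geometry and the exponent $\sqrt d/2$ enter. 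Your route could plausibly be repaired, but only by first running this averaging step to show the offset-$0$ column score is within the slack of $M^2d/2$. You would then need a linear, not quadratic, bound of the misalignment in terms of that deficiency.
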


The assumptions and proof are in \Aref{app:global-min-analysis}. The theorem shows that, under a sufficiently large $\ell_\infty$ norm constraint, the resulting global minimizers do not merely fit strings within the training-length range. Instead, they learn a copying algorithm that correctly generalizes to binary strings far longer than those seen during training.

\paragraph{Proof Sketch.}
The main idea is to introduce the function
\[
f_L(\mQ,\mK,\mV)
:=
\min_{s,i,|s|\leq L}
\left\{
g_{s,i}^+(\mQ,\mK,\mV)
-
g_{s,i}^-(\mQ,\mK,\mV)
\right\},
\]
which acts as a surrogate for studying the margin between the attention logits assigned to correct and incorrect tokens. We first show that a small value of $f_L(\mQ,\mK,\mV)$ leads to a lower bound on the population loss. We then construct a feasible parameter configuration with a large value of $f_L(\mQ,\mK,\mV)$ and a small population loss. Comparing these two bounds shows that every constrained global minimizer must have a sufficiently large value of $f_L(\mQ,\mK,\mV)$ when $M$ is sufficiently large. Finally, concentration properties of the random 2D-RoPE frequencies allow us to convert this lower bound on $f_L(\mQ,\mK,\mV)$ into correct copying at lengths far beyond the training range; see detailed intuitions and proof in \Aref{app:global-min-analysis}.

%% file: experiments.tex
\section{Experiments on LLM Pretraining}\label{sec:llm-experiments}

\begin{table}[t]
    \footnotesize
    \centering
    \caption{Large-scale pretraining results of 2D-RoPE, H-RoPE, and RoPE. Each model is pretrained on DCLM, and then finetuned on binary copy with imbalanced distribution.}
    \setlength{\tabcolsep}{4pt}
    \begin{tabular}{lcc|c|cccc|cccc}
        \toprule
        & & PT & Common & \multicolumn{4}{c|}{Copy (Imbalanced)} & \multicolumn{4}{c}{Copy (Recursive-Flip)} \\
        Model & Param & Data & sense & 1K & 2K & 4K & 8K & 1K & 2K & 4K & 8K \\
        \midrule
        RoPE & 350M & 7B 
            & \textbf{41.4} 
            & 97.2 & 14.8 & 0.0 & 0.0
            & 14.4 & 7.8 & 0.0 & 0.0 \\
        H-RoPE (ours) & 350M & 7B
            & 41.2 
            & \textbf{99.6} & 57.8 & 0.0 & 0.0 
            & \textbf{96.9} & \textbf{92.2} & 35.1 & 0.0 \\
        2D-RoPE (ours) & 350M & 7B
            & 40.4 
            & 97.4 & \textbf{69.2} & \textbf{33.4} & \textbf{2.4} 
            & \textbf{96.9} & 89.3 & \textbf{56.4} & \textbf{36.4} \\
        \midrule
        RoPE & 730M & 15B
            & 44.0 
            & 97.0 & 11.0 & 0.0 & 0.0
            & 15.5 & 7.8 & 1.1 & 0.0 \\
        H-RoPE (ours) & 730M & 15B
            & \textbf{45.5} 
            & 95.4 & 11.4 & 0.0 & 0.0 
            & 14.4 & 5.8 & 0.0 & 0.0 \\
        2D-RoPE (ours) & 730M & 15B
            & 45.2 
            & \textbf{99.4} & \textbf{74.6} & \textbf{12.8} & 0.0 
            & \textbf{92.8} & \textbf{76.7} & \textbf{45.7} & \textbf{10.9} \\
        \midrule
        RoPE & 1.4B & 28B 
            & 45.9 
            & 99.6 & 26.8 & 0.0 & 0.0
            & 21.6 & 8.7 & 0.0 & 0.0 \\
        H-RoPE (ours) & 1.4B & 28B 
            & \textbf{47.2} 
            & \textbf{100.0} & 73.6 & 3.6 & 0.0 
            & \textbf{100.0} & \textbf{98.1} & 55.3 & 5.5 \\
        2D-RoPE (ours) & 1.4B & 28B
            & 46.8 
            & \textbf{100.0} & \textbf{100.0} & \textbf{92.0} & \textbf{61.8} 
            & \textbf{100.0} & \textbf{98.1} & \textbf{92.6} & \textbf{87.3} \\
        \midrule
        RoPE & 730M & 100B 
            & 46.0 
            & 99.6 & 29.2 & 0.0 & 0.0
            & 24.7 & 8.7 & 1.1 & 1.8 \\
        H-RoPE (ours) & 730M  & 100B 
            & \textbf{46.2} 
            & 99.2 & 52.1 & 0.0 & 0.0
            & \textbf{100.0} & 98.1 & 28.7 & 1.8\\
        2D-RoPE (ours) & 730M & 100B  
            & \textbf{46.2} & \textbf{100.0} & \textbf{96.3} & \textbf{72.8} & \textbf{15.8}
            & \textbf{100.0} & \textbf{100.0} & \textbf{100.0} & \textbf{100.0}\\
        \bottomrule
    \end{tabular}
    \label{tab:llm-results}
\end{table}

To validate the effectiveness of 2D-RoPE in empirical language modeling, we first pretrain Transformer language models from scratch and compare 2D-RoPE against RoPE. Then, we finetune our different models on the synthetic binary copy dataset and the Python List Conversion datasets. 

\paragraph{Model Configuration.}

We experiment with the Qwen3~\citep{yang2025qwen3} architecture, which is one of the most widely used open-source LLMs. To ensure fair comparison, we keep everything unchanged and only replace RoPE with 2D-RoPE. We experiment with different models sizes (350M, 730M, and 1.4B parameters). See \Aref{app:llm-model-config-details} for more details.


\paragraph{Pretraining Configuration.}

The models are pretrained with the DCLM corpus~\citep{dclm} with 2K context length using the typical optimizer, LR scheduler, and hyperparameters (see \Aref{app:llm-training-config-details} for more details). We use a data-to-parameter ratio of 20~(i.e., Chinchilla law~\citep{chinchilla}). Furthermore, we experiment with an overtrained setting that is common in state-of-the-art LLMs, where a 730M model is pretrained on 100B tokens.

\paragraph{Finetuning Configuration.}
Each model is finetuned on the Imbalanced binary copy task~(described in Section~\ref{sec:LLM-faliure-why-rope}), using 200K examples and 2K context length~(1K maximum input/output tokens).
For each model, we sweep learning rate~(LR) values of $\{3\times 10^{-5},5\times 10^{-5},1\times 10^{-4}\}$ and arrive at $5\times 10^{-5}$ because it has best overall performance.
See \Aref{app:lr-ablation} for more details.

\paragraph{Hybrid RoPE.} 
Prior works have achieved performance gains in various settings by combining RoPE with other position encoding methods~\citep{nape,swan-gpt,fox}. We also explore whether 2D-RoPE can be combined with RoPE to balance the advantages of both methods. To this end, we construct a Hybrid-RoPE~(H-RoPE) model between RoPE and 2D-RoPE, in which RoPE and 2D-RoPE are used alternately across Transformer blocks. 
\subsection{LLM Evaluation Details}\label{sec:LLM-ablation}

We evaluate the models on the following benchmarks. 
\paragraph{Common-Sense Reasoning (CSR).}
To evaluate the common-sense reasoning abilities of LLMs, we use the average accuracy in the following set of tasks: ARC-easy, ARC-challenge~\citep{arc}, HellaSwag~\citep{hellaswag}, WinoGrande~\citep{winogrande}, MMLU~\citep{mmlu}, LAMBADA~\citep{lambada}, PIQA~\citep{piqa}, and BoolQ~\citep{boolq}. We use LM-Evaluation-Harness~\citep{eval-harness} for evaluation.




\paragraph{Binary Copy.} 
We evaluate on the Imbalanced and Recursive-Flip binary copy tasks as described in Section~\ref{sec:copy-benchmarks}. The first task is an in-domain copy task~(the models are finetuned on it), and the second task is an out-of-domain task that tests the transferability of our method on other copying tasks. 



\subsection{LLM Results}


As shown in Table~\ref{tab:llm-results}, LLMs with 2D-RoPE have superior length generalization in the binary copy tasks, in both in-domain~(Imbalanced) and out-of-domain~(Recursive-Flip) settings. This result shows that 2D-RoPE can exhibit its advantage in copying in practical large-scale LLM training. This advantage holds for both standard chinchilla $\times20$ training and the overtraining setting ($730$M model with $100$B DCLM data). 
Finally, models with 2D-RoPE outperform RoPE models in CSR except for the smallest-scale model, implying that learning copying does not result in degraded general language modeling capabilities at scale. 

\subsection{Pretraining Loss}
\label{app:pretraining-loss}

\begin{figure}
    \centering
    \includegraphics[width=0.95\linewidth]{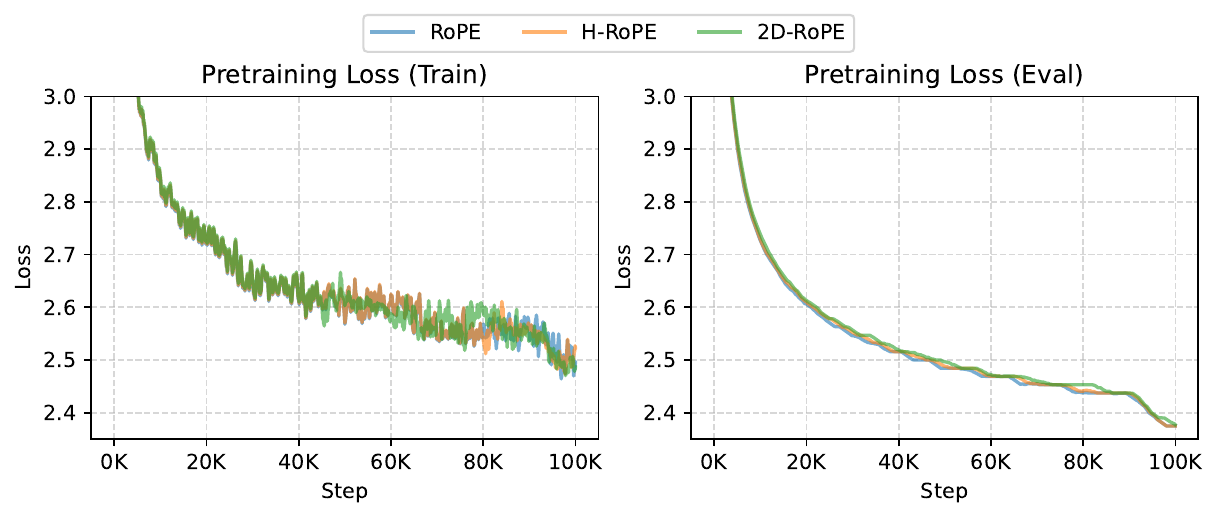}
    \caption{The training loss and evaluation loss during pretraining of the 2D-RoPE, H-RoPE, and RoPE models with 730M parameters (see Section~\ref{sec:llm-experiments}). These models were trained on 100B tokens from DCLM, and evaluated on FineWeb-Edu.}
    \label{fig:pretraining-loss}
\end{figure}

\begin{table}[tb]
    \centering
    \small
    \caption{The training and validation loss during pretraining of the three overtrained models in Section~\ref{sec:llm-experiments}. The training data is DCLM and the validation data is FineWeb-Edu.}
    \begin{tabular}{l|cc}
        \toprule
        Model & Training Loss & Validation Loss \\
        \midrule
        RoPE    & 2.5080 & \textbf{2.3808} \\
        H-RoPE  & 2.5000 & \textbf{2.3808} \\
        2D-RoPE & \textbf{2.4943} & 2.3876 \\ 
        \bottomrule
    \end{tabular}
    \label{tab:llm-pretraining-loss}
\end{table}

Here, we report the training and validation loss during pretraining (in Section~\ref{sec:llm-experiments}). We compute validation loss on FineWeb-Edu, which is another widely used pretraining corpus. The results are shown in Figure~\ref{fig:pretraining-loss} and Table~\ref{tab:llm-pretraining-loss}. One can see that the three models have roughly the same training and validation loss, which indicates that the performance of 2D-RoPE on next token prediction is comparable to RoPE.




\subsection{Additional Experiments}\label{sec:llm-ablation}
\begin{figure}[tb]
    \centering
        \begin{subfigure}[t]{0.45\linewidth}
        \centering
        \includegraphics[width=\linewidth]{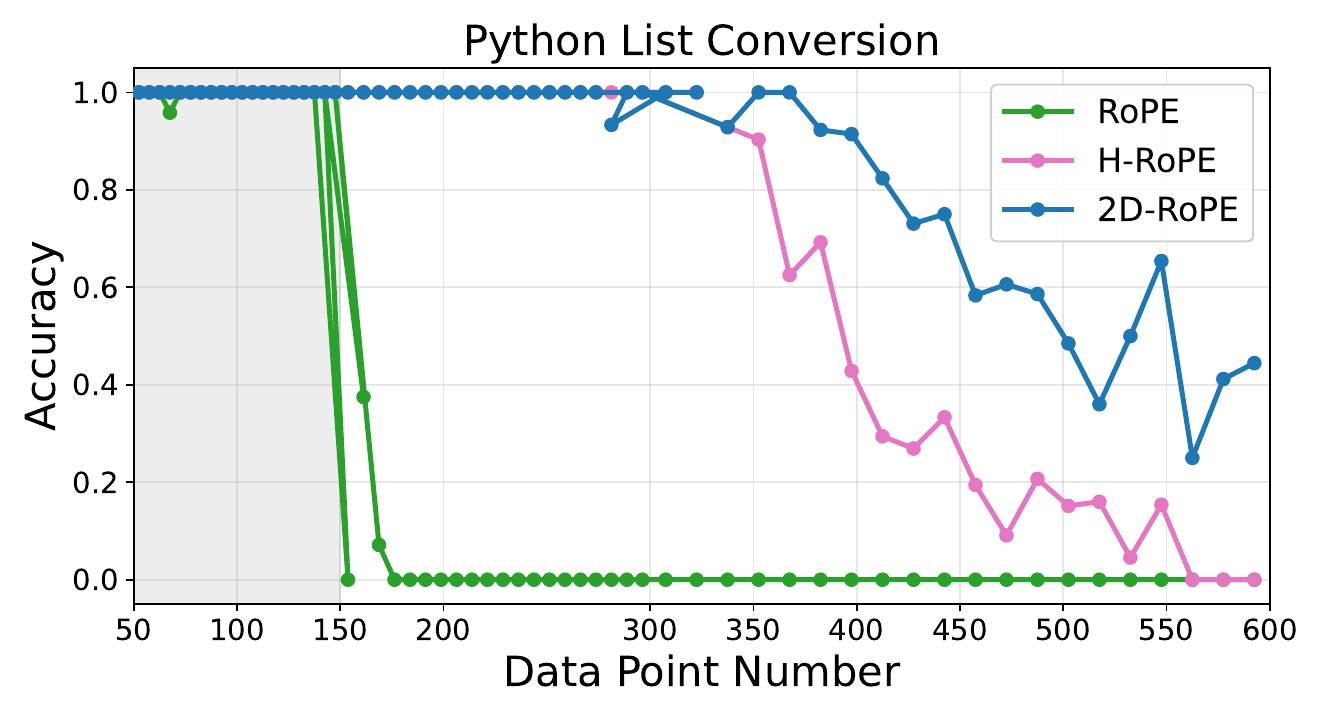}
        \label{fig:python-acc}
    \end{subfigure}
       \hfill
    \begin{subfigure}[t]{0.45\linewidth}
        \centering
        \includegraphics[width=\linewidth]{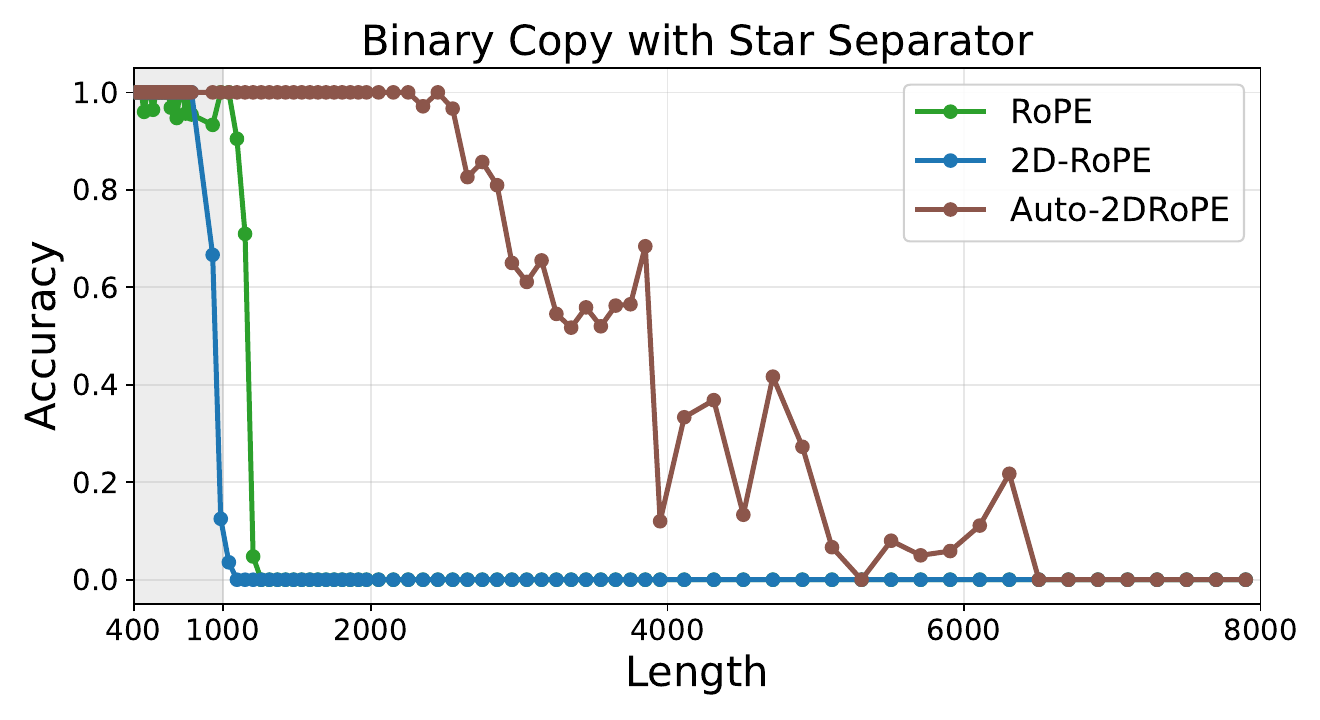}
        \label{fig:newline-acc}
    \end{subfigure}
    \caption{Fine-tuning results on copy-related tasks using $350$M models pretrained on $7$B tokens from DCLM. The gray region indicates the training context length range during finetuning.}
    \label{fig:copy-task-acc}
\end{figure}


\paragraph{Python List Conversion Task.} 
To demonstrate the practical importance of the ability to copy, we finetune the 350M model on the Python List Conversion task~(described in Section~\ref{sec:copy-benchmarks}).
As shown in the left subfigure in \Cref{fig:copy-task-acc}, after finetuning, 2D-RoPE achieves perfect generalization to sequences up to $3$ times longer than those seen during training. In comparison, H-RoPE also generalizes to about three times the training length, but underperforms 2D-RoPE. Meanwhile, RoPE shows limited length generalization. 


\paragraph{Auto 2D-RoPE: Beyond the Line-Break Separator.}
2D-RoPE uses line-break token $\symn$ to construct 2D positional coordinates, which means that it almost reduces to RoPE when the input text contains no, or very few, line breaks. To address this concern, we give an alternative called Auto 2D-RoPE as mentioned in \Cref{sec:auto-2d-rope}. Unlike 2D-RoPE, Auto 2D-RoPE does not rely on an explicitly specified line-break token to define rows and columns. Instead, it learns to construct 2D position IDs directly from the input sequence.

Concretely, Auto 2D-RoPE represents the 2D position ID of each token through the cumulative product of a learnable 2D transformation matrix along the sequence. Intuitively, this allows the model to automatically decide how positional information should evolve as it scans the input, rather than using a fixed rule such as ``increase the row index after every newline and reset the column index.'' Similar cumulative-product constructions have appeared in prior attention designs~\citep{tan2025scaling,yang2026path}, and the idea of making position IDs learnable has also been explored in recent works~\citep{li2025repo,lahoti2026mamba}. We provide the full architectural details of Auto 2D-RoPE in \Aref{app:adaprope}.

Empirically, Auto 2D-RoPE achieves strong length generalization on copy tasks where we replace the newline token $\symn$ with another character $*$. In this setting, the original 2D-RoPE can no longer identify row boundaries and effectively degenerates to standard RoPE, showing little or no length generalization. In contrast, Auto 2D-RoPE still learns a useful 2D positional structure and generalizes to longer sequences as shown in \Cref{fig:copy-task-acc}.

%% file: conclusion.tex
\section{Conclusion}
In this paper, we reveal a surprising fact that current LLMs still fail on copy. Motivated by this observation, we introduce 2D-RoPE to solve this problem. We further introduce Auto-2D-RoPE to reduce the reliance on explicit line breaks. Overall, our results show that the way positional information is represented can strongly affect even the most basic algorithmic behaviors of language models. We hope this work encourages further exploration of positional encodings that better capture the latent structure of text and support more reliable length generalization.

Several directions remain open for future study. (i) A thorough theoretical understanding of why multi-layer RoPE-based transformers fail to copy is lacking. (ii) How to generalize and scale up Auto-2D-RoPE to a more practical LLM training regime may be of significant value. (iii) Also, a further theoretical understanding of why 2D-RoPE can learn binary copy in the training process is underexplored.

\section{Acknowledgement}
We would like to thank Zixuan Wang, Shaowen Wang, and Tingqiang Xu for their kind feedback and insightful comments. We would also like to give special thanks to Hongxun Wu for initial contributions to this project.

%% file: appendix.tex
\input{appendix-api-details}

\input{appendix-synthetic-exp-detail}

\input{appendix-attn-pattern}

\input{appendix-implementation-2D-RoPE}

\input{appendix-details-llm-experiments}

\input{appendix-adap-2drope}

\input{appendix_prel}

\input{appendix-RoPE-theory}
\input{appendix-2dRoPE-theory}

\input{margin}

\input{appendix-prompt-templates}

%% file: appendix-api-details.tex
\section{Test Details of the LLMs in Binary Copy and Python List Conversion}\label{app:api-details}

We give the details of the closed-model Copy benchmarks reported in \Cref{fig:api-acc} and \Cref{fig:repeat-acc}. All prompt templates are provided in \Aref{appendix:prompts}. All closed-model API evaluations were run between June 11 and June 14, 2026. Unless stated otherwise, each sample was attempted at most three times in total, including the initial call. In the final evaluation logs used for the reported numbers, every requested sample received a valid API response within the allowed retry budget. No sample was discarded or replaced due to an API failure; hence all reported failures in the accuracy tables correspond to incorrect model outputs rather than API failures.

\subsection{Frontier LLMs Copying Test}
Here we give details for \Cref{fig:api-acc}.

\paragraph{Models and API endpoints.}
We evaluate five closed frontier models: \texttt{gpt-5.5}, \texttt{claude-opus-4-7}, \texttt{gemini-3.1-pro-preview}, \texttt{deepseek-v4-pro}, and \texttt{qwen3.6-plus}. We use the OpenAI Responses API for \texttt{gpt-5.5}, the Anthropic Messages API for \texttt{claude-opus-4-7}, Gemini Vertex Express for \texttt{gemini-3.1-pro-preview}, and OpenAI-compatible chat-completion APIs for \texttt{deepseek-v4-pro} and \texttt{qwen3.6-plus}. We do not set explicit stop sequences. We use temperature \(0\) where supported; for \texttt{gpt-5.5}, we do not pass an explicit temperature parameter and use the provider default. For Gemini, we set the thinking level to \texttt{LOW}; for DeepSeek and Qwen, we disable thinking mode.

\paragraph{Recursive-Flip Binary Copy.}
For the first binary-copy evaluation, we test the Recursive-Flip distribution defined in \cref{sec:copy-benchmarks}. We use \(50\) samples for each \(k\in\{7,8,9,10,11\}\), with global random seed \(20260414\). For each \(k\), the logical target length \(n\) satisfies
\[
    2^k \leq n \leq 2^{k+1}-1.
\]
Here, length refers to the number of logical binary units. For each sample, the length is uniformly and randomly selected from this region. To align tokenization across models, we use model-dependent encodings: for \texttt{gpt-5.5}, \texttt{claude-opus-4-7}, and \texttt{deepseek-v4-pro}, $\cV=\{\texttt{000}, \texttt{111}\}$; for \texttt{gemini-3.1-pro-preview} and \texttt{qwen3.6-plus}, $\cV=\{\texttt{0},\texttt{1}\}$. This ensures that each binary unit is one token for the tested model.

\paragraph{Imbalanced Binary Copy.}
For the second binary-copy evaluation, we use an imbalanced distribution with $p$ fixed to be \(0.95\) (see the original definition of imbalanced data in \Cref{sec:copy-benchmarks}). Here let $\cV=\{\texttt{" a"},\texttt{" b"}\}$, so that
\[
    \Pr[\mathtt{a}]=0.95,
    \qquad
    \Pr[\mathtt{b}]=0.05.
\]
Both symbols are one token in all five tested models. Thus, the model is asked to copy a space-separated symbolic sequence rather than a bare binary string. For output normalization, we extract all letters matching \texttt{[abAB]}, map \texttt{A} to \texttt{a}, and map \texttt{B} to \texttt{b}. Exact match compares this extracted and mapped string with the target.

\paragraph{Binary-copy API configuration.}
For Recursive-Flip binary copy, the models are evaluated concurrently with one model-level worker per model. Within each model, the maximum number of concurrent sample requests is \(10\) for \texttt{gpt-5.5}, \texttt{claude-opus-4-7}, \texttt{deepseek-v4-pro}, and \texttt{qwen3.6-plus}, and \(5\) for \texttt{gemini-3.1-pro-preview}. The request timeout is \(2000\) seconds, and the sleep times before the second and third attempts are \(2\) and \(4\) seconds, respectively. For OpenAI-compatible clients, we set \(\texttt{max\_retries}=0\) so that retries are controlled by our explicit retry loop. The output-token budgets for Recursive-Flip binary copy are
\[
\begin{array}{|c|ccccc|}
\hline
\text{Model} & \texttt{gpt} & \texttt{claude} & \texttt{gemini} & \texttt{deepseek} & \texttt{qwen} \\
\hline
\text{Max output tokens} & 50000 & 8192 & 16384 & 8192 & 8192 \\
\hline
\end{array}
\]
For imbalanced symbolic copy, the maximum output-token budget is computed as \(4\cdot \mathrm{target\_length}+128\).

\paragraph{Python List Conversion.}
We also evaluate the same five models on a Python List Conversion task. Each example is generated from a synthetic one-dimensional physical signal. We sample one scenario uniformly from nine signal families: simple harmonic motion, triangle wave motion, sawtooth scan motion, rectified AC signal, clipped sensor oscillation, relaxation oscillator, Fourier periodic waveform, pulse train, and pendulum-like oscillation. For each example, we sample a period \(p\), generate one cycle of length \(p\), and then obtain the length-\(n\) sequence by periodic indexing:
\[
    y_k=\mathrm{cycle}_{k\bmod p},
    \qquad
    k=0,\ldots,n-1.
\]
The target output is the same sequence written as a Python list,
\[
    [y_0,y_1,\ldots,y_{n-1}].
\]
All values are rounded to two decimal places. In the API evaluation, we use \(2\leq p\leq 10\) and evaluate lengths
\[
    n\in\{200,300,400,500\}.
\]
For each length, we generate \(50\) examples using random seed \(2030\). The datasets are saved before model evaluation, and all models are evaluated on the same saved examples. The API endpoints and output-token budgets are the same as in Recursive-Flip binary copy, with a global request timeout of \(6000\) seconds. For evaluation, we strip Markdown code fences from both the model output and the gold output, then extract all numbers matching the regular expression \verb|-?\d+(?:\.\d+)?|. Sequence exact match is true if and only if the extracted numeric sequence from the model output exactly equals that from the gold output.

\subsection{Repeat-structure Copy Test for Frontier LLMs}\label{sec:repeat-test}
Here we give details for \Cref{fig:repeat-acc}.

\paragraph{Models.}
We evaluate \texttt{gpt-5.5}, \texttt{gemini-3.1-pro-preview}, and \texttt{qwen3.6-plus}. The resources are same as the previous subsection.

\paragraph{Data distribution.}
We use random seed \(1320\), target length \(1000\), base lengths
\[
    m \in \{4,6,8,10\},
\]
and flip probabilities
\[
    p \in \{0.02,0.05,0.10,0.15,0.20,0.25,0.30\}.
\]
For each valid pair \((m,p)\), we generate \(50\) samples. A pair is included only if
\[
    (1-p)^m \geq 0.3.
\]
We first sample a base string of length \(m\) uniformly from the alphabet \(\{\texttt{" a"},\texttt{" b"}\}\), repeat it until length \(1000\), truncate the final repeat if necessary, and then independently flip each token with probability \(p\). The \(x\)-axis reports \((1-p)^m\), the probability that an entire repeated block remains unflipped.

\paragraph{API and evaluation.}
GPT-5.5 uses the OpenAI Responses API with \(\texttt{max\_output\_tokens}=5000\) and no explicit temperature parameter. Gemini uses temperature \(0.0\), \(\texttt{max\_output\_tokens}=10000\), and thinking level \texttt{LOW}. Qwen uses temperature \(0.0\), \(\texttt{max\_output\_tokens}=2500\), request timeout \(300\) seconds, request gap \(1\) second, \(\texttt{enable\_thinking=False}\), and \(\texttt{max\_retries}=0\). Final plotted data are computed from merged logs keyed by
\[
    (\texttt{model},\texttt{base\_length},p).
\]
Output normalization keeps only letters \texttt{a} and \texttt{b} from the raw response and maps them back to the symbolic tokens. Exact match is true only if the normalized prediction has length \(1000\) and agrees with the corrupted input sequence at every position.

%% file: appendix-synthetic-exp-detail.tex
\section{Details of Synthetic Experiments on Binary Copy}\label{app:synthetic-details}

We conduct training-from-scratch experiments on the synthetic binary copy dataset. For each binary string $s \in \{0,1\}^*$, the input sequence takes the form
\[
    \cS := (\texttt{A}_1,\texttt{A}_2, s, \texttt{B}_1=\symn,\texttt{B}_2, s),
\]
where the model is trained to predict the next token autoregressively, including the copied string after $\texttt{B}_2$ and the final $\eos$ token at the end of the sequence.

\paragraph{Data Distributions.}
As introduced in \cref{sec:copy-benchmarks}, we consider three data distributions: \emph{uniform}, \emph{imbalanced}, and \emph{recursive-flip}. All training is conducted on imbalanced data with string lengths from $1$ to $100$. For the imbalanced distribution, the probability of sampling the first binary token is chosen from
\[
    \{0.05,0.15,0.3,0.5,0.7,0.85,0.95\},
\]
and tokens are then sampled independently according to this probability. At test time, we evaluate on all three distributions. Thus, uniform and recursive-flip test strings are out-of-distribution relative to the training distribution.

\paragraph{Model Architectures.}
We train Transformers from scratch under two model sizes. The shallow model has $1$ layer and $2$ attention heads, where each head has dimension $512$. The deeper model has $12$ layers and $12$ attention heads, where each head has dimension $128$. Thus, the embedding dimensions are $1024$ and $1536$, respectively. We set dropout to $0$ and use bias-free linear layers. For each architecture and each positional encoding, we train $8$ models with different random seeds and report the average accuracy. We average over seeds because training on the copy task shows noticeable variance across runs, especially near the boundary between interpolation and extrapolation.

\paragraph{Training Configuration.}
The two model sizes use different optimization settings. The different hyperparameters are in \Cref{tab:pe_hyperparams}. For both settings, we use weight decay $0.01$, $\beta_2=0.95$, and a warmup length of $100$ iterations. The training batch size is $64$ with gradient accumulation over $4$ steps. The maximum context length is set to $260000$ for $1$-layer model and $26000$ for $12$-layer model, which is large enough for all training and evaluation sequences considered in these experiments.

\begin{table}[t]
\centering
\caption{Training hyperparameters for each positional encoding in the length generalization experiments. 
Peak LR and Final LR denote the maximum and minimum learning rates used in the cosine decay schedule.}
\label{tab:pe_hyperparams}
\small
\begin{tabular}{llccc}
\toprule
Setting & Positional Encoding & Peak LR & Final LR & Training Steps \\
\midrule
\multicolumn{5}{c}{\textbf{12 layers, 12 heads}} \\
\midrule
12L & RoPE              & $5\times 10^{-5}$ & $1\times 10^{-6}$ & 10000 \\
12L & ALiBi             & $5\times 10^{-5}$ & $1\times 10^{-6}$ & 10000 \\
12L & NoPE              & $5\times 10^{-5}$ & $1\times 10^{-6}$ & 10000 \\
12L & RNoPE             & $5\times 10^{-5}$ & $1\times 10^{-6}$ & 10000 \\
12L & 2D-RoPE           & $5\times 10^{-5}$ & $1\times 10^{-6}$ & 10000 \\
12L & Auto-2D-RoPE      & $5\times 10^{-5}$ & $1\times 10^{-6}$ & 30000 \\
12L & Hybrid RoPE       & $5\times 10^{-5}$ & $1\times 10^{-6}$ & 30000 \\

\specialrule{1.2pt}{2pt}{2pt}

\multicolumn{5}{c}{\textbf{1 layer, 2 heads}} \\
\midrule
1L & RoPE              & $5\times 10^{-4}$ & $5\times 10^{-5}$ & 60000 \\
1L & ALiBi             & $5\times 10^{-4}$ & $5\times 10^{-5}$ & 60000 \\
1L & NoPE              & $5\times 10^{-4}$ & $5\times 10^{-5}$ & 60000 \\
1L & 2D-RoPE           & $5\times 10^{-4}$ & $1\times 10^{-6}$ & 60000 \\
1L & Auto-2D-RoPE      & $5\times 10^{-4}$ & $1\times 10^{-4}$ & 10000 \\
\bottomrule
\end{tabular}
\end{table}

\paragraph{Evaluation.}
During training, strings have length at most $100$. At evaluation time, we test models on longer strings to measure length generalization. We evaluate lengths beyond the training range up to the longest test length used in the corresponding figure, with test lengths are uniformly selected in each interval. For each run, we sample 10 strings in each interval independently.

\paragraph{Results.}
\Cref{fig:length-generalization-combined-full} shows the full length-generalization results for all test distributions. The main text reports the Recursive-Flip setting because it is both out-of-distribution and especially challenging due to repeated substrings. Across the full results, standard positional encodings fail to extrapolate reliably beyond the training length range, while 2D-RoPE-based methods show substantially stronger length generalization.



\paragraph{Training Configuration.}
As discussed before, we train transformers from scratch on the binary copy task under two model sizes.
The first is a shallow model with $1$ layer and $2$ attention heads, where each head has dimension $512$.
The second is a deeper model with $12$ layers and $12$ attention heads, where each head has dimension $128$.
For each architecture and each positional encoding, we train $8$ models with different random seeds and report the average accuracy.
We average over seeds because training on the copy task shows noticeable variance across runs, especially near the boundary between interpolation and extrapolation.

\begin{figure}[tb]
    \centering
    \includegraphics[width=\linewidth]{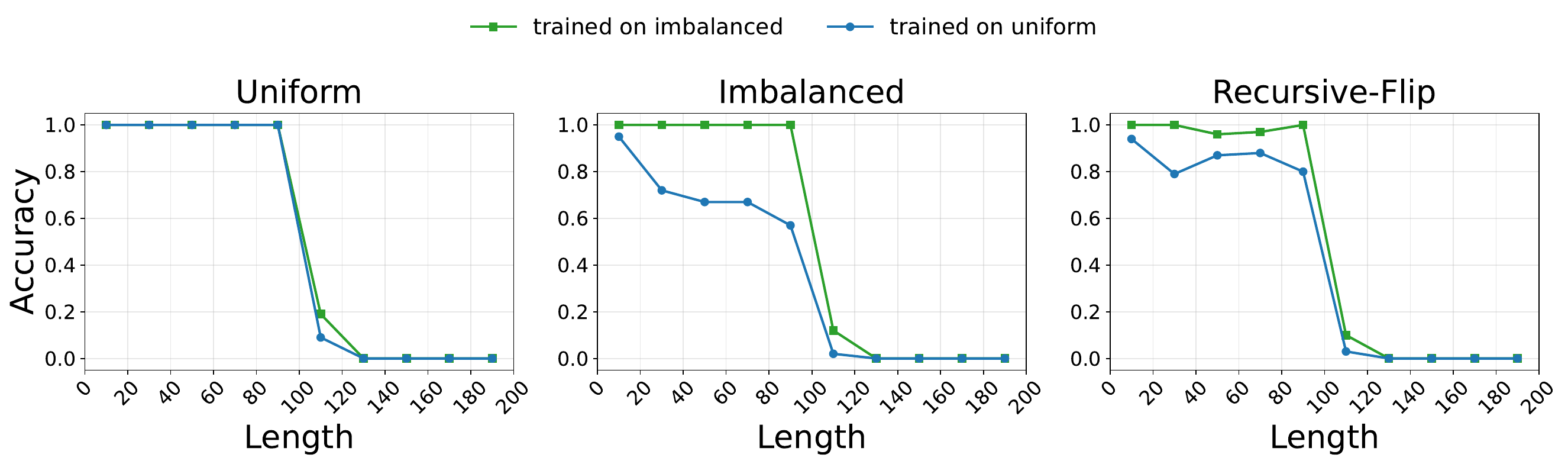}
    
    \caption{Length generalization for training-from-scratch experiments in 2 distributions.}
    \label{fig:rope-2distribution}
\end{figure}

%% file: appendix-attn-pattern.tex
\section{Further Discussion of Multi-Layer RoPE Learns Length-Dependent Pattern}\label{app:discussion-attn-pattern}

\begin{figure}[tb]
    \centering
    \includegraphics[width=\linewidth]{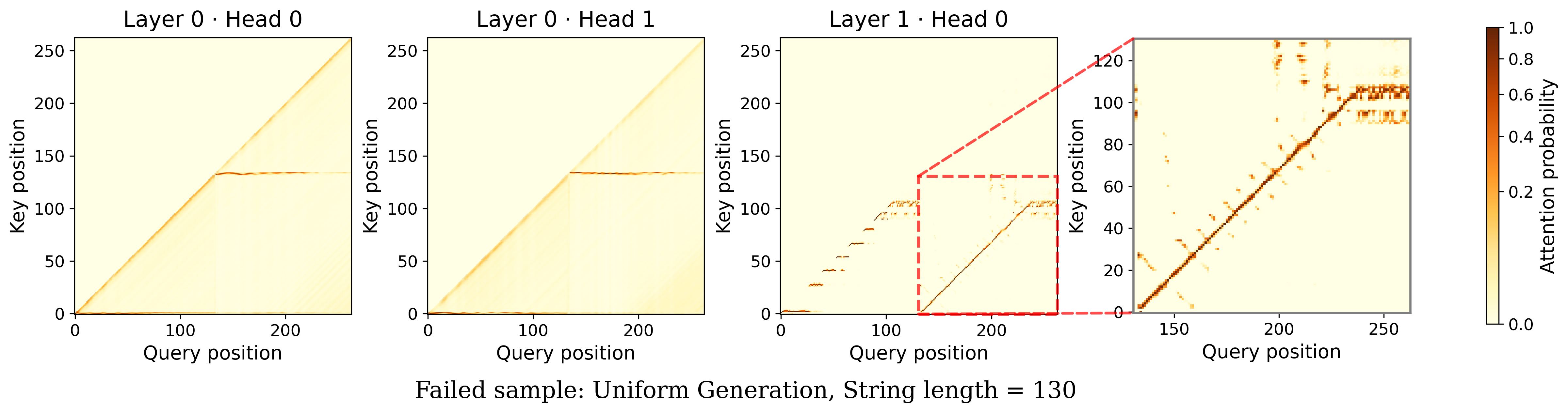}
    \caption{
    Attention map of a RoPE-based model on a failed sample with sequence length beyond the training length.
    In Layer $0$, attention heads exhibit both \textbf{horizontal patterns} (attending to special tokens in between the input string and the output, and the tokens before the string as a locator) and \textbf{diagonal patterns} (semantic/induction-style behavior).
    In Layer $1$, one can see that RoPE would attend to wrong tokens near themselves periodically.
    }
    \label{fig:rope-attn-copy-beyond-100}
\end{figure}

We provide a more detailed discussion of the attention patterns in \Cref{fig:rope-attn-copy} and \Cref{fig:rope-attn-copy-beyond-100}. The main text shows that RoPE does not learn a clean position-based copy map even on a failed example within the training length range. Here, we further examine what happens beyond the training length.

\paragraph{Within the Training Length.}
In \Cref{fig:rope-attn-copy}, the last-layer attention should ideally align each output position with its corresponding input position. However, the zoomed attention map is not a clean diagonal alignment. The head attends not only to the correct source positions, but also periodically assigns large attention scores to nearby wrong tokens around the diagonal. This suggests that the final copy decision is affected by local-context matching: when repeated or similar substrings appear, locally similar positions can compete with the correct source position. Thus, even within the training length range, RoPE can fail because the learned copy mechanism is not purely position-based.

\paragraph{Beyond the Training Length.}
\Cref{fig:rope-attn-copy-beyond-100} further shows why this learned mechanism does not length-generalize. In Layer $0$, the two heads contain horizontal locator patterns that attend to special tokens around the boundary between the input string and the output string, such as the delimiters in $[\symn,\symI,\symO]$. These patterns provide a position-related signal and can help the model infer source-target alignment within the seen length range. However, when the sequence length goes beyond the maximum training length, the locator pattern no longer remains stable: after the copied position exceeds the training range, the horizontal attention pattern becomes discontinuous and eventually breaks. This indicates that the model's position-based component is not a true length-invariant copy algorithm, but a locator-style shortcut tied to the training lengths.

At the same time, Layer $0$ also contains diagonal patterns, which resemble local-context or induction-style behavior~\citep{olsson2022context,singhneeds,jelassi2024repeat}. These patterns can attend to nearby or locally similar tokens, but they cannot uniquely identify the correct source position when repeated substrings are present. Therefore, the trained RoPE model combines two imperfect mechanisms: a locator-style positional shortcut that does not generalize in length, and a context-based matching mechanism that becomes ambiguous on repeated structures. This explains why the last-layer attention becomes noisy and why the model fails to implement the ideal position-based copy rule suggested by \Cref{thm:2L-RoPE-represent}.

%% file: appendix-implementation-2D-RoPE.tex
\section{Implementation Details of 2D-RoPE}\label{app:2drop-imp}

\subsection{Rotational Frequency Hyperparameter.}
In RoPE, there is a hyperparameter $\theta$ that determines the rotational frequency. For 2D-RoPE, the two dimensions can have separate rotational frequencies, $\theta_x, \theta_y$. However, for simplicity, we keep $\theta_x=\theta_y$. Moreover, we use the default hyperparameter from the original RoPE paper, which is $\theta=10,000$, and we set $\theta_x=\theta_y =1,000$. Our preliminary experiments show that setting $\theta_x=\theta_y=100$ or $\theta_x=\theta_y=10,000$ does not yield significantly different results. 

\subsection{Implementation Details and Comparison with Vision 2D-RoPE}
\label{app:detail-2drop-imp-cv}
RoPE-style 2D positional encodings have been explored in computer vision, where each image patch has a natural spatial coordinate. For example, prior vision models apply RoPE to patch tokens using their height and width coordinates, so that attention can encode relative spatial displacement on the image grid~\citep{jeevan2022resource,heo2024rotary,chu2024visionllama}. In this setting, the 2D coordinates are given by the image layout itself.

As we mentioned in the previous section, our implementation differs in both motivation and coordinate construction. In language modeling, the input is normally treated as a 1D token sequence, and there is no fixed 2D grid analogous to image patches. We instead use line break tokens to induce a 2D structure in text. Each token receives a row ID, determined by the number of preceding line breaks, and a column ID, determined by its offset within the current line. Thus, the 2D coordinates are not externally given but are constructed from the textual structure. Further, we introduce an alternative termed Auto 2D-RoPE for 2D-RoPE to learn the 2D structure in context adaptively in \Aref{app:adaprope}.

In our implementation, 2D-RoPE follows an axial decomposition: part of the head dimension is rotated according to the row ID, and the remaining part is rotated according to the column ID. We keep the standard causal attention mask for language modeling. This is also different from most vision applications, which typically use bidirectional attention over image patches.

%% file: appendix-details-llm-experiments.tex
\section{LLM Experiment Details}
\label{sec:llm-experiment-details}
Unless otherwise specified, the model training in the \Cref{sec:llm-experiments} follows the following settings.

\begin{table}[!ht]
    \centering
    \small
    \caption{The hyperparameters of the three model of different sizes involved in Section~\ref{sec:llm-experiments}.}
    \begin{tabular}{l|ccc}
        \toprule
        \textbf{Hyperparameter} & \textbf{350M} & \textbf{730M} & \textbf{1.4B}  \\
        \midrule
        Vocab size & 32000 & 32000 & 32000 \\
        Layers & 24 & 24 & 28 \\
        Hidden size & 1024 & 1536 & 2048\\
        MLP intermediate dim & 3072  & 4096 & 5120 \\
        MLP activation function & SwiGLU & SwiGLU & SwiGLU \\
        Attention heads & 8 & 12 & 16\\
        KV heads & 8 & 12 & 16\\
        Attention head size & 128 & 128 & 128 \\
        RoPE $\theta$ & 10,000 & 10,000 & 10,000 \\
        2D-RoPE $\theta_x,\theta_y$ & 1,000 & 1,000 & 1,000 \\
        \midrule 
        \multicolumn{4}{c}{\textit{Training Configuration}} \\
        \midrule
        Pretraining batch size & 256 & 256 & 256 \\
        Finetuning batch size & 64 & 64 & 64 \\
        Max length & 2K & 2K & 2K \\
        Pretraining max LR & 5e-4 & 5e-4 & 2e-4 \\
        Finetuning max LR & 5e-5 & 5e-5 & 5e-5 \\
        \bottomrule
    \end{tabular}
    \label{tab:llm-model-configs}
\end{table}


\subsection{LLM Model Configuration Details}
\label{app:llm-model-config-details}

The hyperparameters for the models involved in LLM experiments (Section~\ref{sec:llm-experiments}) are described in Table~\ref{tab:llm-model-configs}. These hyperparameter values are chosen to be similar to popular open-source LLMs such as Qwen3 and Llama3~\citep{grattafiori2024llama3herdmodels}. Each model uses the Llama3 tokenizer.

\subsection{LLM Pretraining Configuration Details}
\label{app:llm-training-config-details}

Here, we describe more details for the LLM pretraining~(Section~\ref{sec:llm-experiments}) for reproducibility.

\paragraph{Learning Rate Scheduler.}
Each model was trained using the WSD LR scheduler~\citep{hu2024minicpm}, with different LR depending on the model size. The 350M, 730M, and 1.4B models use 5e-4, 5e-4, and 2e-4 max LR, respectively. Each of the LRs warms up (from 0) for 100 steps, and decays for the last 10\% steps to $1/10$ of their max LR. 

\paragraph{Data Batching.}
The batch size is always 256, and the maximum sequence length during pretraining is always 2048, so the number of tokens per batch is $512$K. Each pretraining document is concatenated with an \texttt{<end\_of\_sequence>} delimiter until they exceed maximum sequence length, and then truncated down to 2048. The part that was truncated is discarded. Each token can attend to all other tokens in other documents that was concatenated into the same sequence.

\paragraph{Optimizer.}
We use the same optimizer for both pretraining and finetuning, which is AdamW~\citep{adamw} with beta values of (0.9, 0.95), a weight decay of 0.1, and gradient clipping at 1.0.

\paragraph{Hardware.}
All LLM experiments were conducted on machines equipped with NVIDIA A800-80GB GPUs, using PyTorch and HuggingFace Transformer libraries. We used BF16 precision during both training and evaluation.

\subsection{LLM Finetuning Configuration Details}
\label{app:llm-finetuning-config-details}

\paragraph{Learning Rate Scheduler.}
We use the Cosine LR scheduler for finetuning, since the cost of rerunning finetuning is much lower than pretraining. The LR warms up for 100 steps then decays down to 0.

\paragraph{Data Batching.}
For finetuning, each sequence might have different sequence length, and we use attention masking to ensure that padding tokens are not attended to. Since the training dataset is synthesized, it does not exceed 2048 tokens and there is no truncation. We always use batch size of 64 during finetuning.

\paragraph{Optimizer and Hardware.}
The optimizer for finetuning is the same as the one used for pretraining, and all experiments were conducted on the same hardware, based on the same precision and implementation libraries.

\paragraph{Loss Mask.}
During finetuning, we apply a loss mask such that the loss is only computed over the output tokens, which are tokens that make up the assistant's response in the finetuning data.




\subsection{LR Ablation in LLM Experiments}
\label{app:lr-ablation}

\begin{table}[!ht]
    
    \footnotesize
    \centering
    \caption{LR ablation results in the LLM experiments. Each model was trained with $3$ different LR values, and the main content reports the LR that achieves best overall performance in the two kinds of tasks (CSR, Copy with imbalanced and recursive-flip distributions), which is selected to be 5e-5 and is highlighted in \colorbox{orange!10}{orange}.}
    \setlength{\tabcolsep}{3pt}
    \begin{tabular}{lccc|c|cccc|cccc}
        \toprule
         & & PT &  & & \multicolumn{4}{c|}{Copy (Imbalanced)} & \multicolumn{4}{c}{Copy (Recursive-Flip)} \\
        Model & Size & Data & LR & CSR & 1K & 2K & 4K & 8K & 1K & 2K & 4K & 8K \\
                \midrule
                \multicolumn{13}{c}{\textit{Different Model Scales with Chinchilla law}} \\
                \midrule
RoPE & 350M & 7B & 1e-4 & 40.9 & 100.0 & 16.8 & 0.0 & 0.0 & 26.8 & 11.7 & 1.1 & 1.8 \\
\rowcolor{orange!10}
RoPE & 350M & 7B & 5e-5 & 41.4 & 97.2 & 14.8 & 0.0 & 0.0 & 19.6 & 8.7 & 0.0 & 0.0 \\
RoPE & 350M & 7B & 3e-5 & 41.6 & 93.2 & 9.6 & 0.0 & 0.0 & 21.6 & 8.7 & 0.0 & 0.0 \\
H-RoPE & 350M & 7B & 1e-4 & 41.2 & 100.0 & 98.2 & 18.8 & 0.2 & 100.0 & 99.0 & 72.3 & 27.3 \\
\rowcolor{orange!10}
H-RoPE & 350M & 7B & 5e-5 & 41.2 & 99.6 & 57.8 & 0.0 & 0.0 & 99.0 & 91.3 & 39.4 & 1.8 \\
H-RoPE & 350M & 7B & 3e-5 & 41.3 & 84.8 & 33.4 & 0.0 & 0.0 & 100.0 & 98.1 & 55.3 & 5.5 \\
2D-RoPE & 350M & 7B & 1e-4 & 40.1 & 99.8 & 94.8 & 81.8 & 27.2 & 100.0 & 100.0 & 100.0 & 100.0 \\
\rowcolor{orange!10}
2D-RoPE & 350M & 7B & 5e-5 & 40.4 & 97.4 & 69.2 & 33.4 & 2.4 & 97.9 & 92.2 & 75.5 & 43.6 \\
2D-RoPE & 350M & 7B & 3e-5 & 40.5 & 81.2 & 57.2 & 14.8 & 0.4 & 100.0 & 98.1 & 92.6 & 87.3 \\
\midrule
RoPE & 730M & 15B & 1e-4 & 43.3 & 100.0 & 20.8 & 0.0 & 0.0 & 20.6 & 6.8 & 0.0 & 1.8 \\
\rowcolor{orange!10}
RoPE & 730M & 15B & 5e-5 & 44.0 & 97.0 & 11.0 & 0.0 & 0.0 & 12.4 & 5.8 & 1.1 & 0.0 \\
RoPE & 730M & 15B & 3e-5 & 44.2 & 88.8 & 4.4 & 0.0 & 0.0 & 14.4 & 7.8 & 0.0 & 0.0 \\
H-RoPE & 730M & 15B & 1e-4 & 44.1 & 93.6 & 13.6 & 0.0 & 0.0 & 100.0 & 100.0 & 78.7 & 3.6 \\
\rowcolor{orange!10}
H-RoPE & 730M & 15B & 5e-5 & 45.5 & 95.4 & 11.4 & 0.0 & 0.0 & 76.3 & 72.8 & 17.0 & 0.0 \\
H-RoPE & 730M & 15B & 3e-5 & 45.4 & 84.0 & 4.0 & 0.0 & 0.0 & 96.9 & 92.2 & 35.1 & 0.0 \\
2D-RoPE & 730M & 15B & 1e-4 & 44.3 & 100.0 & 100.0 & 89.6 & 24.2 & 100.0 & 98.1 & 88.3 & 90.9 \\
\rowcolor{orange!10}
2D-RoPE & 730M & 15B & 5e-5 & 45.2 & 99.4 & 74.6 & 12.8 & 0.0 & 78.4 & 73.8 & 47.9 & 5.5 \\
2D-RoPE & 730M & 15B & 3e-5 & 45.1 & 92.8 & 41.4 & 3.6 & 0.0 & 96.9 & 89.3 & 56.4 & 36.4 \\
\midrule
RoPE & 1.4B & 28B & 1e-4 & 36.8 & 100.0 & 31.0 & 0.0 & 0.0 & 20.6 & 8.7 & 0.0 & 1.8 \\
\rowcolor{orange!10}
RoPE & 1.4B & 28B & 5e-5 & 45.9 & 99.6 & 26.8 & 0.0 & 0.0 & 15.5 & 4.9 & 0.0 & 0.0 \\
RoPE & 1.4B & 28B & 3e-5 & 46.0 & 99.8 & 23.2 & 0.0 & 0.0 & 15.5 & 7.8 & 1.1 & 0.0 \\
H-RoPE & 1.4B & 28B & 1e-4 & 46.8 & 100.0 & 92.4 & 38.8 & 5.2 & 17.5 & 5.8 & 1.1 & 0.0 \\
\rowcolor{orange!10}
H-RoPE & 1.4B & 28B & 5e-5 & 47.2 & 100.0 & 73.6 & 3.6 & 0.0 & 12.4 & 1.9 & 0.0 & 0.0 \\
H-RoPE & 1.4B & 28B & 3e-5 & 47.2 & 99.8 & 65.8 & 2.8 & 0.0 & 14.4 & 5.8 & 0.0 & 0.0 \\
2D-RoPE & 1.4B & 28B & 1e-4 & 45.9 & 100.0 & 100.0 & 100.0 & 99.0 & 99.0 & 95.1 & 93.6 & 69.1 \\
\rowcolor{orange!10}
2D-RoPE & 1.4B & 28B & 5e-5 & 46.8 & 100.0 & 100.0 & 92.0 & 61.8 & 66.0 & 54.4 & 26.6 & 1.8 \\
2D-RoPE & 1.4B & 28B & 3e-5 & 46.9 & 100.0 & 92.0 & 51.8 & 14.6 & 92.8 & 76.7 & 45.7 & 10.9 \\
        \midrule
        \multicolumn{13}{c}{\textit{Overtrained Setting}}\\
        \midrule
RoPE & 730M & 100B & 3e-5 & 46.9 & 99.5 & 23.3 & 0.0 & 0.0 & 15.5 & 7.8 & 1.1 & 1.8 \\
\rowcolor{orange!10}
RoPE & 730M & 100B & 5e-5 & 46.0 & 99.6 & 29.2 & 0.0 & 0.0 & 24.7 & 8.7 & 1.1 & 1.8 \\
RoPE & 730M & 100B & 1e-4 & 34.3 & 100.0 & 22.6 & 0.0 & 0.0 & 63.9 & 15.5 & 0.0 & 1.8 \\
H-RoPE & 730M & 100B & 3e-5 & 46.4 & 100.0 & 41.4 & 0.0 & 0.0 & 92.8 & 86.4 & 6.4 & 0.0 \\
\rowcolor{orange!10}
H-RoPE & 730M & 100B & 5e-5 & 46.2 & 99.2 & 52.1 & 0.0 & 0.0 & 100.0 & 98.1 & 28.7 & 1.8 \\
H-RoPE & 730M & 100B & 1e-4 & 43.3 & 100.0 & 74.6 & 1.7 & 0.0 & 100.0 & 100.0 & 50.0 & 1.8 \\
2D-RoPE & 730M & 100B & 3e-5 & 46.3 & 100.0 & 95.5 & 37.6 & 0.0 & 99.0 & 98.1 & 84.0 & 34.5 \\
\rowcolor{orange!10}
2D-RoPE & 730M & 100B & 5e-5 & 46.2 & 100.0 & 96.3 & 72.8 & 15.8 & 100.0 & 99.0 & 94.7 & 74.5 \\
2D-RoPE & 730M & 100B & 1e-4 & 43.4 & 100.0 & 99.8 & 94.1 & 40.7 & 100.0 & 100.0 & 100.0 & 81.8 \\
        \bottomrule
    \end{tabular}
    \label{tab:lr-ablation-results}
\end{table}

In the LLM experiments~(Section~\ref{sec:llm-experiments}), we swept different max LR in finetuning and chose the best LR that balanced the performance across different tasks. In this section, we report the results for each value of LR ($\{1\times 10^{-4},5\times 10^{-5},3\times 10^{-5}\}$), which is shown in Table~\ref{tab:lr-ablation-results}. Overall, the results show that using a larger LR leads to better copying abilities, but the CSR performance degrades due to catastrophic forgetting. Interestingly, 2D-RoPE is generally more robust to the choice of LR, leading to less catastrophic forgetting in the case of a fairly large LR.

\subsection{Common-Sense Reasoning Detailed Results}
\label{app:csr-result-details}

\begin{table}
    \centering
    \small
    \caption{The results of LLMs on each individual common-sense reasoning tasks. \colorbox{orange!10}{Orange} rows highlight the LR value that is chosen to maximize the performance across common-sense reasoning and binary copy with two distributions.}
    \setlength{\tabcolsep}{3pt}
    \begin{tabular}{lccc|cccccccc|c}
        \toprule
         & & PT & & & & & & & & &  \\ 
        Model & Size & Data & LR & MMLU & LMBD & HS & PIQA & ARC-e & ARC-c & WG & BoolQ & Avg. \\
        \midrule
        \multicolumn{13}{c}{\textit{Different Model Scales with Chinchilla law}} \\
        \midrule
        RoPE & 350M & 7B & 1e-4 & 23.6 & 27.4 & 34.4 & 64.1 & 42.9 & 24.2 & 51.8 & 58.7 & 40.9 \\
\rowcolor{orange!10}
        RoPE & 350M & 7B & 5e-5 & 23.7 & 28.8 & 35.2 & 64.8 & 43.9 & 24.2 & 51.7 & 59.3 & 41.4 \\
        RoPE & 350M & 7B & 3e-5 & 23.8 & 28.7 & 35.3 & 64.9 & 44.0 & 24.6 & 51.9 & 59.4 & \textbf{41.6} \\
        H-RoPE & 350M & 7B & 1e-4 & 25.8 & 28.7 & 35.2 & 64.3 & 43.7 & 24.1 & 50.3 & 57.4 & 41.2 \\
\rowcolor{orange!10}
        H-RoPE & 350M & 7B & 5e-5 & 24.5 & 28.4 & 35.5 & 65.0 & 43.6 & 24.9 & 50.9 & 56.5 & 41.2 \\
        H-RoPE & 350M & 7B & 3e-5 & 24.4 & 28.1 & 35.6 & 65.4 & 43.9 & 25.3 & 51.1 & 56.5 & 41.3 \\
        2D-RoPE & 350M & 7B & 1e-4 & 23.1 & 28.8 & 34.9 & 63.4 & 42.6 & 23.9 & 53.3 & 50.6 & 40.1 \\
\rowcolor{orange!10}
        2D-RoPE & 350M & 7B & 5e-5 & 23.2 & 27.8 & 35.3 & 64.2 & 43.1 & 23.5 & 52.5 & 53.5 & 40.4 \\
        2D-RoPE & 350M & 7B & 3e-5 & 23.5 & 27.4 & 35.3 & 64.2 & 43.4 & 23.5 & 53.2 & 53.5 & 40.5 \\
        \midrule
        RoPE & 730M & 15B & 1e-4 & 23.1 & 38.4 & 42.0 & 67.5 & 45.7 & 25.4 & 53.7 & 50.4 & 43.3 \\
\rowcolor{orange!10}
        RoPE & 730M & 15B & 5e-5 & 23.3 & 38.3 & 43.1 & 69.0 & 48.3 & 26.5 & 53.3 & 50.5 & 44.0 \\
        RoPE & 730M & 15B & 3e-5 & 23.2 & 38.5 & 43.2 & 69.4 & 48.9 & 26.9 & 53.9 & 49.6 & 44.2 \\
        H-RoPE & 730M & 15B & 1e-4 & 23.6 & 37.6 & 41.5 & 66.9 & 48.7 & 26.3 & 51.6 & 56.7 & 44.1 \\
\rowcolor{orange!10}
        H-RoPE & 730M & 15B & 5e-5 & 24.3 & 38.1 & 43.5 & 68.7 & 51.1 & 27.3 & 52.4 & 58.3 & \textbf{45.5} \\
        H-RoPE & 730M & 15B & 3e-5 & 24.2 & 38.1 & 43.8 & 68.3 & 50.9 & 27.8 & 52.1 & 58.1 & 45.4 \\
        2D-RoPE & 730M & 15B & 1e-4 & 26.6 & 38.9 & 42.0 & 67.8 & 47.8 & 26.5 & 52.0 & 53.3 & 44.3 \\
\rowcolor{orange!10}
        2D-RoPE & 730M & 15B & 5e-5 & 26.6 & 38.1 & 43.3 & 68.4 & 47.8 & 27.1 & 52.6 & 57.4 & 45.2 \\
        2D-RoPE & 730M & 15B & 3e-5 & 26.7 & 37.9 & 43.4 & 68.7 & 48.0 & 26.5 & 52.3 & 57.7 & 45.1 \\
        \midrule
        RoPE & 1.4B & 28B & 1e-4 & 23.0 & 18.1 & 31.5 & 51.6 & 31.5 & 24.7 & 52.1 & 61.8 & 36.8 \\
\rowcolor{orange!10}
        RoPE & 1.4B & 28B & 5e-5 & 23.4 & 42.7 & 46.7 & 69.5 & 51.9 & 28.8 & 54.0 & 50.7 & 45.9 \\
        RoPE & 1.4B & 28B & 3e-5 & 24.1 & 42.4 & 46.9 & 70.2 & 52.7 & 29.2 & 52.9 & 49.8 & 46.0 \\
        H-RoPE & 1.4B & 28B & 1e-4 & 24.6 & 42.8 & 45.3 & 68.2 & 50.6 & 27.0 & 53.7 & 61.9 & 46.8 \\
\rowcolor{orange!10}
        H-RoPE & 1.4B & 28B & 5e-5 & 25.3 & 44.5 & 47.1 & 69.3 & 51.3 & 28.0 & 53.9 & 57.9 & \textbf{47.2} \\
        H-RoPE & 1.4B & 28B & 3e-5 & 25.3 & 44.7 & 47.0 & 69.3 & 52.0 & 27.0 & 54.2 & 58.3 & \textbf{47.2} \\
        2D-RoPE & 1.4B & 28B & 1e-4 & 24.4 & 40.5 & 45.3 & 69.5 & 50.8 & 26.4 & 51.7 & 58.9 & 45.9 \\
\rowcolor{orange!10}
        2D-RoPE & 1.4B & 28B & 5e-5 & 23.7 & 41.5 & 46.8 & 70.4 & 51.8 & 26.7 & 55.2 & 58.6 & 46.8 \\
        2D-RoPE & 1.4B & 28B & 3e-5 & 23.7 & 41.5 & 47.1 & 70.2 & 52.3 & 27.2 & 54.2 & 58.8 & 46.9 \\
        \midrule
        \multicolumn{13}{c}{\textit{Overtrained Setting}}\\
        \midrule
        RoPE     & 730M & 100B & 1e-4 & 22.9 & 18.1 & 30.3 & 51.5 & 29.4 & 23.8 & 49.3 & 49.1 & 34.3 \\
\rowcolor{orange!10}
        RoPE     & 730M & 100B & 5e-5 & 23.5 & 43.0 & 46.2 & 68.5 & 48.6 & 26.8 & 51.9 & 59.1 & 46.0 \\
        RoPE     & 730M & 100B & 3e-5 & 24.4 & 44.1 & 47.1 & 69.6 & 50.9 & 28.2 & 52.4 & 58.8 & \textbf{46.9} \\
        H-RoPE     & 730M & 100B & 1e-4 & 23.3 & 37.8 & 41.7 & 66.4 & 46.2 & 26.6 & 51.9 & 52.2 & 43.3 \\
\rowcolor{orange!10}
        H-RoPE     & 730M & 100B & 5e-5 & 24.4 & 43.0 & 46.9 & 69.7 & 52.3 & 28.0 & 52.7 & 52.6 & 46.2 \\
        H-RoPE     & 730M & 100B & 3e-5 & 24.3 & 42.3 & 47.6 & 69.9 & 52.5 & 27.6 & 52.8 & 54.2 & 46.4 \\
        2D-RoPE     & 730M & 100B & 1e-4 & 23.8 & 40.0 & 41.8 & 66.1 & 44.4 & 22.7 & 52.5 & 56.2 & 43.4 \\
\rowcolor{orange!10}
        2D-RoPE     & 730M & 100B & 5e-5 & 23.9 & 42.8 & 46.5 & 68.9 & 49.6 & 26.0 & 54.6 & 57.1 & 46.2 \\
        2D-RoPE     & 730M & 100B & 3e-5 & 24.3 & 42.4 & 47.0 & 69.0 & 50.1 & 25.5 & 54.1 & 57.8 & 46.3 \\
        \bottomrule
    \end{tabular}
    \label{tab:csr-result-details}
\end{table}

For completeness, we report the performance of each model on each common-sense reasoning task mentioned in Appendix~\ref{app:llm-finetuning-config-details}, in Table~\ref{tab:csr-result-details}.
\newpage

%% file: appendix-adap-2drope.tex




\section{Auto 2D-RoPE Discussion}\label{app:adaprope}

We now describe the architecture of Auto 2D-RoPE in more detail. The goal is to keep the main advantage of 2D-RoPE, namely assigning each token a 2D position ID, while removing the hard reliance on a manually specified line-break token. In standard 2D-RoPE, the position ID of each token is updated by a fixed rule: the column index increases within a line, and the row index increases after a line break. In Auto 2D-RoPE, we instead let the model learn how the 2D position IDs should evolve along the sequence.

For each token $i$, let its 2D position ID be
\[
    P_i = \bigl(p_i^{(1)},p_i^{(2)}\bigr).
\]
Intuitively, $p_i^{(1)}$ plays the role of a column-like coordinate, while $p_i^{(2)}$ plays the role of a row-like coordinate. Unlike rule-based 2D-RoPE, these coordinates are not computed from line breaks. Instead, at each layer, we generate two update values from the hidden representation of the current token.

Concretely, let $\vh_i^{(\ell)}$ be the hidden representation of token $i$ at layer $\ell$. We apply a learnable linear projection to obtain
\[
    (a_i^{(\ell)}, b_i^{(\ell)})
    =
    \mW_{\mathrm{pos}}^{(\ell)} \vh_i^{(\ell)} + \vb_{\mathrm{pos}}^{(\ell)}.
\]
We then pass these two scalars through a sigmoid function and a rescaling factor $\alpha$:
\[
    u_i^{(\ell)} = \alpha \cdot S(a_i^{(\ell)}),
    \qquad
    v_i^{(\ell)} = \alpha \cdot S(b_i^{(\ell)}),
\]
where $S(\cdot)$ denotes the sigmoid function. In our experiments, we set $\alpha=1$ by default. This makes the update coefficients bounded and positive.

Given these updated values, the first coordinate is updated by an affine transformation
\[
    p_{i+1}^{(1)}
    =
    u_i^{(\ell)} p_i^{(1)} + v_i^{(\ell)}.
\]
Equivalently,
\[
\begin{bmatrix}
p_{i+1}^{(1)}\\
1
\end{bmatrix}
=
\begin{bmatrix}
u_i^{(\ell)} & v_i^{(\ell)}\\
0 & 1
\end{bmatrix}
\begin{bmatrix}
p_i^{(1)}\\
1
\end{bmatrix}.
\]
The second coordinate is updated by the complementary increment
\[
    p_{i+1}^{(2)}
    =
    p_i^{(2)} + \bigl(1-u_i^{(\ell)}\bigr).
\]
Putting the two coordinates together, we can write the update as
\[
\begin{bmatrix}
p_{i+1}^{(1)}\\
p_{i+1}^{(2)}\\
1
\end{bmatrix}
=
\begin{bmatrix}
u_i^{(\ell)} & 0 & v_i^{(\ell)}\\
0 & 1 & 1-u_i^{(\ell)}\\
0 & 0 & 1
\end{bmatrix}
\begin{bmatrix}
p_i^{(1)}\\
p_i^{(2)}\\
1
\end{bmatrix}.
\]
Thus, the position ID of token $i$ is determined by the cumulative product of these learned affine transformations along the sequence.

This update rule can be viewed as a learnable relaxation of the line-break rule used in 2D-RoPE. For example, if $u_i^{(\ell)}\approx 1$ and $v_i^{(\ell)}\approx 1$, then
\[
    p_{i+1}^{(1)} \approx p_i^{(1)}+1,
    \qquad
    p_{i+1}^{(2)} \approx p_i^{(2)},
\]
which resembles moving to the next column within the same row. On the other hand, if $u_i^{(\ell)}\approx 0$ and $v_i^{(\ell)}\approx 0$, then
\[
    p_{i+1}^{(1)} \approx 0,
    \qquad
    p_{i+1}^{(2)} \approx p_i^{(2)}+1,
\]
which resembles resetting the column coordinate and moving to a new row. Therefore, Auto 2D-RoPE can in principle learn behavior similar to rule-based 2D-RoPE, but it is not restricted to using the newline token as the only row separator.

After the 2D position IDs are computed, we apply the same 2D-RoPE rotation as in \Cref{sec:main}. Specifically, for each layer $\ell$, the query and key vectors are rotated according to the learned coordinates
\[
    \bigl(p_i^{(1)},p_i^{(2)}\bigr),
\]
where one part of the head dimension encodes relative differences in the first coordinate, and the other part encodes relative differences in the second coordinate. The attention layer is then computed in the usual causal way:
\[
\Attn(\mH^{(\ell)})
=
\cS\left(
\MASK\left(
R_{\mathrm{auto}}(\mH^{(\ell)}\mQ)
\bigl(R_{\mathrm{auto}}(\mH^{(\ell)}\mK)\bigr)^\top
\right)
\right)
\mH^{(\ell)}\mV,
\]
where $R_{\mathrm{auto}}$ denotes the 2D-RoPE rotation using the learned Auto 2D position IDs.

Importantly, the quantities $a_i^{(\ell)}$ and $b_i^{(\ell)}$ are not independent learnable parameters tied to absolute positions. They are generated by a shared linear map from the hidden representation of each token. Therefore, the number of parameters does not grow with the input length, and the coordinate-generation rule can be applied to longer sequences at test time. This is the main difference between Auto 2D-RoPE and simply learning a fixed position ID table.

Empirically, this learned coordinate update allows the model to recover useful 2D structure even when the input does not contain explicit newline separators. As discussed in the main text, when we replace the newline token $\symn$ with another character $*$, rule-based 2D-RoPE can no longer identify row boundaries and effectively loses its 2D structure. In contrast, Auto 2D-RoPE can still learn data-dependent position IDs and shows stronger length generalization on the copy task, as shown in \Cref{fig:copy-task-acc}.

%% file: appendix_prel.tex
\section{Preliminary for Theoretical Results}
\label{sec:appendix_prel}
In this section, we formalize our general setting for our theory.

\textbf{Theoretical Definition of Copying.}
First, we define our copying task used in theory. We always have a $0/1$ string $s$ with length at least $1$ and a copying position $1\leq i\leq |s|$.
We fix the positive prompt lengths $(a,b)$.
The input of the model is a string generated by $s,i$, denoted by 
$$\Str(s,i):=(\texttt{A}_1,\cdots,\texttt{A}_a,s,\texttt{B}_1,\cdots,\texttt{B}_b, s_{:i}),$$
where $\texttt{A}_i,\, \texttt{B}_j$ are the different tokens other than 0,1.
Here $s_{:i}$ means the length $(i-1)$ prefix of $s$.

For this input, the last token is $s_{i-1}$ (and when $i=1$, the last token is $\symO$).
The vocabulary set is then defined as 
$$\cV = \{\texttt{A}_1,\texttt{A}_2, \cdots,\texttt{A}_a, \texttt{B}_1,\texttt{B}_2, \cdots,\texttt{B}_b,  0, 1\}.$$
We model the predictor as a function 
\( P_{\vtheta} : \mathcal{V}^+ \to [0,1]^2 \), 
parameterized by \( \vtheta \), which maps an input string 
\( x \in \mathcal{V}^+ \) to a distribution over the next token in \(\{0,1\}\). 
Specifically, for any input \( x \), the output 
\( P_{\vtheta}(x) = \big([P_{\vtheta}(x)]_0, [P_{\vtheta}(x)]_1\big) \) satisfies
\[
[P_{\vtheta}(x)]_0 + [P_{\vtheta}(x)]_1 = 1, 
\quad \text{and} \quad 
0 \le [P_{\vtheta}(x)]_i \le 1 \ \text{for } i \in \{0,1\}.
\]
Thus, \( P_{\vtheta}(x) \) defines a valid probability distribution over the next token.
For each pair of $s,i$, the corresponding loss is defined as
\[
\cL_{\vtheta}(s,i)=-\log \left[P_{\vtheta}(\Str(s,i))\right]_{s_i}.
\]
Now given a distribution $\cD$, the total loss is defined as
\[
\cL(\vtheta)=\E_{(s,i)\sim\cD}[\cL_{\vtheta}(s,i)].
\]
\begin{definition}[Successful Copy]\label{def:suc-copy}
We say a model can successfully do copying in length $L$ if for any $|s|\leq L$ and $1\leq i\leq |s|$, we have $\left[P_{\vtheta}(\Str(s,i))\right]_{s_i}>\frac{1}{2}$.
\end{definition}

\textbf{Formalization of RoPE.}
Given a dimension $d$, RoPE attention is defined as a function $\Attn:\R^{T\times d}\to\R^{T\times d}$ for any $T$ parameterized by $\mK,\mQ,\mV\in \R^{d\times d}$.
For any input $\mX$, the RoPE attention is defined as
\[
\Attn\bigl(\mX\bigr)=\cS\left(\MASK\left(R\left(\mX\mQ\right) \left(R\left(\mX\mK\right) \right)^\top\right)\right)  \mX\mV^\top 
  \;\in\;\R^{T \times d}.
\]
Here, $R:\R^{T\times d}\to \R^{T\times d}$ is the RoPE function. 
Specifically, for an input $\mX\in\R^{T\times d}$, the $i$-th row $X_i$ for $1\leq i\leq L$ is mapped to $X_iR_i^\top$, where
\[
R_i \;=\; 
\begin{bmatrix}
\cos(i\beta_1) & -\sin(i\beta_1) & & & \\
\sin(i\beta_1) & \cos(i\beta_1)  & & & \\
& & \cos(i\beta_2) & -\sin(i\beta_2) & \\
& & \sin(i\beta_2) & \cos(i\beta_2)  & \\
& & & & \ddots
\end{bmatrix}
\;\in\;\R^{d\times d}.
\]
Here, $\beta_j$ is the base frequency associated with the $(2j-1,2j)$ coordinate pair. 
In our setting, these $\beta_j$'s are arbitrary hyperparameters.
Thus, $R_i$ acts block-diagonally on $X_i$, rotating each two-dimensional subspace $(X_{i,2j-1}, X_{i,2j})$ by angle $i\beta_j$.
The causal mask operator $\MASK(\cdot)$ enforces the autoregressive constraint: for $\mZ\in\R^{T\times T}$,
\[
\MASK(\mZ)_{ij} \;=\; 
\begin{cases}
\mZ_{ij}, & j \leq i, \\[3pt]
-\infty, & j > i,
\end{cases}
\]
so that position $i$ cannot attend to any future position $j>i$.
The operator $\cS(\cdot)$ applies row-wise softmax normalization: for a matrix $\mA\in\R^{T\times T}$,
\[
\cS(\mA)_{ij}
\;=\;
\frac{\exp(\mA_{ij})}{\sum_{k=1}^L \exp(\mA_{ik})},
\quad 1\leq i,j\leq T.
\]

\section{Shift Invariant Positional Encoding Cannot Solve Binary Copy}

\label{sec:1Lnegative}
In this section, we prove that one-layer shift-invariant positional encoding cannot do copying as stated in \Cref{thm:1Lnegative}.
Remind that our general theoretical setting is in \Aref{sec:appendix_prel}.

\subsection{Shift Invariant PE Formalization}
First, we mathematically define what shift-invariant positional encoding is.

Similar to the RoPE definition in \Aref{sec:appendix_prel}, shift-invariant attention is also parameterized by $\mQ,\mK,\mV\in\R^d$.
For each $\mX\in \R^{T\times d}$, the attention first computes the key vectors and query vectors
\[
\vk_i=\mK^\top \mX_{i,:}^\top,\vq_i=\mQ^\top \mX_{i,:}^\top.
\]
Then, it computes a matrix $\mA\in\R^{T\times T}$ such that
\[
\mA_{i,j}=\left\{\begin{aligned}
&-\infty&\quad i<j\\
&f(i-j,\vq_i,\vk_j)&\quad i\geq j
\end{aligned}
\right.
\]
for some function $f:\R\times\R^d\times\R^d\to\R$.
Then, the output of the attention is
\[
\Attn(\mX)=\cS(\mA)\mX\mV^\top.
\]

When we treat this attention as a model for binary copy (as defined in \Aref{sec:appendix_prel}), we first have a token embedding $\wembed{\ttC}\in\R^d$ for each $\ttC\in\cV$.
For any input $s,i$, we convert the string $\Str(s,i)$ into the input to the attention as $\mX_{j,:}=\vw_{\Str(s,i)_j}^\top$ (denote this matrix as $\mX(s,i)\in\R^{(|s|+i-1+a+b)\times d}$, where $(|s|+i-1+a+b)$ is the length of $\Str(s,i)$).
We also add another matrix parameter $\mW_O\in\R^{d\times 2}$, and the model is defined as a map from $(s,i)$ to a 2-dimensional vector:
\begin{equation}\label{eq:shift-invariant-pe}
P_\vtheta(s,i)=\softmax\left((\Attn(\mX(s,i))\mW_O)_{|s|+i-1+a+b,:}\right).
\end{equation}

\subsection{Failure of Shift Invariant PE for Binary Copy}


In this subsection, we organize and provide proof for our arguments in \Cref{thm:1Lnegative}. First, to make our results more organized and easy to understand, we give a more detailed statement of \Cref{thm:1Lnegative}.
\begin{theorem}[Detailed Statement of \Cref{thm:1Lnegative}]\label{thm:1lnegative-formal}
    For binary copy with vocabulary set $\cV = \{\texttt{A}_1,\texttt{A}_2, \cdots,\texttt{A}_a, \texttt{B}_1,\texttt{B}_2, \cdots,\texttt{B}_b,  0, 1, \symn,\symO\}$, no one-layer Transformer with shift-invariant positional encodings in \Cref{eq:shift-invariant-pe} can successfully copy all input strings of length $n \le 5$. The definition of successful copy is following \Cref{def:suc-copy}.
\end{theorem}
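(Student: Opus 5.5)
The plan is to turn the one-layer model into a sign condition on a sum, and then exhibit a finite family of copy instances whose sign conditions cannot hold simultaneously. This is a Farkas-type certificate.

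First I reduce the model. Write $\vu:=\mW_O(\ve_1-\ve_0)$, so the prediction is $1$ iff the logit difference is positive. For an input $\mX=\mX(s,i)$ of length $T$, let $\vq$ be the query of the last token and $c_{\ttC}:=\vw_{\ttC}^\top\mV^\top\vu$. Set $g_{\vq}(\delta,\ttC):=\exp\bigl(f(\delta,\vq,\mK^\top\vw_{\ttC})\bigr)\,c_{\ttC}$. Since the softmax denominator is positive, the logit difference has the same sign as
\[
F(s,i):=\sum_{j=1}^{T} g_{\vq}\bigl(T-j,\ \mathrm{tok}_j\bigr).
\]
So $F(s,i)$ is a sum over positions of a quantity that depends only on the distance to the last token and the token sitting there, and $\vq$ depends only on the last token. \Cref{def:suc-copy} then requires $F(s,i)>0$ when $s_i=1$ and $F(s,i)<0$ when $s_i=0$.

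Next I fix the last token, say $0$, so that all instances with $i\ge 2$ and $s_{i-1}=0$ share one function $g:=g_{\vq}$. Each such instance $(s,i)$ contributes the multiset $M(s,i)$ of pairs $(\delta,\ttC)$, with $F(s,i)=\sum_{(\delta,\ttC)\in M(s,i)}g(\delta,\ttC)$. The layout is explicit. With $n=|s|$ and last position $p=a+n+b+i-1$:
\begin{itemize}
\item $\texttt{A}_j$ sits at distance $p-j$, which depends only on $n+i$;
\item $\texttt{B}_j$ sits at distance $b+i-1-j$ (offset $i$ relative to the end), which depends only on $i$;
\item $s_k$ appears at distance $n+b+i-1-k$ in the source and at distance $i-1-k$ in the copied prefix.
\end{itemize}
The goal is two families $\mathcal P$ (targets $1$) and $\mathcal N$ (targets $0$), all of length at most $5$ and all ending in $0$, satisfying the multiset identity
\[
\biguplus_{(s,i)\in\mathcal P}M(s,i)=\biguplus_{(s,i)\in\mathcal N}M(s,i).
\]
Summing $g$ over both sides then gives $\sum_{\mathcal P}F=\sum_{\mathcal N}F$. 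The left side is positive and the right side negative, which is a contradiction.

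To make the anchor tokens cancel, I pair instances with equal $n+i$ (so the $\texttt{A}$-block distances match) and equal $i$ (so the $\texttt{B}$-block distances match). In practice this means using the same $(n,i)$ on both sides. With $(n,i)$ fixed, everything is additive in the individual coordinates $s_k$, which alone cannot give a contradiction, since additive functions can separate by $s_i$. So the key step is to mix several $(n,i)$ with the same $i$ and the same $n+i$ multiplicity pattern. The relevant interaction is that a bit at source distance $n+b+i-1-k$ for one length coincides with a bit at a different index $k'$ for another length. For example, take two lengths $n$ and $n+1$ with suitable $i$. The pair $(\delta,\ttC)$ contributed by the target bit $s_i$ in one instance can then be reproduced by a non-target bit in another instance.

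I would set up the small integer linear system over strings of length $\le 5$ with last bit $0$. Its unknowns are the multiplicities of each instance, and its constraints say that every pair $(\delta,\ttC)$ has equal total count on the positive and negative sides. I expect a solution with a handful of instances, which can then be verified by hand.

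The main obstacle is exactly this step: finding an explicit cancelling certificate that respects both anchor blocks. If exact equality of the $\texttt{A}$-block terms proves impossible across different lengths, my fallback is to keep these terms as unknowns $g(\delta,\texttt{A}_j)$. I would then choose the families so that they occur with equal multiplicity on each side. For instance, each $n+i$ value should appear equally often in $\mathcal P$ and $\mathcal N$, and likewise each $i$ value; the binary-content terms are then matched using the flexibility from lengths up to $5$.
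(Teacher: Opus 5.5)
Your reduction to the sign of $F(s,i)=\sum_j g(T-j,\mathrm{tok}_j)$, with one function $g$ per last token, is correct, and so is your distance bookkeeping. The gap is the step you flag yourself, and it cannot be closed. For $b\ge 2$ there is no identity $\biguplus_{\mathcal P}M=\biguplus_{\mathcal N}M$ among last-token-$0$ instances of length at most $5$. The reason is that a single real-valued $g$ gives $F>0$ on every target-$1$ instance and $F<0$ on every target-$0$ instance, whereas any such identity would force $\sum_{\mathcal P}F=\sum_{\mathcal N}F$. Your fallback asks for the same balancing, so it does not help.

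Concretely, take $b\ge 3$. Place a source bit $s_k$ at distance $b+e$ with $e=n+i-1-k$. A copied bit $s_k$ with $k\le i-2$ sits at distance $d=i-1-k$ and has $e=n+d$. Set:
\begin{itemize}
\item $g(\cdot,0)\equiv 0$;
\item $g(b+e,1)=h_e$ with $(h_1,\dots,h_8)=(1,3,10,100,100,100,100,0)$;
\item $g(1,1)=g(2,1)=-100$ and $g(0,1)=g(3,1)=0$.
\end{itemize}
Then the two occurrences of each $s_k$ with $k\le i-2$ cancel, since $h_{n+d}+g(d,1)=0$. Also $s_{i-1}=0$, so
\[
F(s,i)=\Gamma(n,i)+h_{n-1}s_i+\sum_{k=i+1}^{n}h_{n+i-1-k}\,s_k,
\]
where the anchor contribution $\Gamma(n,i)=\alpha(n+i)+\beta(i)$ can be any function of this additive form. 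Since $k>i$ gives $e\le n-2$ and $h$ is superincreasing, it suffices that $\Gamma(n,i)\in(-h_{n-1},-\sum_{e\le n-2}h_e)$. These intervals are $(-1,0)$, $(-3,-1)$, $(-10,-4)$, $(-100,-14)$ for $n=2,3,4,5$. Taking $(\beta(2),\dots,\beta(5))=(0,7,50,0)$ and $(\alpha(4),\dots,\alpha(10))=(-\tfrac12,-2,-9,-16,-57,-100,-50)$ satisfies this for all ten pairs $(n,i)$. For $b=2$ the only new coincidence of distances is $g(3,1)=h_1$; take $h_8=-1$ instead.

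So for $b\ge 2$ the statement itself is false, with $f$ arbitrary as in the formal definition. The remaining cases go through as follows:
\begin{itemize}
\item Last token $1$ follows by bit-flip symmetry.
\item The case $i=1$ is easy, since $s_1$ then sits at the largest binary distance: use weights $10^{\delta-b}$ and threshold $-10^{n-1}/2$ on $\texttt{A}_1$.
\item Realizability requires $g_{\vq}(\cdot,\ttC)$ to have the sign of $c_{\ttC}$ for every query. Add a large constant to all binary scores, compensated on $\texttt{A}_1$, whose distance determines the number $n+i-1$ of binary tokens. Then shift constants among the anchor tokens, each of which occurs exactly once per input.
\item Finally take $f=\log|g|$.
\end{itemize}

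The paper's proof breaks at the same point. Its second pair $\Str(100,3)=(\dots,\texttt{B}_b,1,0)$ and $\Str(001,3)=(\dots,\texttt{B}_b,0,0)$ also differ in the copied prefix at distance $1$, and the proof drops that term. The $g$ above, via $g(1,1)=-h_4$, classifies all four of its instances correctly.

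Your certificate framework is the right tool, and it does work when $b=1$, where source and copied distances collide. There the target-$1$ instances $(10100,3),(001,3),(0001,4),(01,2)$ and the target-$0$ instances $(00011,3),(100,3),(1000,4),(00,2)$ form an exact multiset identity. For general $b$, however, the certificate your argument relies on does not exist.
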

\paragraph{Notations.} We first recall some notations used in \Aref{sec:appendix_prel} and introduce some other necessary notations. We denote the input prefix string $(\texttt{A}_1,\texttt{A}_2, \cdots,\texttt{A}_a, s,\texttt{B}_1,\texttt{B}_2, \cdots,\texttt{B}_b, s_{:i})$ as $\Str(s,i)$. We consider the last token of $\Str(s,i)$ and denote it by $C(s,i)$.
For each word $\ttC\in \cV$, notice that $\wembed{\ttC}\mV^\top\mW_O = (x_1,x_2)^\top$ is a 2-dimensional vector, we denote $\Delta_\ttC := x_1 - x_2$ as the first coordinate minus the second coordinate of it.
Moreover, we define $\qembed{\ttC} :=\wembed{\ttC}\mQ^\top\in\R^{1 \times d},\kembed{\ttC}:=\wembed{\ttC}\mK^\top\in\R^{1 \times d}$.
To avoid confusion, when we refer to $\vq_{\ttC}$ and $\vk_{\ttC}$ for $\ttC=0,1$, we write $\vq_{(0)}$ and $\vq_{(1)}$, whose subscripts are not $0$, $1$ but $(0)$, $(1)$ to indicate that the index is a token rather than a position.

For any input $s,i$, we convert the string $\Str(s,i)$ into the input to the attention as $\mX_{j,:}=\vw_{\Str(s,i)_j}^\top$. We denote this matrix as $\mX(s,i)\in\R^{(|s|+i-1+a+b)\times d}$, where $(|s|+i-1+a+b)$ is the length of $\Str(s,i)$.
We also add another matrix parameter $\mW_O\in\R^{d\times 2}$, and the model is defined as a map from $(s,i)$ to a 2-dimensional vector:
\begin{equation}\label{eq:shift-invariant-pe}
P_\vtheta(s,i)=\softmax\left((\Attn(\mX(s,i))\mW_O)_{(|s|+i-1+a+b),:}\right).
\end{equation}

Therefore, a model can successfully do binary copy at the $i$-th index as $\left[P_{\vtheta}(\Str(s,i))\right]_{s_i}>\frac{1}{2}$ has an equivalent criterion as the following lemma states.
\begin{lemma}\label{lm:eqi-form}
The inequality 
\begin{align*}
    \left[P_{\vtheta}(\text{\Str}(s,i))\right]_{s_i}>\frac{1}{2}
\end{align*}
is equivalent to that:
\begin{equation}
\label{eq:1Lfail}
(-1)^{s_i}\sum_{j=1}^{|s|+i-1+a+b}\exp(\mA_{|s|+i-1+a+b,j})\Delta_{\text{\Str}(s,i)_j}>0,
\end{equation}
where $\mA$ is the attention matrix for input $\mX(s,i)$ and $s_i \in \{0,1\}$.
\end{lemma}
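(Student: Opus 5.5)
We unfold the definition of $P_{\vtheta}$ and reduce the two-class softmax to a sign condition on a single logit difference. Write $T := |s|+i-1+a+b$ for the length of $\Str(s,i)$, so that the prediction is read off row $T$. By \Cref{eq:shift-invariant-pe}, $P_{\vtheta}(s,i)=\softmax(z)$ with
\[
z=(\Attn(\mX(s,i))\mW_O)_{T,:}=\sum_{j=1}^{T}\cS(\mA)_{T,j}\,\vw_{\Str(s,i)_j}^\top\mV^\top\mW_O .
\]
This holds because the $T$-th row of $\cS(\mA)\mX\mV^\top$ is the attention-weighted sum of the rows $\mX_{j,:}\mV^\top=\vw_{\Str(s,i)_j}^\top\mV^\top$. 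Here we read the row vector $\wembed{\ttC}\mV^\top\mW_O$ in the notation section as $\vw_{\ttC}^\top\mV^\top\mW_O$.

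First, I use the elementary fact about a softmax over two classes: $[\softmax(z)]_0>\tfrac12$ if and only if $z_1-z_2>0$, and $[\softmax(z)]_1>\tfrac12$ if and only if $z_1-z_2<0$. Indexing token $0$ by the first coordinate and token $1$ by the second, both cases combine into the single condition $(-1)^{s_i}(z_1-z_2)>0$.

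Second, I compute $z_1-z_2$ by linearity. Since $\Delta_\ttC$ is by definition the first coordinate minus the second of $\vw_{\ttC}^\top\mV^\top\mW_O$,
\[
z_1-z_2=\sum_{j=1}^{T}\cS(\mA)_{T,j}\,\Delta_{\Str(s,i)_j}
=\frac{1}{Z}\sum_{j=1}^{T}\exp(\mA_{T,j})\,\Delta_{\Str(s,i)_j},
\qquad Z=\sum_{k=1}^{T}\exp(\mA_{T,k})>0 .
\]
Causal masking is harmless here. Row $T$ is the last row, so every $j\le T$ is unmasked, every entry is finite, and $Z$ is strictly positive.

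Finally, multiplying by $Z>0$ does not change the sign, which gives exactly \Cref{eq:1Lfail}. The only care needed is bookkeeping: matching the coordinate order of $\mW_O$ with the token labels $0,1$, and the transpose convention in $\wembed{\ttC}\mV^\top\mW_O$. I do not expect a genuine obstacle.
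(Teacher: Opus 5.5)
Your proposal is correct and follows essentially the same route as the paper. You reduce the two-class softmax condition to $(-1)^{s_i}(z_0-z_1)>0$, expand $z_0-z_1$ by linearity as an attention-weighted sum of the $\Delta_{\Str(s,i)_j}$, and drop the strictly positive softmax normalizer; your remark that the last row is fully unmasked matches the paper's own observation.
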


\begin{proof}[Proof of \Cref{lm:eqi-form}]
    Let 
\[
    T:=|\Str(s,i)|=|s|+i-1+a+b,
    \qquad x:=\Str(s,i).
\]
Write the two logits at the last position as
\[
    \vz=(z_0,z_1)
    :=(\Attn(\mX(s,i))\mW_O)_{T,:}.
\]
By the definition of the model in \cref{eq:shift-invariant-pe},
\[
    P_{\vtheta}(\Str(s,i))
    =
    \softmax(\vz).
\]
Since the output space only contains the two binary tokens \(0\) and \(1\), for any
\(b\in\{0,1\}\) we have
\[
    [\softmax(\vz)]_b
    =
    \frac{\exp(z_b)}{\exp(z_0)+\exp(z_1)}.
\]
Therefore, if \(s_i=0\), then
\begin{align*}
     [P_{\vtheta}(\Str(s,i))]_0>\frac12
    &\quad\Longleftrightarrow\quad
    \frac{\exp(z_0)}{\exp(z_0)+\exp(z_1)}>\frac12\\
    &\quad\Longleftrightarrow\quad
    \exp(z_0)>\exp(z_1)\\
    &\quad\Longleftrightarrow\quad
    z_0-z_1>0.
\end{align*}
Similarly, if \(s_i=1\), then
\[
    [P_{\vtheta}(\Str(s,i))]_1>\frac12
    \quad\Longleftrightarrow\quad
    z_1-z_0>0
    \quad\Longleftrightarrow\quad
    z_0-z_1<0.
\]
Combining the two cases, and using \(s_i\in\{0,1\}\), we obtain the equivalent condition
\[
    (-1)^{s_i}(z_0-z_1)>0.
\]

It remains to rewrite \(z_0-z_1\) in terms of the attention scores. By the definition of the attention layer,
\[
    \Attn(\mX(s,i))_{T,:}
    =
    \sum_{j=1}^{T}
    \cS(\mA)_{T,j}\,
    \vw_{x_j}\mV^\top,
\]
where \(x_j=\Str(s,i)_j\). Since the last position \(T\) can causally attend to all positions
\(1,\ldots,T\), the softmax-normalized attention weight is
\[
    \cS(\mA)_{T,j}
    =
    \frac{\exp(\mA_{T,j})}
    {\sum_{\ell=1}^{T}\exp(\mA_{T,\ell})}.
\]
Moreover, under the shift-invariant attention model, each attention score in this last row has the form
\[
    \mA_{T,j}
    =
    f(T-j,\vq_{x_T},\vk_{x_j})
    =
    f(T-j,\vq_{C(s,i)},\vk_{\Str(s,i)_j}),
\]
where \(C(s,i)=x_T\) is the last token of \(\Str(s,i)\).

Now multiplying the last hidden vector by \(\mW_O\), we get
\[
    \vz
    =
    \sum_{j=1}^{T}
    \cS(\mA)_{T,j}\,
    \vw_{x_j}\mV^\top\mW_O.
\]
For each token \(\ttC\in\cV\), recall that
\[
    \Delta_{\ttC}
    =
    \bigl(\wembed{\ttC}\mV^\top\mW_O\bigr)_0
    -
    \bigl(\wembed{\ttC}\mV^\top\mW_O\bigr)_1 .
\]
Thus,
\[
    z_0-z_1
    =
    \sum_{j=1}^{T}
    \cS(\mA)_{T,j}
    \Delta_{x_j}
    =
    \frac{
    \sum_{j=1}^{T}
    \exp(\mA_{T,j})\Delta_{x_j}
    }{
    \sum_{\ell=1}^{T}\exp(\mA_{T,\ell})
    }.
\]
The denominator is strictly positive, so it does not affect the sign. Hence
\[
    (-1)^{s_i}(z_0-z_1)>0
    \quad\Longleftrightarrow\quad
    (-1)^{s_i}
    \sum_{j=1}^{T}
    \exp(\mA_{T,j})\Delta_{x_j}>0.
\]
Substituting \(T=|s|+i-1+a+b\) and \(x_j=\Str(s,i)_j\) gives exactly
\[
    (-1)^{s_i}
    \sum_{j=1}^{|s|+i-1+a+b}
    \exp(\mA_{|s|+i-1+a+b,j})\Delta_{\Str(s,i)_j}>0.
\]
This proves the equivalence.
\end{proof}
With the above lemma, now we are ready to give the proof of \Cref{thm:1lnegative-formal} by contradiction.
\begin{proof}[Proof of \Cref{thm:1lnegative-formal}]
Now, assume that the model can successfully solve binary copy with any string of length at most $5$.
We prove it cannot be true by contradiction.
We consider the following four specific pairs of $(s,i)$:
\begin{itemize}
\item $s_1=(00010),i_1=2$
\item $s_2=(01000),i_2=2$
\item $s_3=(100),i_3=3$
\item $s_4=(001),i_4=3$.
\end{itemize}
We first consider
\begin{align*}
u&:=\Str(s_1,i_1)=(\texttt{A}_1,\texttt{A}_2, \cdots,\texttt{A}_a,0,0,0,1,0,\texttt{B}_1,\texttt{B}_2, \cdots,\texttt{B}_b,0),\\
v&:=\Str(s_2,i_2)=(\texttt{A}_1,\texttt{A}_2, \cdots,\texttt{A}_a,0,1,0,0,0,\texttt{B}_1,\texttt{B}_2, \cdots,\texttt{B}_b,0).
\end{align*}
By our assumption that the model can successfully solve binary copy with any string of length at most $5$ and \Cref{lm:eqi-form}, \cref{eq:1Lfail} holds for $u$ and $v$. We further rewrite \cref{eq:1Lfail} for these two strings as
\begin{align*}
&\sum_{\substack{1\leq j\leq 6+a+b\\j\neq a+2,a+4}}\exp(f(6+a+b-j,\vq_{(0)},\vk_{u_{j}}))\Delta_{u_j}+\exp(f(4+b,\vq_{(0)},\vk_{(0)}))\Delta_{(0)}\\
&+\exp(f(2+b,\vq_{(0)},\vk_{(1)}))\Delta_{(1)}> 0,\\
&\sum_{\substack{1\leq j\leq 6+a+b\\j\neq a+2,a+4}}\exp(f(6+a+b-j,\vq_{(0)},\vk_{v_{j}}))\Delta_{v_j}+\exp(f(4+b,\vq_{(0)},\vk_{(1)}))\Delta_{(1)}\\
&+\exp(f(2+b,\vq_{(0)},\vk_{(0)}))\Delta_{(0)}< 0.
\end{align*}
The above inequalities use the property of our shift-invariant model that $$\mA_{|s|+i-1+a+b,j}=f(|s|+i-1+a+b-j,\vq_{|s|+i-1+a+b},\vk_j).$$
By comparing the above two formulas, and noticing that $u_j=v_j$ for $j\neq 4,6$, we have
\begin{equation}
\label{eq:L1negativemiddle}
\begin{aligned}
&\exp(f(4+b,\vq_{(0)},\vk_{(0)}))\Delta_{(0)}+\exp(f(2+b,\vq_{(0)},\vk_{(1)}))\Delta_{(1)}\\
>&\exp(f(4+b,\vq_{(0)},\vk_{(1)}))\Delta_{(1)}+\exp(f(2+b,\vq_{(0)},\vk_{(0)}))\Delta_{(0)}.
\end{aligned}
\end{equation}

Now, we similarly compare \cref{eq:1Lfail}, and denote
\begin{align*}
u'&:=\Str(s_3,i_3)=(\texttt{A}_1,\texttt{A}_2, \cdots,\texttt{A}_a,1,0,0,\texttt{B}_1,\texttt{B}_2, \cdots,\texttt{B}_b,1,0),\\
v'&:=\Str(s_4,i_4)=(\texttt{A}_1,\texttt{A}_2, \cdots,\texttt{A}_a,0,0,1,\texttt{B}_1,\texttt{B}_2, \cdots,\texttt{B}_b,0,0).
\end{align*}
We have
\begin{align*}
&\sum_{\substack{1\leq j\leq 5+a+b\\j\neq a+1,a+3}}\exp(f(5+a+b-j,\vq_{(0)},\vk_{u_{j}'}))\Delta_{u_j'}+\exp(f(4+b,\vq_{(0)},\vk_{(1)}))\Delta_{(1)}\\
&+\exp(f(2+b,\vq_{(0)},\vk_{(0)}))\Delta_{(0)}> 0,\\
&\sum_{\substack{1\leq j\leq 5+a+b\\j\neq a+1,a+3}}\exp(f(5+a+b-j,\vq_{(0)},\vk_{v_{j}'}))\Delta_{v_j'}+\exp(f(4+b,\vq_{(0)},\vk_{(0)}))\Delta_{(0)}\\
&+\exp(f(4+b,\vq_{(0)},\vk_{(1)}))\Delta_{(1)}< 0.
\end{align*}
Therefore,
\begin{align*}
&\exp(f(b+4,\vq_{(0)},\vk_{(0)}))\Delta_{(0)}+\exp(f(b+2,\vq_{(0)},\vk_{(1)}))\Delta_{(1)}\\
<&\exp(f(b+4,\vq_{(0)},\vk_{(1)}))\Delta_{(1)}+\exp(f(b+2,\vq_{(0)},\vk_{(0)}))\Delta_{(0)},
\end{align*}
which contradicts to \cref{eq:L1negativemiddle}.
Thus, we proved \cref{thm:1lnegative-formal}.
\end{proof}

\section{2-Layer RoPE Transformer can Represent Binary Copy}
\label{subsec:ropepositive}
In this section, we show the existence of a Transformer that can exactly perform binary copy for sequence lengths significantly larger than the norm constraint.



\subsection{Prompt Template}

We recall the binary copy task in \Aref{sec:appendix_prel} defined over the vocabulary set 
\[
\cV = \{\texttt{A}_1,\texttt{A}_2, \cdots,\texttt{A}_a, \texttt{B}_1=\symn,\texttt{B}_2, \cdots,\texttt{B}_b,  0, 1\}.
\]
The full copy task is specified by a binary string $s\in\{0,1\}^*$. In the next-token prediction
formulation, copying $s$ is decomposed into predicting each copied token one by one. More precisely,
when we analyze the prediction of the $i$-th copied token, where $1\leq i\leq |s|$, the model has only
seen the source string $s$ and the already generated prefix of the output string. We denote this prefix
by $s_{:i}$, which is the length $(i-1)$ prefix of $s$. Thus, the input sequence seen by the model at
this prediction step is
\[
\Str(s,i)
:=
(\texttt{A}_1,\texttt{A}_2, \cdots,\texttt{A}_a,\, s,\,\texttt{B}_1,\texttt{B}_2, \cdots,\texttt{B}_b,\, s_{:i}).
\]
The target token for this input is $s_i$.
Therefore, when $i=1$, the output prefix $s_{:i}$ is empty and the last token in
$\Str(s,i)$ is $\texttt{A}_a$; when $i>1$, the last token is the previously copied token $s_{i-1}$.

Equivalently, for any strict prefix $s'$ of $s$, the model input has the form
\[
(\texttt{A}_1,\texttt{A}_2, \cdots,\texttt{A}_a,\, s,\,\texttt{B}_1,\texttt{B}_2, \cdots,\texttt{B}_b, s'),
\]
and the objective is to predict the next token of $s$ after the prefix $s'$. In the notation above,
$s'=s_{:i}$ and the target output is $s_i$.

\subsection{Transformer Architecture}
We consider the following simplified transformer architecture. 

Let the RoPE dimension $d>50$ be an even number. The embeddings $\wembed{0}, \wembed{1}, \wembed{A1}, \wembed{A2}, \wembed{B1}, \wembed{B2}, \wembed{\symO}, \wembed{\symn}\in \R^d$ are unit vectors and orthogonal to each other. The input is converted to $\mX^{(0)}\in\R^{T\times d}$, where $T$ is the total input length and $d$ is the embedding size.
Then we have one RoPE attention layer
\[
  \mH^{(1)} 
\;=\;\Attn_{1}\bigl(\mX^{(0)}\bigr)
    =\cS\left(
    \MASK \left(
        R\left(\mX^{(0)}\mQ^{(0)}\right) 
        \left(R\left(\mX^{(0)}\mK^{(0)}\right) \right)^\top
    \right)
    \right)  \mX^{(0)}(\mV^{(0)})^\top,
\]
where $\mH^{(1)} \in\R^{T \times d}$. Here, $\mK^{(0)},\mQ^{(0)},\mV^{(0)}\in\R^{d\times d}$, and $R:\R^{T\times d}\to \R^{T\times d}$ is the RoPE function. 
Specifically, for an input $\mX\in\R^{T\times d}$, the $i$-th row $X_i$ for $1\leq i\leq T$ is mapped to $X_iR_i^\top$, where
\[
R_i \;=\; 
\begin{bmatrix}
\cos(i\beta_1) & -\sin(i\beta_1) & & & \\
\sin(i\beta_1) & \cos(i\beta_1)  & & & \\
& & \cos(i\beta_2) & -\sin(i\beta_2) & \\
& & \sin(i\beta_2) & \cos(i\beta_2)  & \\
& & & & \ddots
\end{bmatrix}
\;\in\;\R^{d\times d}.
\]
Here, $\beta_j$ is the base frequency associated with the $(2j-1,2j)$ coordinate pair. 
In our setting, these $\beta_j$'s are arbitrary hyperparameters.
Thus, $R_i$ acts block-diagonally on $X_i$, rotating each two-dimensional subspace $(X_{i,2j-1}, X_{i,2j})$ by angle $i\beta_j$.
The causal mask operator $\MASK(\cdot)$ enforces the autoregressive constraint: for $\mZ\in\R^{T\times T}$,
\[
\MASK(\mZ)_{ij} \;=\; 
\begin{cases}
\mZ_{ij}, & j \leq i, \\[3pt]
-\infty, & j > i,
\end{cases}
\]
so that position $i$ cannot attend to any future position $j>i$.
The operator $\cS(\cdot)$ applies row-wise softmax normalization: for a matrix $\mA\in\R^{T\times T}$,
\[
\cS(\mA)_{ij}
\;=\;
\frac{\exp(\mA_{ij})}{\sum_{k=1}^L \exp(\mA_{ik})},
\quad 1\leq i,j\leq T.
\]

Then, the attention layer output come through a normalization layer (here we don't consider MLP layer for simplicity), and then added to $\mX^{(0)}$ by a residual link. Formally, we have
\begin{align*}
    \bar{\mH}^{(1)} &= \RMSNorm(\mH^{(1)}) \\
    \mX^{(1)} &= \mX^{(0)} + \bar{\mH}^{(1)}
\end{align*}
where
\begin{align*}
    \RMSNorm([\vh_1, \dots, \vh_L]^\top) = \left[\frac{\lambda \, \vh_1}{\|\vh_1\|_2}, \dots, \frac{\lambda \, \vh_T}{\|\vh_T\|_2}\right]^\top.
\end{align*} 

The second attention layer doesn't use positional encoding. Formally,
\[
  \mH^{(2)} 
\;=\;\Attn_{2}\bigl(\mX^{(1)}\bigr)
    =\cS\left(
    \MASK \left(
        (\mX^{(1)}(\mK^{(1)})^\top) 
        \left((\mX^{(1)}(\mQ^{(1)})^\top) \right)^\top
    \right)
    \right)  \mX^{(1)}(\mV^{(1)})^\top 
  \;\in\;\R^{T \times d}.
\]
Finally, the hidden representation $\mH^{(2)} \in \R^{T \times d}$ is projected to the vocabulary space 
via a linear map $\mW_O \in \R^{d \times |\cV|}$:
\[
  \mY \;=\; \mH^{(2)} \mW_O \;\in\; \R^{T\times |\cV|},
\]
where $\mY$ contains the output logits. 

Let $\cV^+ = \cV \cup \cV^2 \cup \cV^3 \cup \cdots$ denote the set of all nonempty finite sequences over $\cV$. 
We denote the Transformer architecture as a function
\[
F_{\vtheta}^{(d,\vbeta)} : \cV^+ \to \R^{L \times |\cV|},\quad  
F_{\vtheta}^{(d,\vbeta)}(x)=\mY,
\]
where $x \in \cV^+$ is the input sequence, 
$\vtheta$ are the learnable parameters, 
$d$ is the embedding dimension, and $\vbeta$ denotes the hyperparameters $\beta_{i},1\leq i\leq d/2$.

The prediction distribution is obtained by applying the row-wise softmax:
\begin{align*}
    P_{\vtheta}^{(d,\vbeta)}(x) 
    &= \softmax\!\left(F_{\vtheta}^{(d,\vbeta)}(x)\right), \\[4pt]
    \left[P_{\vtheta}^{(d,\vbeta)}(x)\right]_{i,j} 
    &= \frac{\exp\!\left(F_{\vtheta}^{(d,\vbeta)}(x)_{i,j}\right)}
            {\sum_{k=1}^{|\cV|} \exp\!\left(F_{\vtheta}^{(d,\vbeta)}(x)_{i,k}\right)},
\end{align*}
where $x \in \cV^+$ is the input sequence, $i$ indexes positions, 
and $j$ indexes vocabulary symbols.

\subsection{Proof of \Cref{thm:2L-RoPE-represent}: Existence of a Length-Generalizing Solution}
In this section, we show the existence of a transformer architecture that can solve binary copy for length larger than some polynomial of the parameter norm as stated in \Cref{thm:2L-RoPE-represent}. We first give a detailed statement of \Cref{thm:2L-RoPE-represent}, and we prove it directly.

\begin{theorem}[Detailed Statement of \Cref{thm:2L-RoPE-represent}]\label{thm:2L-RoPE-represent-formal}
Let $a\geq 2,b\geq 2$ be constant.
There exists a constant embedding dimension $d$ such that for any norm constraint $\rho > 10000$, there exists RoPE freqencies $\vbeta$ and a set of parameters $\vtheta \in \{ \normtwo{\vtheta} \le \rho \}$ such that Transformer with embedding dimension $d$, RoPE freq $\vbeta$, parameters $\vtheta$ can perfectly perform copying for all lengths $1 \le L \le L_0 := \rho^2/(1000\log\rho)$.
More precisely, for every such $s$ and every $i\in[|s|]$, we have
\begin{align*}
    \argmax_{j \in \cV} \left[ P_{\vtheta}^{(d,\vbeta)}(\texttt{A}_1,\texttt{A}_2, \cdots,\texttt{A}_a,\, s,\,\texttt{B}_1,\texttt{B}_2, \cdots,\texttt{B}_b,\, s) \right]_{|s|+a+b+i-1, j} = s_i.
\end{align*}
\end{theorem}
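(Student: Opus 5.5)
The plan is to build an explicit \emph{position-based} two-layer circuit. The first RoPE layer writes into each binary token a positional feature that is anchored to the nearest delimiter block: a source token at position $a+k$ is anchored to the prefix $\texttt{A}_{a-1},\texttt{A}_a$, and an output token predicting $s_i$ is anchored to $\texttt{B}_{b-1},\texttt{B}_b$. The second, position-free layer then matches these features by a dot product, and places most of its attention on the source token carrying the same anchored offset. Because the anchors sit immediately before the source string and immediately before the copy, the two offsets that must be matched differ by a constant independent of $|s|$. This is how the length dependence that defeats one layer in \Cref{thm:1lnegative-formal} is removed. The embeddings are orthonormal, the matrices act on a constant number of coordinates, and $d$ is a fixed constant.

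\textbf{Layer 1.} I will use a single small RoPE frequency $\beta\approx c/L_0$ on one $2$-dimensional block; all other blocks are unused. The query of every binary token points at the keys of the four anchors with magnitude of order $\rho$. The RoPE score to an anchor at relative distance $t$ is then $\approx\rho\cos(t\beta)$, which is monotone decreasing in $t$ on the range $t\le L_0+b$. The value matrix sends $\texttt{A}_{a-1},\texttt{B}_{b-1}$ to a vector $e_1$ and $\texttt{A}_a,\texttt{B}_b$ to a vector $e_2$.
\begin{itemize}
\item For the source token at offset $k$, the softmax splits mass between its two A-anchors in a ratio that depends only on $k$.
\item For the output query at offset $i$, the same holds for its two B-anchors.
\end{itemize}
A source token also sees no B-anchors, while an output token does see the A-anchors, but at distance larger than $|s|$. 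To suppress those, I add a second head, or a coordinate of the same head, with strongly decaying preference, so that an output token is dominated by its nearest block. After $\RMSNorm$ each binary token therefore carries a unit vector $u(k)=(\cos\phi(k),\sin\phi(k))$ in $\mathrm{span}(e_1,e_2)$, where $\phi$ is strictly monotone. I will also reserve one coordinate that records whether a B-anchor was seen, so that output-region tokens are flagged. I will check that $\phi(k+1)-\phi(k)\gtrsim 1/L_0$ uniformly in $k\le L_0$.

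\textbf{Layer 2.} The query of the last position is the reflection or shift that maps $u(i)$ to the feature of the correct source. Its scores are $\lambda\rho\cos(\phi(i)-\phi(k))$, plus a large negative bias on keys with the output flag and on delimiter keys. The maximum is therefore at $k=i$, and any wrong $k$ loses at least $\Omega(\rho^2(\phi\text{-gap})^2)$ in score. For the correct source to receive more than half of the attention mass, it suffices that this gap exceeds $\log(2L)$. With $\phi$-spacing of order $1/L$ and score scale $\rho^2$, this needs $\rho^2/L^{2}\gtrsim\log L$.

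To reach the claimed $L_0=\rho^2/(1000\log\rho)$, I instead want the score gap to scale like $\rho^2/L$ rather than $\rho^2/L^2$. My first attempt is to use the near-linear regime of the logistic ratio: place $\phi$ in a range where $1-\cos$ is replaced by a linear term, using a bias direction so that the score is $\rho^2\langle u(i),w\rangle$ minus a term peaked at $i$. If that fails, I would accept a weaker exponent and re-tune $\beta$.

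The value matrix maps $\wembed{0}\mapsto+v$ and $\wembed{1}\mapsto-v$, and $\mW_O$ reads this direction into the logits of $0$ and $1$. A majority attention mass on $s_i$, together with near-zero value on the anchors, then gives the correct argmax. Finally I will sum all the squared entries to verify $\normtwo{\vtheta}\le\rho$.

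\textbf{Main obstacle.} I expect the hardest step to be the quantitative margin in layer 2: obtaining a score separation of order $\log L$ between $k=i$ and its neighbours $k=i\pm1$ at $L\sim\rho^2/\log\rho$. This requires controlling the second-order Taylor behaviour of $\cos(t\beta)$ in layer 1 together with the $\RMSNorm$ nonlinearity, so that the feature $\phi$ has uniformly spaced increments.
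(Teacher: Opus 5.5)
Your architecture matches the paper's: layer~1 mixes two anchor values with an offset-dependent ratio, $\RMSNorm$ puts the feature on a circle, and layer~2 matches output against source features by a dot product. But the quantitative step that yields $L_0=\rho^2/(1000\log\rho)$ is missing. By your own count you only reach $L\sim\rho/\sqrt{\log\rho}$.

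The remedy you propose cannot work. Suppose the source features are unit vectors $u(k)$ in a fixed plane. Then for any query vector $w$ and any interior $i$, the smaller of the two gaps $\langle w,u(i)-u(i\pm1)\rangle$ is at most $\|w\|(1-\cos\delta)=O(\|w\|\delta^2)$, where $\delta$ is the larger adjacent angular spacing. So a gap linear in the spacing is impossible, and ``accepting a weaker exponent'' does not prove the statement.

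What you missed is that the layer-2 logit is not of size $\lambda\rho$. The $\RMSNorm$ gain $\lambda$ multiplies both the query-side and the key-side hidden state, and $\mK^{(1)},\mQ^{(1)}$ enter through the product $(\mK^{(1)})^\top\mQ^{(1)}$. The logit is therefore degree four in parameters whose squares merely add in $\normtwo{\vtheta}^2$. The paper takes $\lambda=10\sqrt{L_0\log L_0}$ and $(\mK^{(1)})^\top\mQ^{(1)}=100L_0(\vv_{\texttt{A}_{a-1}}\vv_{\texttt{B}_b}^\top+\vv_{\texttt{A}_a}\vv_{\texttt{B}_{b-1}}^\top)$. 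That is a logit scale of $10^4L_0^2\log L_0$ at squared norm $O(L_0\log L_0)\le\rho^2$. The purely quadratic separation $1-\langle\vh_i,\vh_j\rangle\ge 1/(72L_0^2)$ then already gives a margin above $100\log L_0$. Quadratic closeness was never the obstacle; the logit budget was.

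Layer~1 as you describe it also fails the spacing you plan to check. If both anchors of a pair receive score $\approx\rho\cos(t\beta)$ with $\beta\approx c/L_0$, the log-ratio at offset $k$ is $\rho[\cos(k\beta)-\cos((k+1)\beta)]=O(\rho/L_0)=O(\log\rho/\rho)$. It changes by only $O(\rho/L_0^2)\ll 1/L_0$ per step, so $\phi$ is essentially constant.

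The paper does two things differently:
\begin{itemize}
\item It gives all four anchors a non-rotating key component via a zero frequency $\beta_1=0$: size $\alpha=\rho/10$ on the A-anchors and $2\alpha$ on the B-anchors.
\item It puts a unit rotating component, in a block with $\beta_2=\pi/(4L_0)$, on only one anchor of each pair.
\end{itemize}
The log-ratio is then exactly $\sin(k\beta_2)\in[0,\sin(\pi/4)]$, increasing by at least $1/(2L_0)$ per step. The $\alpha$-versus-$2\alpha$ offset suppresses the A-anchors for output tokens up to $e^{-\alpha}$ leakage, so no second head is needed (the layers here are single-head).

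This design also lets the rotating anchor sit at matched distance: $\texttt{A}_a$ from $s_i$ and $\texttt{B}_{b-1}$ from the query predicting $s_i$ are both at distance $i$. Layer~2 is then a pure relabeling $\vv_{\texttt{A}_{a-1}}\leftrightarrow\vv_{\texttt{B}_b}$, $\vv_{\texttt{A}_a}\leftrightarrow\vv_{\texttt{B}_{b-1}}$. With both anchors rotating, as in your design, the distance pairs are $\{i,i+1\}$ for the source and $\{i-1,i\}$ for the output. Your output feature is thus $u(i-1)$ rather than $u(i)$, and since $\phi$ is not affine in $k$, no fixed rotation realigns every $i$ exactly.
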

\begin{proof}[Proof of \Cref{thm:2L-RoPE-represent-formal}]

We fix $d=10a+10b+100$. Let $L_0=\frac{\rho^2}{1000\log\rho},\alpha=\frac{\rho}{10}.$
We prove the existence of the parameters that can perfectly do copy for every binary string $s$ with length $1\leq n:=|s|\leq L_0$.

We choose the RoPE frequencies as follows. Let $\beta_1=0$ and $\beta_2=\frac{\pi}{4L_0}$. The remaining frequencies can be chosen arbitrarily. The proof below only uses $\beta_2$.


As there are only $(a+b+2)$ kinds of token, there exists $\mK^{(0)},\mQ^{(0)}$ such that:
\begin{align*}
&\mK^{(0)}\vw_{\texttt{A}_{a-1}} = \alpha\,\mathbf{e}_1,\mK^{(0)}\vw_{\texttt{A}_{a}} = \alpha\mathbf{e}_1 + \mathbf{e}_4,
\mK^{(0)}\vw_{\texttt{B}_{b-1}} = 2\alpha\,\mathbf{e}_1,
\mK^{(0)}\vw_{\texttt{B}_b} = 2\alpha\,\mathbf{e}_1 + \mathbf{e}_4,\\
&\mK^{(0)}\wembed{\ttC}=\mathbf{0}\text{ for any other token }\ttC,\\
&\mQ^{(0)}\wembed{0}=\mQ^{(0)}\wembed{1}=\mQ^{(0)}\vw_{\texttt{B}_b}=\mathbf{e}_1 + \mathbf{e}_3,\\
&\mQ^{(0)}\wembed{\ttC}=\mathbf{0}\text{ for any other token }\ttC.
\end{align*}
Let $\vv_{\texttt{A}_{a-1}},\vv_{\texttt{A}_a},\vv_{\texttt{B}_{b-1}},\vv_{\texttt{B}_b}$ be unit vectors that are orthogonal to $\wembed{\ttC}$ for any token $\ttC$.
Let 
\begin{align*}
\mV^{(0)}\wembed{C}&=\vembed{C},\texttt{C}=\texttt{A}_{a-1},\texttt{A}_a,\texttt{B}_{b-1},\texttt{B}_b\\
    \mV^{(0)}\wembed{\ttC}&=\mathbf{0}\text{ for any other token }\ttC.
\end{align*}

Since the token embeddings are orthonormal and the above linear maps are set to zero on the orthogonal complement of their span, we have
\[
\begin{aligned}
\Vert \mK^{(0)}\Vert_F^2
&=\alpha^2+(\alpha^2+1)+4\alpha^2+(4\alpha^2+1)
=10\alpha^2+2,\\
\Vert \mQ^{(0)}\Vert_F^2
&=4\Vert \mathbf e_1+\mathbf e_3\Vert_2^2=8,\\
\Vert \mV^{(0)}\Vert_F^2
&=4.
\end{aligned}
\]
Therefore,
\[
\Vert \mK^{(0)}\Vert_F^2+\Vert \mQ^{(0)}\Vert_F^2+\Vert \mV^{(0)}\Vert_F^2
=10\alpha^2+14.
\]


Let $s_0'$ be $\texttt{B}_{b}$.
In this construction, for $1\leq i\leq n$, we have 
\begin{align*}
\vw_{\texttt{A}_{a-1}}^\top \left(\mK^{(0)}\right)^\top R_{i+1}\mQ^{(0)}\vw_{s_i}&=\alpha,\\
\vw_{\texttt{A}_{a}}^\top \left(\mK^{(0)}\right)^\top R_{i}\mQ^{(0)}\vw_{s_i}&=\alpha+\sin(i\beta_2),\\
\vw_{\texttt{B}_{b}}^\top \left(\mK^{(0)}\right)^\top R_{i+1}\mQ^{(0)}\vw_{s_{i-1}'}&=2\alpha,\\
\vw_{\texttt{B}_{b-1}}^\top \left(\mK^{(0)}\right)^\top R_{i}\mQ^{(0)}\vw_{s_{i-1}'}&=2\alpha+\sin(i\beta_2).
\end{align*}

We define
\begin{align*}
\vh_{i}&=\frac{1}{\sqrt{1+\exp^2\left(\sin(i\beta_2)\right)}}\vv_{\texttt{A}_{a-1}}+\frac{\exp\left(\sin(i\beta_2)\right)}{\sqrt{1+\exp^2\left(\sin(i\beta_2)\right)}}\vv_{\texttt{A}_a},1\leq i\leq n\\
\tilde{\vh}_{i}&=\frac{1}{\sqrt{1+\exp^2\left(\sin(i\beta_2)\right)}}\vv_{\texttt{B}_{b}}+\frac{\exp\left(\sin(i\beta_2)\right)}{\sqrt{1+\exp^2\left(\sin(i\beta_2)\right)}}\vv_{\texttt{B}_{b-1}},1\leq i\leq n.
\end{align*}
As $\mV^{(0)}\wembed{0,1}=\mathbf{0}$, we have
\[
\bar{\mH}^{(1)}_{i+a}=\lambda\vh_i,1\leq i\leq n.
\]
Now we consider $\bar{\mH}^{(1)}_{n+a+b+i-1}$. Notice that
\begin{align*}
\frac{1}{\lambda}\bar{\mH}^{(1)}_{n+a+b+i-1}=&\frac{\exp\left(-\alpha\right)}{\sqrt{1+\exp^2\left(\sin(i\beta_2)\right)+\exp^2(-\alpha)+\exp^2(\sin((i+3+n)\beta_2)-\alpha))}}\vv_{\texttt{A}_{a-1}}\\
&+\frac{\exp\left(\sin((i+3+n)\beta_2)-\alpha\right)}{\sqrt{1+\exp^2\left(\sin(i\beta_2)\right)+\exp^2(-\alpha)+\exp^2(\sin((i+3+n)\beta_2)-\alpha))}}\vv_{\texttt{A}_{a}}\\
&+\frac{1}{\sqrt{1+\exp^2\left(\sin(i\beta_2)\right)+\exp^2(-\alpha)+\exp^2(\sin((i+3+n)\beta_2)-\alpha))}}\vv_{\texttt{B}_{b}}\\
&+\frac{\exp\left(\sin(i\beta_2)\right)}{\sqrt{1+\exp^2\left(\sin(i\beta_2)\right)+\exp^2(-\alpha)+\exp^2(\sin((i+3+n)\beta_2)-\alpha))}}\vv_{\texttt{B}_{b-1}}.
\end{align*}
Therefore,
\[
\Vert \bar{\mH}^{(1)}_{n+5+i}-\lambda\tilde{\vh}_i\Vert_2^2\leq \lambda^2\exp(-2\alpha)(1+e^2+2e^2(e^2+1)^2)\leq 10000\lambda^2\exp(-2\alpha).
\]

We need to prove the following lemma:
\begin{lemma}
Let $\vh_0=\frac{1}{\sqrt{2}}\vembed{A1}+\frac{1}{\sqrt{2}}\vembed{A2},\vh_{-1}=\vembed{A1}$. There exists $\vbeta$ such that for any $n<L_0$ and $-1\leq i<j\leq n+1$, we have $\langle \vh_i,\vh_j\rangle\leq 1-\frac{1}{72L_0^2}$.
\end{lemma}

\begin{proof}
Let $\beta_2=\frac{\pi}{4L_0}$. For $0\leq i\leq n+1$, we let $a_i=\exp(\sin(i\beta))$. Therefore, for any $0\leq i<j\leq n+1$, as $n<L_0$,
\[
\sin(j\beta)-\sin(i\beta)=\int_{i\beta}^{j\beta}\cos(x)dx\geq \frac{\beta}{\sqrt{2}}(j-i)\geq \frac{1}{2L_0}.
\]
Therefore, for any $0\leq i<j\leq n+1$,
\[
a_j-a_i=\int_{\sin(i\beta)}^{\sin(j\beta)}\exp(x)dx\geq \frac{1}{e}(\sin(j\beta)-\sin(i\beta))\geq \frac{1}{2eL_0}\geq \frac{1}{6L_0}.
\]
This shows
\[
|\arctan a_i-\arctan a_j|\geq \frac{1}{6L_0}.
\]
As $\frac{1}{e}\leq a_i\leq e$, we have
\[
\langle \vh_i,\vh_j\rangle=\cos(\arctan a_i-\arctan a_j)\leq 1-\frac{1}{72L_0^2}.
\]
Moreover, for each $0\leq i\leq n+1$,
\[
\langle \vh_i,\vh_{-1}\rangle=\frac{1}{\sqrt{1+a_i^2}}\leq 1-\frac{1}{72L_0^2}.
\]
Thus, the lemma follows.
\end{proof}

Let $\lambda=10\sqrt{L_0\log L_0}$. In the second layer, we let 
\[
(\mK^{(1)})^\top=10\sqrt{L_0}\left(\vv_{\texttt{A}_{a-1}}\vv_{\texttt{B}_{b}}^\top+\vv_{\texttt{A}_{a}}\vv_{\texttt{B}_{b-1}}^\top\right),
\quad
\mQ^{(1)}=10\sqrt{L_0}\left(\vv_{\texttt{B}_{b}}\vv_{\texttt{B}_{b}}^\top+\vv_{\texttt{B}_{b-1}}\vv_{\texttt{B}_{b-1}}^\top\right),
\]
so
\[
(\mK^{(1)})^\top\mQ^{(1)}=100L_0\left(\vv_{\texttt{A}_{a-1}}\vv_{\texttt{B}_{b}}^\top+\vv_{\texttt{A}_{a}}\vv_{\texttt{B}_{b-1}}^\top\right).
\]
By the previous lemma, for any $1\leq i\leq n+1$ and any $-1\leq j\leq n+1$ with $j\neq i$, we have
\[
\langle \vh_i,\vh_i-\vh_j\rangle
=
1-\langle \vh_i,\vh_j\rangle
\geq \frac{1}{72L_0^2}.
\]
Moreover, $\bar{\mH}^{(1)}_{i+a}=\lambda\vh_i$ and $\bar{\mH}^{(1)}_{n+a+b-1+i}$ is within $100\lambda\exp(-\alpha)$ of $\lambda\tilde{\vh}_i$. Therefore, when $\rho$ is large enough, the second-layer attention score of the correct source position $i+a$ is larger than the score of any incorrect source position by at least $100\log L_0$:
\[
\bar{\mH}^{(1)}_{i+a}(\mK^{(1)})^\top\mQ^{(1)}(\bar{\mH}^{(1)}_{n+a+b-1+i})^\top
\geq
\bar{\mH}^{(1)}_{j+a}(\mK^{(1)})^\top\mQ^{(1)}(\bar{\mH}^{(1)}_{n+a+b-1+i})^\top
+
100\log L_0.
\]
Thus, in the second layer, the attention of the query position $n+a+b-1+i$ to the correct source token $s_i$ is larger than $1/2$ for every $1\leq i\leq n$.





We let $\vembed{0},\vembed{1},\vembed{\symn}$ be unit vectors that are orthogonal to the $\vembed{C},\wembed{C}$ defined before. We let
\[
\mV^{(1)}\wembed{C}=\vembed{C},\quad C\in\{0,1,\symn\}.
\]

At last, we let $\mW_O$ be
\[
\mW_O=\vembed{0}\eembed{0}^\top+\vembed{1}\eembed{1}^\top.
\]
Here $\eembed{C}$ denotes the one-hot vector corresponding to token $C\in\{0,1,\symn\}$. Since the second layer puts more than half of its attention mass on the correct source token, the logit of the correct next token is strictly larger than the logit of the incorrect binary token. Therefore, the prediction is always correct.

For our construction, we have
\begin{align*}
\|\vtheta\|^2&=\Vert \mK^{(0)}\Vert_2^2+\Vert \mQ^{(0)}\Vert_2^2+\Vert \mV^{(0)}\Vert_2^2+\Vert \mK^{(1)}\Vert_2^2+\Vert \mQ^{(1)}\Vert_2^2+\Vert \mV^{(1)}\Vert_2^2+\Vert \mW_O\Vert_2^2+\lambda^2\\
&\leq \frac{\rho^2}{10}+100+400L_0+100L_0\log L_0\leq \rho^2.
\end{align*}
Thus the construction satisfies the norm constraint, so the theorem follows.
\end{proof}


\begin{remark}
One may argue that the above construction does not provide a solution that works for all sequence lengths. 
Indeed, such a solution cannot exist under bounded parameters: if all weights are uniformly bounded, then the attention assigned to any individual token is at most $O(1/L)$, which vanishes as the sequence length $L$ grows. 
Consequently, binary copy cannot be perfectly solved for arbitrarily long sequences.
\end{remark}

%% file: appendix-2dRoPE-theory.tex
\section{One-Layer 2D-RoPE Provably Solve Binary Copy with Length Generalization}
\subsection{Theoretical Setup}
\label{subsec:setup}

To make this section self-contained, in this subsection, we define our setup in detail before we give further theoretical results in the following subsections. 

\paragraph{Binary Copy.} For binary copy task, aligned with the previous theoretical results, we still consider the vocabulary set
\[
\cV = \{\texttt{A}_1,\texttt{A}_2, \cdots,\texttt{A}_a, \texttt{B}_1,\texttt{B}_2, \cdots,\texttt{B}_b,0,1\}.
\]
Here, to use the linebreaking in our 2D-RoPE, we assume that $\texttt{B}_1=\symn$, and other tokens are not linebreaking tokens.
For a binary string $s$ and an index $1\leq i\leq |s|$, we denote by $s_{:i}$ the length $(i-1)$ prefix of $s$. In the next-token prediction formulation, when the model predicts the $i$-th copied token, the input sequence seen by the model is
\[
\Str(s,i):=(\texttt{A}_1,\texttt{A}_2, \cdots,\texttt{A}_a,\, s,\,\texttt{B}_1,\texttt{B}_2, \cdots,\texttt{B}_b,s_{:i}).
\]
The target token for this input is $s_i$. Notice that the length of $\Str(s,i)$ is $|\Str(s,i)|=|s|+a+b-1+i$. Equivalently, the full copy sample corresponds to the sequence
\[
(\texttt{A}_1,\texttt{A}_2, \cdots,\texttt{A}_a,\, s,\,\texttt{B}_1,\texttt{B}_2, \cdots,\texttt{B}_b,s),
\]
but in the autoregressive loss, the logit at position $|s|+a+b-1+i$ is used to predict $s_i$. We further denote the set of all nonempty finite sequences over $\cV$ as $\cV^+ := \cV \cup \cV^2 \cup \cV^3 \cup \cdots$.

\paragraph{2D-RoPE Architecture.}
We now define the one-layer 2D-RoPE Transformer used in theory. Let
\[
u=(C_1,\ldots,C_T)\in\cV^+
\]
be an input sequence with length $T$, where each $C_i\in\cV$. We first convert each token into its
embedding and obtain an input matrix $\mX\in\R^{T\times d}$ by setting
\[
\mX_{i,:}=\vw_{C_i}^{\top},\qquad 1\leq i\leq T.
\]
Here, $d$ is the embedding dimension. We set $d$ to be divisible by $4$, so that the 2D-RoPE
rotation can split the coordinates into a column part and a row part. We consider the one-hot
embeddings of the tokens in $\cV$ as
\[
\wembed{0}, \wembed{1}, \wembed{\symO}, \wembed{\symn}, \vw_{A_1},\vw_{A_2},\cdots,\vw_{A_{a}},\vw_{B_1},\vw_{B_2},\cdots,\vw_{B_{b}}\in \R^d.
\]

Next, we assign a 2D position to each token using the line-break rule discussed in \Cref{sec:2drope}.
That is, the column index increases within the same row, and after the line-break token $\symn$, the
row index increases and the column index is reset. Let the 2D position of the $i$-th token be
$(x_i,y_i)$, where $x_i$ is the column index and $y_i$ is the row index.

We now define the 2D-RoPE rotation. For any matrix $\mZ\in\R^{T\times d}$, the 2D-RoPE map
$R:\R^{T\times d}\to\R^{T\times d}$ maps the $i$-th row $\mZ_{i,:}$ to
\[
\mZ_{i,:}\mR_{x_i,y_i}^{\top},
\]
where
\[
\mR_{x_i,y_i}
=
\diag\bigl(\mR_1(x_i),\dots,\mR_{d/4}(x_i),\;\mR_{d/4+1}(y_i),\dots,\mR_{d/2}(y_i)\bigr)
\in\R^{d\times d}.
\]
Here, the first $d/4$ rotation blocks encode the column coordinate, and the last $d/4$ rotation
blocks encode the row coordinate. For each $1\le j\le d/2$, the block
$\mR_j(\cdot)\in\R^{2\times 2}$ is the standard RoPE rotation matrix
\[
\mR_j(t)
\;:=\;
\begin{bmatrix}
\cos(t\beta_j) & -\sin(t\beta_j)\\
\sin(t\beta_j) & \cos(t\beta_j)
\end{bmatrix}.
\]
Here, $\beta_j$ are the base frequencies associated with the $(2j-1,2j)$ coordinate pair. In our
setting, these frequencies are arbitrary hyperparameters unless otherwise specified.

Given the above embedding and 2D-RoPE rotation, we define the one-layer 2D-RoPE attention layer as
\begin{equation}
\label{eq:2drope-attention}
  \mY=\Attn\bigl(\mX\bigr)=\cS\left(
    \MASK \left(
        R\left(\mX\mQ\right) 
        \left(R\left(\mX\mK\right) \right)^\top
    \right)
    \right)  \mX\mV^\top \in\R^{T \times 2}.
\end{equation}
Here, $\mK,\mQ\in\R^{d\times d}$ and $\mV\in\R^{2\times d}$ are the learnable parameters of this
attention layer. The output $\mY$ has two coordinates at each position, corresponding to the logits
for the two binary output symbols $0$ and $1$.

Equivalently, let
\[
\mA:=\MASK \left(R\left(\mX\mQ\right) \left(R\left(\mX\mK\right) \right)^\top\right)
\]
be the masked attention score matrix. If we denote the unrotated query and key vectors at position
$i$ by
\[
\vq_i=(\mX\mQ)_{i,:}^{\top},\qquad \vk_i=(\mX\mK)_{i,:}^{\top},
\]
then the entries of $\mA$ can be written as
\[
\mA_{i,j}=\left\{\begin{aligned}
&-\infty&\quad i<j,\\
&f((x_i,y_i),(x_j,y_j),\vq_i,\vk_j)&\quad i\geq j,
\end{aligned}
\right.
\]
where
\[
f((x_i,y_i),(x_j,y_j),\vq_i,\vk_j)=\vq_i^\top \mR_{x_j-x_i,y_j-y_i}\vk_j.
\]
The causal mask operator $\MASK(\cdot)$ enforces the autoregressive constraint: for
$\mZ\in\R^{T\times T}$,
\[
\MASK(\mZ)_{ij} \;=\; 
\begin{cases}
\mZ_{ij}, & j \leq i, \\[3pt]
-\infty, & j > i.
\end{cases}
\]

For any input sequence $u\in\cV^+$ with length $T$, we denote the above one-layer 2D-RoPE
Transformer architecture as
\begin{equation}
\label{eq:2drope-arc}
    F_{\vtheta}^{(d,\vbeta)}(u)=\mY\in\R^{T\times 2},
\end{equation}
where $\vtheta=(\mQ,\mK,\mV)$ are the learnable parameters, $d$ is the embedding dimension, and
$\vbeta$ denotes the hyperparameters $\beta_j,1\leq j\leq d/2$.

The prediction distribution is obtained by applying the row-wise softmax to these two logits:
\begin{equation}
\label{eq:2drope-predictor}
\begin{aligned}
    P_{\vtheta}^{(d,\vbeta)}(u) 
    &= \softmax\!\left(F_{\vtheta}^{(d,\vbeta)}(u)\right), \\[4pt]
    \left[P_{\vtheta}^{(d,\vbeta)}(u)\right]_{i,b} 
    &= \frac{\exp\!\left(F_{\vtheta}^{(d,\vbeta)}(u)_{i,b}\right)}
            {\sum_{b'\in\{0,1\}} \exp\!\left(F_{\vtheta}^{(d,\vbeta)}(u)_{i,b'}\right)},
\end{aligned}
\end{equation}
where $i$ indexes positions and $b\in\{0,1\}$ indexes binary output symbols. 

\paragraph{Population Loss Function.} Given a distribution $\cD$ of sequences, let $P_{\vtheta}^{(d,\vbeta)}(s)$ be the output distribution matrix given input $(\texttt{A}_1,\texttt{A}_2, \cdots,\texttt{A}_a,\, s,\,\texttt{B}_1,\texttt{B}_2, \cdots,\texttt{B}_b,s)$. The loss is defined as the cross-entropy loss on the target tokens:
\begin{align}
\cL(\vtheta)
:=
\E_{s\sim\cD}\left[
\frac{1}{|s|}
\sum_{i=1}^{|s|}
-\log
\left[
P_{\vtheta}^{(d,\vbeta)}(s)
\right]_{a+b+|s|+i-1,s_i}
\right].\label{eq:population-loss}
\end{align}
If the training length is at most $L$, we may also write the loss as $\cL_L(\vtheta)$ to emphasize the training length.

\subsection{Expressive Power of One-Layer 2D-RoPE}
\label{subsec:2dropeexpress}
Next, we prove the detailed version of \cref{thm:2dropepositive} as the following theorem states.
\begin{theorem}[Detailed Version of \Cref{thm:2dropepositive}]
\label{thm:2dropepositive-formal}
Assume that the 2D-RoPE dimension $d\geq 100+4a+4b$ is divisible by $4$. For any $\rho>d^2$, let $\beta_j\overset{\mathrm{i.i.d.}}{\sim}\Unif[0,2\pi]$ for every $1\leq j\leq d/2$. Then with probability at least $\left(1-\frac{d}{\rho}\right)$ over the randomness of $\vbeta$, there exists a $1$-layer 2D-RoPE transformer $P_{\vtheta}^{(d,\vbeta)}$ defined in \Cref{eq:2drope-attention,eq:2drope-arc,eq:2drope-predictor} such that:
\begin{enumerate}
\item $\|\vtheta\|_{2}\leq \rho$;
\item For any binary string $s$ with length $|s|\leq \rho^{d/12}$ and any $1\leq i\leq |s|$, the transformer correctly predicts the next copied token:
\[
\argmax_{b\in\{0,1\}}
\left[P_{\vtheta}^{(d,\vbeta)}(\Str(s,i))\right]_{|s|+a+b-1+i,b}
=
s_i.
\]
\end{enumerate}
\end{theorem}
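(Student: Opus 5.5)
The plan is a purely position-based construction. The 2D coordinates put the target source token at a fixed 2D offset from the query. We then choose a query/key pair whose attention score peaks exactly at that offset, and use the randomness of $\vbeta$ to keep every other offset well below the peak.

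\emph{Step 1: the fixed offset.} With $\texttt{B}_1=\symn$, row $0$ of $\Str(s,i)$ is $\texttt{A}_1,\dots,\texttt{A}_a,s,\symn$, so $s_i$ sits at $(x,y)=(a+i-1,0)$. Row $1$ is $\texttt{B}_2,\dots,\texttt{B}_b,s_{:i}$, so the last (query) token sits at $(b+i-3,1)$. The relative offset from query to target is therefore $\delta=(a-b+2,-1)$, independent of both $i$ and $|s|$. Every other key position $j$ has offset $(\Delta x_j,\Delta y_j)\neq\delta$ with $\Delta y_j\in\{0,-1\}$ and $|\Delta x_j|\le 2n+a+b$. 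There are at most $2n+a+b$ such keys.

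\emph{Step 2: token-independent attention.} Choose $\mQ,\mK$ so that every token has the same query $\vq$ and key $\vk$, namely $\mQ=\vq\,(\sum_{C\in\cV}\vw_C)^\top$ and similarly for $\mK$. Take $\vk=\sqrt{\lambda}\,(1,0,1,0,\dots)$ and $\vq=\mR_{\delta}^{\top}\vk$, so the query is pre-rotated to cancel $\delta$. Then the score against key $j$ is
\[
\lambda\sum_{m\le d/4}\cos\bigl(\beta_m(\Delta x_j-\delta_x)\bigr)+\lambda\sum_{m>d/4}\cos\bigl(\beta_m(\Delta y_j-\delta_y)\bigr).
\]
This equals its maximum $\lambda d/2$ exactly at $s_i$. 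The value map sends $\vw_0\mapsto\ve_1$, $\vw_1\mapsto\ve_2$ and every other token to $0$. Hence more than half of the attention mass on $s_i$ forces the correct argmax. The Frobenius norms are about $\sqrt{\lambda}\sqrt{|\cV|\,d/2}$ each for $\mQ$ and $\mK$ (with $|\cV|\le d$), plus $\sqrt2$ for $\mV$. So we may take $\lambda\approx\rho^2/d^2$, and we have room since $\rho>d^2$.

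\emph{Step 3: the score gap, which I expect to be the main obstacle.} A wrong key has either $u:=\Delta x_j-\delta_x\neq0$ with $|u|\le 3N$, where $N=\rho^{d/12}$, or $u=0$ and a row mismatch $v=\pm1$. Its score deficit is at least $\lambda\sum_m(1-\cos(\beta_m u))$ over the relevant blocks. Define the bad event for a fixed nonzero $u$ as: every column block has $\|\beta_m u\|_{\mathbb{R}/2\pi}<\epsilon$. Since $\beta_m u \bmod 2\pi$ is uniform for integer $u\neq0$, this event has probability $(\epsilon/\pi)^{d/4}$. Take $\epsilon=\rho^{-0.4}$ and union bound over $|u|\le3N$. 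The failure probability is then at most $6\rho^{d/12}\rho^{-0.1d}\le\rho^{-d/60}$, which is below $d/(2\rho)$ because $d\ge100$. The row case with $v=\pm1$ is handled the same way, with a failure probability that is negligible. Off the bad events, every wrong key has deficit at least $\lambda(1-\cos\epsilon)\gtrsim\lambda\rho^{-0.8}\approx\rho^{1.2}/d^2$.

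\emph{Step 4: conclude.} The total attention on wrong keys, relative to the correct one, is at most $3N\exp(-c\rho^{1.2}/d^2)$. Using $\rho>d^2$ (so $\rho^{0.2}/d^2$ is not too small and $\rho^{1.2}/d^2\gg(d/12)\log\rho$), this is below $1$, so the attention on $s_i$ exceeds $1/2$. Together with Step 2 this gives the correct prediction for every $|s|\le\rho^{d/12}$ and every $i$.

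The delicate part is balancing $\epsilon$, $\lambda$ and the exponent $d/12$ so that three things hold at once: the union bound stays below $d/\rho$, the norm stays below $\rho$, and the gap exceeds $\log(3N)$. If the constants fail for small $\rho$ near $d^2$, I would shrink $\epsilon$ toward $\rho^{-1/3}$ and take $\lambda$ slightly below $\rho^2/d^2$, checking the inequality directly.
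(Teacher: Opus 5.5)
Your construction is the paper's own: token-independent query and key, the query pre-rotated by the fixed 2D offset, a union bound over column offsets of the event that every column block is nearly aligned, and opposite value vectors for $0$ and $1$. Steps 1--3 are fine. Your offset $\delta=(a-b+2,-1)$ is the correct one under the line-break rule. One sign slip: with the convention $\mA_{ij}=\vq_i^\top\mR_{x_j-x_i,y_j-y_i}\vk_j$, you want $\vq=\mR_\delta\vk$ rather than $\mR_\delta^\top\vk$ to get the score you display.

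The gap is Step 4. With $\lambda\approx\rho^2/d^2$ and $\epsilon=\rho^{-0.4}$, the certified deficit is about $\rho^{1.2}/(2d^2)$. You need it to exceed $\ln(3N)=\ln 3+\frac{d}{12}\ln\rho$, and this does not follow from $\rho>d^2$. At $\rho\approx d^2$ the deficit is about $d^{0.4}/2$, while $\ln(3N)\approx\frac{d}{6}\ln d$. For $d=100$ and $\rho=10^4+1$ that is roughly $3$ versus $78$, so $3Ne^{-\mathrm{gap}}$ is astronomically large rather than below $1$. As written, the argument only covers $\rho\gtrsim d^{5/2}$ up to logarithmic factors.

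Your fallback does not repair this. Lowering $\lambda$ only shrinks the gap. And $\epsilon$ cannot be pushed much beyond $\pi\rho^{-1/3-4/d}$ without the union bound exceeding $d/\rho$; that still leaves a deficit of order $\rho^{4/3}/d^2\approx d^{2/3}$ at $\rho\approx d^2$, below $\frac{d}{6}\ln d$. The paper's write-up has the same weak point: it ends with ``$3M^{d/6}\exp(-2M^{1/2})\le 1$ when $M$ is sufficiently large'' for $M=\rho/d$, which is false at $M\approx d$. So you cannot cite it to close this step.

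The missing factor is lost in the norm accounting of Step 2. Spreading the same $\vq,\vk$ over all of $\cV$ and using $|\cV|\le d$ caps $\lambda$ at $\rho^2/d^2$. But only three tokens ever occupy the query position ($0$, $1$, and the final delimiter $\texttt{B}_b$), and only $0,1$ need the key $\vk$. Give every other token zero query and zero key. A zero-key position scores $0$, which is $\lambda d/2$ below the peak, and its value is zero anyway. Then $\|\vtheta\|_2^2=\frac{5}{2}\lambda d+2$, so you may take $\lambda=2(\rho^2-2)/(5d)$, a factor $\Theta(d)$ larger.

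Keep $\epsilon=\rho^{-0.4}$, so Step 3 is untouched. The deficit becomes about $\rho^{1.2}/(5d)$, and its ratio to $\frac{d}{12}\ln\rho$ is $\frac{2.4\,\rho^{1.2}}{d^2\ln\rho}\ge\frac{1.2\,d^{0.4}}{\ln d}>1.6$ for all $\rho\ge d^2$ and $d\ge 100$. This gives $3Ne^{-\mathrm{gap}}<1$ for every $\rho>d^2$.

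Alternatively, keep your norms and replace the one-block bad event by a small-ball bound on the whole column sum, $\Pr\bigl[\sum_{m\le d/4}(1-\cos(\beta_m u))\le t\bigr]\le(\pi e t/d)^{d/8}$. This follows from $1-\cos\theta\ge 2\theta^2/\pi^2$ on $[-\pi,\pi]$ and the volume of a Euclidean ball. It certifies a deficit of order $\rho^{4/3-8/d}/d$, which also suffices.
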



\begin{proof}[Proof of \Cref{thm:2dropepositive-formal}]
Let $M=\frac{\rho}{d}$ and $L=M^{d/6}$. Since $\wembed{\ttC}$ are one-hot embeddings, there exists $\mQ,\mK,\mV$ such that, for every $\ttC\in\cV$,
\[
\vq_{\ttC}:=\mQ^\top\wembed{\ttC}
=
\mR_{a+1-b,-1}\begin{bmatrix}
    M \\ M \\\vdots \\ M
\end{bmatrix},
\qquad
\vk_{\ttC}:=\mK^\top\wembed{\ttC}
=
\mR_{0,0}\begin{bmatrix}
    M \\ M \\\vdots \\ M
\end{bmatrix}.
\]
Here $\vq_{\ttC}$ and $\vk_{\ttC}$ are all $d$-dimensional vectors. We also set
\[
\vv_{(0)}:=\mV\wembed{0}=\begin{bmatrix}
    M\\ -M
\end{bmatrix},
\qquad
\vv_{(1)}:=\mV\wembed{1}=\begin{bmatrix}
    -M\\ M
\end{bmatrix},
\]
and
\[
\vv_{\ttC}:=\mV\wembed{\ttC}=\begin{bmatrix}
    0\\ 0
\end{bmatrix}
\quad\text{for any }\ttC\in\cV\backslash\{0,1\}.
\]
By construction, $\|\vtheta\|_2=\sqrt{2(a+b+3)M^2 d+4M^2}\leq \rho$.

We first need the following technical lemma.
\begin{lemma}
\label{lm:cos_tail}
Let $\theta_1,\ldots,\theta_m \sim \Unif[0,2\pi]$ be independent. Then for any
$\delta \in (0,0.1)$, with probability at most $\delta^{m/2}$, we have
\[
\sum_{i=1}^m \cos(\theta_i)
\geq m-\delta.
\]
\end{lemma}

\begin{proof}
First consider one $\theta\sim\Unif[0,2\pi]$.
Notice that $\cos\theta=1-2\sin^2\frac{\theta}{2}$.
Therefore, if $\cos\theta\geq 1-\delta$, then $\sin^2\frac{\theta}{2}\leq \frac{\delta}{2}$.
Since $\delta<0.1$, we must have $0\leq \theta\leq \sqrt{2\delta}$ or $2\pi-\sqrt{2\delta}\leq \theta\leq 2\pi$.
Therefore, the probability of $\sin^2\frac{\theta}{2}\leq \frac{\delta}{2}$ is at most $\sqrt{\delta}$.

Notice that $\sum_{i=1}^m \cos(\theta_i)\geq m-\delta$ indicates that $\cos(\theta_i)\geq 1-\delta$ for each $i$.
Therefore, the result follows by the independency of these $\theta_i$'s.
\end{proof}

Now we consider back our construction of $\vq,\vk$. We define the following good event:
\[
\mathcal{E}:=\left\{
\sum_{k=1}^{d/4}\cos(r\beta_k)\leq \frac{d}{4}-\delta
\text{ for every }1\leq |r|\leq L+a+b+3
\right\}
\cap
\left\{
\sum_{k=d/4+1}^{d/2}\cos(\beta_k)\leq \frac{d}{4}-\delta
\right\}.
\]
For any non-zero integer $r$, $\cos(r\beta_k)$ has the same distribution as $\cos(\theta)$ for $\theta\sim\Unif[0,2\pi]$. Therefore, by \Cref{lm:cos_tail} and a union bound over all possible relative column offsets, we have
\[
\Pr(\mathcal{E})\geq 1-(L+a+b+4)\delta^{d/8}.
\]

We now work on the event $\mathcal{E}$. Fix any binary string $s$ with $n:=|s|\leq L$ and any $1\leq i\leq n$. Let
\[
T_i:=|\Str(s,i)|=n+a+b-1+i
\]
be the query position for predicting $s_i$, and let
\[
p_i:=i+a
\]
be the position of the corresponding source token $s_i$ in the input string. Under the 2D position assignment, the source position $p_i$ and the query position $T_i$ have the same column, and their row difference is $-1$. Therefore, by construction,
\[
\mA_{T_i,p_i}
=d\cdot M^2.
\]

Next, we compare this score with all other positions. First consider a first-row position $1\leq j\leq a+L+1$ with $j\neq p_i$. Then $j$ has row difference $-1$ from the query position, but its column is different from the column of $T_i$. Let $r_j$ be this non-zero column difference. Then
\[
\mA_{T_i,j}
=
2M^2\sum_{k=1}^{d/4}\cos(r_j\beta_k)+\frac{d}{2}M^2.
\]
Thus, on the event $\mathcal{E}$,
\[
\mA_{T_i,p_i}-\mA_{T_i,j}
=
\frac{d}{2}M^2
-
2M^2\sum_{k=1}^{d/4}\cos(r_j\beta_k)
\geq
2M^2\delta.
\]

Now consider a second-row position $a+L+2\leq j\leq T_i$. In this case, we consider $\mA_{T_i,j}$. The first $d/2$ items contributes at most $\frac{d}{2}M^2$, and the last $d/2$ items contributes $2M^2\sum_{k=d/4+1}^{d/2}\cos(r_j\beta_k)$.
Thus
\[
\mA_{T_i,j}
\leq 
2M^2\sum_{k=d/4+1}^{d/2}\cos(\beta_k)+\frac{d}{2}M^2.
\]
On the event $\mathcal{E}$, this implies
\[
\mA_{T_i,p_i}-\mA_{T_i,j}\geq 2M^2\delta.
\]
Therefore, for every $j\neq p_i$, we have the attention score gap
\[
\mA_{T_i,p_i}\geq \mA_{T_i,j}+2M^2\delta.
\]

It follows that the softmax attention weight on the correct source position is at least
\[
\cS(\mA)_{T_i,p_i}
\geq
\frac{1}{1+(T_i-1)\exp(-2M^2\delta)}
\geq
\frac{1}{1+3L\exp(-2M^2\delta)}.
\]
Thus, as long as
\[
3L\exp(-2M^2\delta)\leq 1,
\]
we have
\[
\cS(\mA)_{T_i,p_i}>\frac{1}{2}.
\]

Finally, we translate this attention property into the output prediction. If $s_i=0$, then the value vector at the correct source position is $\vv_{(0)}=(M,-M)^\top$. Since all non-binary tokens have zero value vectors and the attention mass on the correct source position is larger than $1/2$, the first output logit is strictly larger than the second output logit. Therefore,
\[
\argmax_{b\in\{0,1\}}
\left[P_{\vtheta}^{(d,\vbeta)}(\Str(s,i))\right]_{T_i,b}
=0=s_i.
\]
The case $s_i=1$ is symmetric, because $\vv_{(1)}=(-M,M)^\top$. Hence the model correctly predicts $s_i$ for every $s$ with $|s|\leq L$ and every $1\leq i\leq |s|$.

Now let $\delta=M^{-3/2}$ and $L=M^{d/6}$. Since $d\geq 100+4a+4b$, we have
\[
(L+a+b+4)\delta^{d/8}\leq 2M^{d/6-3d/16}=2M^{-d/48}\leq \frac{1}{M}.
\]
Moreover,
\[
3L\exp(-2M^2\delta)
=
3M^{d/6}\exp(-2M^{1/2})
\leq 1
\]
when $M$ is sufficiently large. Therefore, with probability at least $1-\frac{1}{M}=1-\frac{d}{\rho}$, the constructed one-layer 2D-RoPE Transformer can correctly copy every binary string with length at most
\[
L=M^{d/6}=\left(\rho/d\right)^{d/6}\geq \rho^{d/12}
\]
since $\rho>d^2$.
This proves the theorem.
\end{proof}

%% file: margin.tex
The following assumptions will only be used in the global-minimum analysis in \Aref{app:global-min-analysis}:
\begin{itemize}
\item For every $s$ in the support of $\cD$, we have $|s|\le L$.
\item For every length $1\le \ell\le L$, every string $s\in\{0,1\}^{\ell}$ has positive probability under $\cD$. We denote the minimum such probability by $p$.
\end{itemize}

\subsection{Global Minimum under $\ell_{\infty}$ Norm}\label{app:global-min-analysis}

In this section, we prove the positive result of 2D-RoPE for the length generalization on binary copy.
We first clarify our setting. In this section, we only consider the case where $b=a+2$.  Let $\cD$ be the training distribution on some binary strings with length at most $L$.
We make the following assumptions:
\begin{assumption}
\label{ass:prob}
There exists a $p>0$ such that for each binary string $|s|\leq L$, $\Pr_{x\sim \cD}[x=s]\geq L\cdot p$.
\end{assumption}
\begin{assumption}
\label{ass:Lbound}
The training length $L$ and hidden dimension $d$ satisfies $L>d^4$ and $d>\log^3 L$. Also, $L,d>10000+a+b$.
\end{assumption}

We also need the following assumption that the tokens other than 0/1 have zero value vectors. This is a natural assumption because intuitively, other tokens cannot give any guide to predicting the next binary token.
\begin{assumption}
\label{ass:zerovalue}
For every $\texttt{C}\in\cV\setminus\{0,1\}$, $\mV\wembed{\texttt{C}}=0.$
\end{assumption}
The above assumption states that the columns of $\mV$ corresponding to prompt and delimiter tokens are fixed to zero, while the two binary-token columns remain trainable. So in our analysis, we actually consider the following parameter class
\[
\Theta_M
:=
\left\{
(\mQ,\mK,\mV):
\|\vtheta\|_\infty\le M,\quad
\mV\wembed{\texttt{C}}=0
\ \text{for all }\
\texttt{C}\notin\{0,1\}
\right\}.
\]
And we further analyze the constrained global minimizer
\[
\vtheta^*_{M,L}
\in
\argmin_{\vtheta\in\Theta_M}\cL_L(\vtheta),
\]
where $\cL_L$ is the cross-entropy loss with sequence length up to $L$ defined in \Cref{eq:population-loss}. Also, when it is clear from the context, we ignore the subscript $L$. Before entering to the detailed proof, we first give a proof sketch in the following.

\subsubsection{Technical Challenges and Proof Sketch.} 

The proof is technically challenging for two reasons: (1) the RoPE rotation matrices make the population loss difficult to express and analyze directly; and (2) we characterize the global minimizers for finite $M$, rather than only in the asymptotic regime $M\to\infty$.

In the following, we give the proof sketch of this theorem.
First, we characterize the loss separately for every string and output position. Specifically, consider a pair $(s,i)$, where $s$ is a binary string satisfying $|s|\leq L$ and $i$ is a position at which the loss is evaluated, with $1\leq i\leq |s|$.
We define $g_{s,i}^+(\mQ,\mK,\mV)$ as the maximum attention logit assigned to any position whose token value equals $s_i\in\{0,1\}$ when predicting the $i$-th token for copying $s$.
Similarly, let $g_{s,i}^-(\mQ,\mK,\mV)$ be the maximum attention logit assigned to any position whose token value doesn't equal $s_i\in\{0,1\}$ when predicting the $i$-th token for copying $s$.
We then introduce a key intermediate quantity $f_L(\mQ,\mK,\mV)$ to characterize the global minimum $\vtheta_{L,M}^*$, which is defined as
\[
f_L(\mQ,\mK,\mV):=\min_{s,i,|s|\leq L}
\left\{g_{s,i}^+(\mQ,\mK,\mV)-g_{s,i}^-(\mQ,\mK,\mV)\right\}.
\]
Here $f_L(\mQ,\mK,\mV)$ measures the worst-case separation (over all pairs $(s,i)$) between the largest attention logit associated with the correct token and the largest attention logit associated with the incorrect token. A large value of $f_L(\mQ,\mK,\mV)$ therefore indicates that the model consistently prefers positions containing the correct token.

This construction is related to previous results showing that global minimizers approach max-margin solutions as the norm constraint $M\to\infty$~\citep{wei2019regularization,zhang2024implicit,fan2025implicit}. 
However, our setting introduces an additional difficulty that none of $\mQ$, $\mK$, or $\mV$ is fixed, so the relevant notion of margin depends jointly on all three parameter matrices. 
Therefore, our proof relates the margin-like quantity $f_L(\mQ,\mK,\mV)$ to the constrained global minimizer $\vtheta_{M,L}^*$, but it is not exactly defined as a margin.
Also, rather than considering only the asymptotic limit $M\to\infty$, we establish a guarantee for every sufficiently large but finite $M$. 
Consequently, the discrepancy caused by finite $M$ must be controlled explicitly. 

Our proof proceeds in three steps. The first two steps together imply that, once $M$ is sufficiently large, any constrained global minimizer $\vtheta_{M,L}^*$ must attain a sufficiently large value of $f_L(\mQ,\mK,\mV)$; otherwise, its population loss would be larger than that of the constructed feasible solution. The third step then converts this lower bound on $f_L(\mQ,\mK,\mV)$ into the desired length-generalization guarantee. 
\begin{enumerate} 
\item We derive a lower bound on the population loss whenever $f_L(\mQ,\mK,\mV)$ is not sufficiently large. Intuitively, $f_L$ measures the separation between the attention logits assigned to correct and incorrect tokens. Therefore, achieving a small population loss requires a sufficiently large value of $f_L$.
\item We construct a feasible parameter configuration that attains a sufficiently large value of $f_L(\mQ,\mK,\mV)$ and, consequently, a small population loss. The construction relies on a concentration property of the 2D-RoPE frequencies.
\item We show that, with high probability over the random RoPE frequencies, any parameter configuration with sufficiently large $f_L(\mQ,\mK,\mV)$ correctly copies binary strings at lengths far beyond the training range. This result also relies on concentration properties of the RoPE frequencies, together with a careful analysis of the attention patterns at long sequence lengths.
\end{enumerate} 
Combining these three steps establishes the desired length-generalization guarantee for any constrained global minimizer $\vtheta_{M,L}^*$ when $M$ is sufficiently large. We then formulate our proof. 

\subsubsection{Small Population loss Implies Large Margin}
For each $\texttt{C}\in \cV$, write $$\qembed{C}=\mQ^\top \wembed{C},\quad\kembed{C}=\mK^\top \wembed{C}.
$$

Since $\wembed{C}$ is one-hot embedding, $\|\qembed{C}\|_{\infty},\|\kembed{C}\|_{\infty}$ correspond to rows in $\mK,\mQ$. Thus the requirements on $\mQ$ and $\mK$ asks that:
\[
\|\qembed{C}\|_{\infty},\|\kembed{C}\|_{\infty}\leq M.
\]
Notice that $\mV$ is a $2\times d$ matrix. For the column corresponding to the one-hot embedding $\wembed{C}$, we denote it by $(V_{\ttC,0},,V_{\ttC,1})^\top$.
Therefore, the $\ell_{\infty}$ constraint requires $|V_{\ttC,0}|,|V_{\ttC,1}|\leq M$.

Since other rows of $\mQ,\mK$ and other columns of $\mV$ don't affect the loss, we can always assume that they are 0.
Thus, $\|\vtheta\|_{\infty}\leq M$ is equivalent to:
\begin{equation}
\label{eq:infty_requirement}
\|\qembed{C}\|_{\infty},\|\kembed{C}\|_{\infty},|V_{\ttC,0}|,|V_{\ttC,1}|\leq M.
\end{equation}

Now we need to consider the different parts of the loss term.
For each string $s$ and each index $i\le |s|$, we consider the attention distribution of the last token under the input
\[
\Str(s,i)
=(\texttt{A}_1,\ldots,\texttt{A}_a,s,
  \texttt{B}_1,\ldots,\texttt{B}_b,s_{:i}),
\]
where $s_{:i}=(s_1,\ldots,s_{i-1})$ is the length $(i-1)$ prefix of $s$. For this input, the last token is $s_{i-1}$ (and when $i=1$, the last token is $\symO$).
For each $\texttt{C}\in\cV$, let $A_{\texttt{C}}(s,i)$ denote the total attention weight from the last token assigned to all tokens equal to $\texttt{C}$.
Then the logits for predicting $s_i$ is
\[
\mY_{|s|+a+b-1+i}=\left(\sum_{\ttC\in\cV}A_{\ttC}(s,i)V_{\ttC,0},\sum_{\ttC\in\cV}A_{\ttC}(s,i)V_{\ttC,1}\right).
\]
Therefore, the loss for predicting $s_i$ is:
\begin{align*}
\cL(s,i):&=-\log\left[P_{\vtheta}^{(d,\vbeta)}
(\texttt{A}_1,\cdots,\texttt{A}_a,s,\texttt{B}_1,\cdots,\texttt{B}_b, s_{:-1})\right]_{|s|+a+b+i-1,s_i}\\
&=-\log \frac{\exp\left(\mY_{|s|+a+b+i-1,s_i}\right)}{\exp\left(\mY_{|s|+a+b+i-1,0}\right)+\exp\left(\mY_{|s|+a+b+i-1,1}\right)}\\
&=\log\left(1+\exp\left(\sum_{\ttC\in \cV}A_{\ttC}(s,i)\left(V_{\ttC,1-s_i}-V_{\ttC,s_i}\right)\right)\right).
\end{align*}
Without loss of generality, we can always assume that $V_{\ttC,0}+V_{\ttC,1}=0$.
Since we have assumed that $V_{\ttC,0}=V_{\ttC,1}=0$ for any $\ttC\in\cV\backslash\{0,1\}$, we denote $V_0:=V_{0,0}$ and $V_1:=V_{1,1}$.
Therefore,
\[
\cL(s,i)=\log\left(1+\exp\left(-2A_{s_i}(s,i)V_{s_i}+2A_{1-s_i}(s,i)V_{1-s_i}\right)\right).
\]
By definition,
\[
\cL(\vtheta)=\E_{s\sim\cD}\left[\sum_{i=1}^{|s|}\frac{1}{|s|}\cL(s,i)\right].
\]
Therefore, the contribution of each $\cL(s,i)$ to $\cL(\vtheta)$ is at least $p\cdot \cL(s,i)$.

We consider the global minimum under the $\ell_{\infty}$ constraint:
\[
\|\vtheta\|_{\infty}\leq M.
\]
We denote the global minimum as $\vtheta_{M,L}^*$. In the following proof, we just write it as $\vtheta_{M}^*$ since $L$ is always fixed. We rewrite our main result here:
\begin{theorem}
\label{thm:2drope}
Assume that each $\beta_i$ is independently sampled from $\Unif[0,2\pi]$.
Under the Assumptions~\ref{ass:prob},\ref{ass:Lbound},\ref{ass:zerovalue}, with probability at least $1-\frac{3}{d}$, there exists $M_0>0$ such that for all $M>M_0$, the transformer with parameter $\vtheta_M^*$ can correctly copy all binary strings with length at most $L^{\sqrt{d}/2}$.
\end{theorem}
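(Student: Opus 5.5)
We follow the three-step outline from the proof sketch, with $f_L(\mQ,\mK,\mV)$ as the central quantity. Since $b=a+2$ and $\texttt{B}_1=\symn$, the query at position $T_i=|s|+a+b-1+i$ and the source $s_i$ at position $i+a$ share a column and differ by exactly one row. We write every logit as $\vq_{\ttC}^\top \mR_{\Delta x,\Delta y}\vk_{\ttC'}$, where the relative offset $(\Delta x,\Delta y)$ ranges over $\Delta y\in\{0,-1\}$ and $|\Delta x|\le 2L+a+b$ during training.

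\textbf{Step 1: a lower bound on the loss when the margin is small.} We show that if $f_L\le \gamma$, then some pair $(s,i)$ has $A_{1-s_i}(s,i)\ge c\,e^{-\gamma}/L$ or the value gap is small. For $\cL(s,i)$ below a threshold we need $A_{s_i}V_{s_i}-A_{1-s_i}V_{1-s_i}$ large. With $|V|\le M$, a uniformly small loss forces $V_0,V_1$ to be of order $M$.

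To make this quantitative, pick $(s,i)$ attaining the minimum in $f_L$. Flipping the relevant wrong-token position in $s$ changes little else in the logits, since keys depend only on the token. This gives $\cL(\vtheta)\ge p\log(1+\exp(-2M+2M e^{-\gamma}/(3L)))$, roughly $p\,e^{-2M(1-e^{-\gamma}/(3L))}$.

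\textbf{Step 2: a feasible configuration with large margin.} We reuse the construction of \Cref{thm:2dropepositive-formal}, rescaled to the $\ell_\infty$ ball. Take all $\vq_{\ttC}=M\mR_{0,-1}\mathbf 1$ and $\vk_{\ttC}=M\mathbf 1$ (no column shift is needed since $b=a+2$), and $V_0=V_1=M$. On the event $\mathcal E$ of that proof, with $\delta$ polynomially small and offsets $|r|\le 3L$, the correct position beats every other by at least $2M^2\delta$. Hence the attention mass on the wrong token is at most $3Le^{-2M^2\delta}$, and the loss is at most $\exp(-2M+O(ML e^{-2M^2\delta}))$.

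Comparing with Step 1, any minimizer must satisfy $e^{-f_L}/L\lesssim Le^{-2M^2\delta}+\log(1/p)/M$. Hence $f_L\ge \log(L/\epsilon)$, with $\epsilon$ tending to $0$ as $M\to\infty$, and this is what defines $M_0$.

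\textbf{Step 3: converting the margin into length generalization (main obstacle).} Step 2 only forces a margin on offsets seen in training. We must show that a large $f_L$ on all training strings forces the token-independent part of the score function $h(\Delta x,\Delta y)=\vq^\top\mR_{\Delta x,\Delta y}\vk$ to be essentially maximized at $(0,-1)$. In particular, $h(0,-1)-h(r,\Delta y)$ must stay bounded away from zero for offsets up to $L^{\sqrt d/2}$.

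The first thing I would try is to expand $h(r,\cdot)=\sum_k \rho_k\cos(r\beta_k+\phi_k)$, which is a trigonometric polynomial with $d/4$ random frequencies. Using many training strings, the margin condition gives $h(0,-1)-h(r,\cdot)\ge\gamma$ for all $1\le|r|\le L$. I then need to bound $\sup_{|r|\le L^{\sqrt d/2}}h(r)$ in terms of $\sup_{|r|\le L}h(r)$. The tool would be simultaneous Diophantine approximation: with probability at least $1-3/d$ over the random $\beta$, every $r$ up to $L^{\sqrt d/2}$ has its phase vector $(r\beta_k)$ approximated by some $r'\le L$ modulo $2\pi$ to within $\eta$ in the coordinates carrying most of the weight $\rho_k$. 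By Lipschitzness, $h(r)\le h(r')+\eta\sum\rho_k$.

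Controlling $\sum\rho_k\le dM^2$ against the margin, together with the $\ell_\infty$ bound, is the delicate part. I expect it is where $d>\log^3L$ and the exponent $\sqrt d/2$ enter, through a pigeonhole over $\sqrt d$ blocks of frequencies. Finally, with the margin kept at $\gamma$ at long offsets, the correct position receives attention at least $1/(1+3L'e^{-\gamma})>1/2$, and the value signs yield correct copying.
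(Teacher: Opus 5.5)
\paragraph{Verdict.} There is a genuine gap, and it starts before Step 3, in how you use the loss comparison.

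\paragraph{Steps 1--2: the margin comes out only logarithmic.} In Step 1 you lower-bound $\cL$ only by the single worst pair, $p\log(1+\exp(-2M+\cdots))$. This throws away the baseline $\log(1+e^{-2M})$ that every pair $(s,i)$ pays anyway: since $|V|\le M$ and the attention weights sum to at most one, each $\cL(s,i)\ge\log(1+e^{-2M})$. Your lower bound and your Step 2 upper bound are then both of size about $e^{-2M}$. The factor $p$ can only be absorbed through the $\log(1/p)/M$ term, so you conclude merely $f_L\gtrsim\log(\mathrm{poly}(M))$.

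The paper instead compares \emph{excess} losses over the common baseline.
\begin{itemize}
\item \Cref{lm:loss_lowerbound} gives excess at least $\frac{e^{-2M}}{1+e^{-2M}}\cdot\frac{Me^{-S}p}{2L+a+b}$ whenever $f_L\le S$.
\item \Cref{lm:reduction_upperbound}, applied to the explicit point of \Cref{lm:f_lowerbound}, gives excess $O(e^{-2M}MLe^{-S_0})$ with $S_0=M^2d/2-2M^2\sqrt{d\log(L+a)}$. For this you also need a Hoeffding-level margin, not the $2M^2\delta$ you extract from the event $\mathcal E$.
\end{itemize}
Together these force $f_L(\vtheta_M^*)>M^2d/2-3M^2\sqrt{d\log L}$. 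This margin is of order $M^2d$ and within lower-order terms of the largest achievable value. That size is what the last step needs. A margin of order $\log L$ is useless, because every score perturbation lives on the scale $M^2$.

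\paragraph{Step 3: the Diophantine route fails.} You have only $2L+1$ training offsets. For a long $r$ and a coordinate set $K$, the chance that some $|r'|\le L$ has $(r-r')\beta_k$ within $\eta$ of $0$ mod $2\pi$ for all $k\in K$ is at most $(2L+1)(\eta/\pi)^{|K|}$. With $d>\log^3 L$, you therefore get useful approximation on only $O(\log L/\log(1/\eta))$ coordinates. The weights can be spread evenly over all $d/4$ blocks (the feasible point has every $\rho_k=2M^2$), so there is no small heavy set to approximate on. Moreover, the Lipschitz error $\eta\sum_k\rho_k\sim\eta dM^2$ swamps a logarithmic margin as $M\to\infty$.

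\paragraph{The missing idea.} Long offsets are never compared to short ones. Instead, the $\ell_\infty$ geometry gives a uniform cap.
\begin{itemize}
\item For $\vq,\vk$ with entries in $[-M,M]$, a $2\times 2$ block rotated by $\phi$ contributes at most $2M^2\max\{|\cos\phi|,|\sin\phi|\}$.
\item Hence, for all feasible parameters simultaneously, the column part of any score at column offset $r\neq 0$ is at most $2M^2\sum_{k\le d/4}\max\{|\cos r\beta_k|,|\sin r\beta_k|\}$.
\item By \Cref{lm:cos_concentration}, with failure probability $L^{-\sqrt d}$ per offset and a union bound over all offsets up to $L^{\sqrt d/2}$, this is at most $\frac{\sqrt2}{\pi}M^2d+O(M^2d^{3/4}\sqrt{\log L})$. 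This is where the exponent $\sqrt d/2$ and the condition $d>\log^3L$ actually enter.
\item Since $\frac{\sqrt2}{\pi}<\frac12$, the cap sits a constant fraction below $M^2d/2$.
\end{itemize}

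\paragraph{Using the cap.} The paper isolates the row contribution $B$ of the wrong token.
\begin{itemize}
\item It builds a training pair $(s',i')$ with $g^+_{s',i'}\le g^+_{s,i}$ and $L-2$ copies of $1-s_i$ in the row of $s^*$.
\item It applies the margin bound at each of these positions and averages. The column parts cancel by $|\sum_k\cos(k\theta)|\le 1/|\sin(\theta/2)|$ (\Cref{lm:cossum}).
\item This yields $g^+_{s,i}-B\ge M^2d/2-o(M^2d)$, hence a score gap of at least $0.01M^2d$ against every wrong-token position at every length up to $L^{\sqrt d/2}$ (\Cref{lm:upper_cornercase}).
\end{itemize}
This argument works with token-dependent $\vk_0,\vk_1$ and with the maximum $g^+_{s,i}$. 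Your single token-independent $h$ maximized at $(0,-1)$ does not capture this.
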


In the remaining part of this subsection, we prove this theorem. We put some technical lemmas in \Aref{subsec:techlemma}.

For every training pair $(s,i)$ and $1\leq j\leq |s|+a+b+i-1$, we use $S(s,i,j)$ to denote the $j$th token in sequence $(\texttt{A}_1,\cdots,\texttt{A}_a,s,\texttt{B}_1,\cdots,\texttt{B}_b,s_{:i})$, and we use $D(s,i,j)$ 
to denote the 2-D relative position between $S(s,i,j)$ and $s_{i-1}$.

We define a function to characterize the global minimum.
For each $s,i$, we define
\begin{align*}
g_{s,i}^+(\mQ,\mK,\mV)&:=\max_{S(s,i,j)=s_i}\left\{\vq_{s_{i-1}}^\top \mR_{D(s,i,j)}\vk_{s_i}\right\};\\
g_{s,i}^-(\mQ,\mK,\mV)&:=\max_{S(s,i,j)\neq s_i}\left\{\vq_{s_{i-1}}^\top \mR_{D(s,i,j)}\vk_{S(s,i,j)}\right\}.
\end{align*}
Notice that it is well defined, because the existence of such $j$ is promised by $S(s,i,i)=s_i$ and $S(s,i,|s|+a+1)\neq s_i$.
Then, we define
\[
f_L(\mQ,\mK,\mV):=\min_{s,i,|s|\leq L}
\left\{g_{s,i}^+(\mQ,\mK,\mV)-g_{s,i}^-(\mQ,\mK,\mV)\right\}.
\]

The following lemma shows that if we have a big $f_L$, then the global minimum is small:

\begin{lemma}
\label{lm:reduction_upperbound}
For each tuple $(\mQ,\mK,\mV)$, let $S:=f_L(\mQ,\mK,\mV)$. Assume that $S>M$, then when $M$ is large enough there exists $\mV'$ such that $\|\mV'\|_{\infty}\leq M$ and
\begin{align*}
\cL\leq &\log(1+\exp(-2M))+\frac{\exp(-2M)}{1+\exp(-2M)}\cdot 4M(L+a+b)\exp(-S)\\
&+\frac{32\exp(-2M)}{1+\exp(-2M)}\cdot M^2(L+a+b)^2\exp(-2S).
\end{align*}
when we replace $\mV$ by $\mV'$.
\end{lemma}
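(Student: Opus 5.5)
The plan is to keep $\mQ,\mK$ fixed and replace the value matrix by the sign-aligned choice $\mV'$ with $V'_{0,0}=M$, $V'_{0,1}=-M$, $V'_{1,1}=M$, $V'_{1,0}=-M$, and all other columns zero. This $\mV'$ satisfies \Cref{ass:zerovalue} and $\|\mV'\|_\infty\le M$. In the notation above, $V_0=V_1=M$, so for every pair $(s,i)$
\[
\cL(s,i)=\log\bigl(1+\exp\bigl(-2M\,(A_{s_i}(s,i)-A_{1-s_i}(s,i))\bigr)\bigr).
\]
Since $\cL(\vtheta)$ is a convex combination of the $\cL(s,i)$ (the weights $\frac{1}{|s|}\Pr[s]$ sum to one), it suffices to prove the stated bound uniformly for every $(s,i)$ with $|s|\le L$.

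\textbf{Step 1: the margin controls the attention mass.} Fix $(s,i)$. The attention scores are unaffected by $\mV$. By the definition of $f_L$, some position $j^\star$ carrying token $s_i$ has logit $g^+_{s,i}$. Every position carrying a token different from $s_i$ has logit at most $g^-_{s,i}\le g^+_{s,i}-S$. Dividing numerator and denominator of the softmax by $\exp(g^+_{s,i})$, the total weight $\varepsilon$ on positions whose token is not $s_i$ satisfies
\[
\varepsilon\le T\,e^{-S},
\]
where $T=|\Str(s,i)|$ is the number of attended positions. Hence $A_{1-s_i}\le\varepsilon$ and $A_{s_i}\ge 1-\varepsilon$. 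Consequently the exponent satisfies
\[
-2M\,(A_{s_i}-A_{1-s_i})\le -2M+4M\varepsilon.
\]
I need $T$ bounded by a multiple of $L+a+b$, matching the constant in the statement. Here $T\le 2L+a+b$; I expect the count to be tightened, or the factor absorbed, so that $4M\varepsilon\le 4M(L+a+b)e^{-S}$. This is the bookkeeping step I would check most carefully against the constants in the statement.

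\textbf{Step 2: perturbation of the logistic loss.} Write $y=-2M$ and $t=4M\varepsilon$, so that $\cL(s,i)\le\log(1+e^{y+t})$ by monotonicity. Then
\[
\log(1+e^{y+t})=\log(1+e^{y})+\log\bigl(1+\sigma(y)(e^{t}-1)\bigr)\le \log(1+e^{y})+\sigma(y)(e^{t}-1),
\]
with $\sigma(y)=\frac{e^{-2M}}{1+e^{-2M}}$. For $0\le t\le 1$ we have $e^t-1\le t+t^2$. This yields a first-order term $\sigma(y)\cdot 4M(L+a+b)e^{-S}$ and a second-order term of the form $\sigma(y)\cdot C\,M^2(L+a+b)^2e^{-2S}$. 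The constant $C$ is whatever the counting in Step~1 produces, and I aim for $32$.

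\textbf{Where the hypotheses enter.} The condition $t\le 1$ is the only place where ``$S>M$ and $M$ large'' is used: then $t\lesssim M(L+a+b)e^{-M}\to 0$, so $t\le 1$ for $M$ beyond a threshold depending only on $L,a,b$. Averaging the uniform per-$(s,i)$ bound over $\cD$ finishes the proof. The main obstacle is not conceptual. It is matching the exact constants, namely the factor on $T$ and the coefficient $32$, which depends on how the number of positions is counted in Step~1.
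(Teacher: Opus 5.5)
Your route is the paper's. You use the same sign-aligned $\mV'$ with $V_0=V_1=M$, the same reduction to a bound uniform over pairs $(s,i)$, the same use of $f_L=S$ to cap every non-$s_i$ logit at $g^+_{s,i}-S$, and the same first/second-order expansion of the logistic loss. Your identity $\log(1+e^{y+t})=\log(1+e^{y})+\log(1+\sigma(y)(e^{t}-1))$ together with $e^{t}-1\le t+t^2$ is a slightly cleaner substitute for \Cref{lm:inequality}.

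The step you flagged, however, cannot be closed the way you hope. The number of positions whose token differs from $s_i$ is not bounded by $L+a+b$. For $s=1^{L-1}0$ and $i=L$, only position $a+L$ carries a $0$, and the remaining $2L+a+b-2$ positions do not. Nothing in the hypothesis $f_L=S$ prevents all of them from sitting at logit exactly $g^+_{s,i}-S$. Write the exponent exactly as $-2M+4MA_{1-s_i}+2MA_{\mathrm{other}}$, with $A_{\mathrm{other}}$ the mass on non-binary tokens. This pair then gives roughly $-2M+(8L+2a+2b-8)Me^{-S}$, which exceeds $-2M+4M(L+a+b)e^{-S}$ as soon as $L>2+(a+b)/2$. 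So no recounting yields a uniform per-pair bound with the stated coefficient, and a factor of $2$ cannot be absorbed into a coefficient that the statement fixes. What your argument actually proves is
\[
\cL\le\log\bigl(1+e^{-2M}\bigr)+\frac{e^{-2M}}{1+e^{-2M}}\Bigl(8M(L+a+b)e^{-S}+64M^2(L+a+b)^2e^{-2S}\Bigr).
\]
The paper's proof has the same defect, hidden in an arithmetic slip. From $A_1\le 2(L+a+b)e^{-S}$ and $A_0\ge 1-2(L+a+b)e^{-S}$ it writes the exponent as $-2M+4M(L+a+b)e^{-S}$, whereas those bounds give $-2M+8M(L+a+b)e^{-S}$. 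The right fix is to state the lemma with the doubled constants, not to tighten the count. Nothing downstream is affected. In the proof of \Cref{thm:margin}, the comparison is driven by the factor $e^{-(S_0-S)}$ with $S_0-S=\Theta(M^2\sqrt{d\log L})$, which swamps any fixed polynomial prefactor in $M$ and $L$.
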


\begin{proof}
We consider $V_0=V_1=M$ (recall that $V_0:=V_{0,0}$ and $V_1:=V_{1,1}$). 

To prove this lemma, we just need to prove the desired upper bound for any $\cL(s,i)$ such that $1\leq i\leq |s|\leq L$.
Without loss of generality, we assume that $s_i=0$ (since token 0 and 1 are symmetric).
Thus
\begin{align*}
\cL(s,i)&=\log\left(1+\exp\left(-2A_{s_i}(s,i)V_{s_i}+2A_{1-s_i}(s,i)V_{1-s_i}\right)\right)\\
&=\log\left(1+\exp\left(-2A_0(s,i)V_0+2A_1(s,i)V_1\right)\right).
\end{align*}
Recall that $V_0=V_{0,0}$ and $V_1=V_{1,1}$.

Notice that $g_{s,i}^+(\mQ,\mK,\mV)$ is the maximum attention score from $s_{i-1}$ to some token 0, and $g_{s,i}^-(\mQ,\mK,\mV)$ is the maximum attention score to other tokens.
Thus, since $f_L(\mQ,\mK,\mV)=S$, for each token other than 0, the different between the attention score to it and the largest attention score for some 0 is at least $S$. 
Thus the corresponding attention is at most $\exp(-S)$.
Therefore, $A_1(s,i)\leq 2(L+a+b)\exp(-S)$, and $A_0(s,i)\geq 1-2(L+a+b)\exp(-S)$.
Therefore,
\begin{align*}
&\quad\cL(s,i)\\
&=\log\left(1+\exp\left(-2A_{s_i}(s,i)V_{s_i}+2A_{1-s_i}(s,i)V_{1-s_i}\right)\right)\\
&\leq\log(1+\exp(-2M+4(L+a+b)\cdot\exp(-S)\cdot M))\\
&\leq \log(1+\exp(-2M))+\frac{\exp(-2M)}{1+\exp(-2M)}\cdot 4M(L+a+b)\exp(-S)\\
&\quad+\frac{32\exp(-2M)}{1+\exp(-2M)}\cdot M^2(L+a+b)^2\exp(-2S).
\end{align*}

Thus, the result follows.
\end{proof}


The following gives a lower bound of the maximum possible $f_L$.
\begin{lemma}
\label{lm:f_lowerbound}
With probability at least $1-\frac{1}{L}$ over the random frequencies, there exists $\mQ,\mK,\mV$ such that $f_L(\mQ,\mK,\mV)\geq M^2 d/2-2M^2\sqrt{d\log(L+a)}$.
\end{lemma}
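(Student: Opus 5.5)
The plan is to build an explicit, purely position-based configuration like the one in \Cref{thm:2dropepositive-formal}, but under the $\ell_\infty$ constraint, and then bound the resulting margin with a Hoeffding-type concentration argument over the random frequencies.

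\textbf{Step 1: the geometry when $b=a+2$.} When $b=a+2$, the last token $s_{i-1}$ of $\Str(s,i)$ lies in the second row at column $(b-1)+(i-1)-1=a+i-1$. This is exactly the column of the source token $s_i$ in the first row. So $D(s,i,p_i)=(0,-1)$ for the correct position $p_i=a+i$. When $i=1$, the last token is $\texttt{B}_b$, which also sits at column $a$; we give it the same query vector as the binary tokens.

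\textbf{Step 2: the construction.} Every coordinate of every vector below will have absolute value at most $M$, which is what \eqref{eq:infty_requirement} requires.
\begin{itemize}
\item For every possible query token $\ttC\in\{0,1,\texttt{B}_b\}$, set $\qembed{C}=M\mathbf 1$.
\item For the binary tokens $0,1$, the column part of $\kembed{C}$ is $M\mathbf 1$.
\item For the binary tokens, the row part of $\kembed{C}$ consists of the blocks $\tfrac{M}{\sqrt2}\mR_{k}(1)\binom{1}{1}$, chosen so that it aligns with the query after a row shift of $-1$. Each block has entries of size at most $M$.
\item All non-binary tokens get $\kembed{C}=0$, so their attention scores are exactly $0$.
\item $\mV$ is arbitrary within $\Theta_M$, since $f_L$ does not depend on it.
\end{itemize}

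With these choices, each column block at relative column offset $r$ contributes $2M^2\cos(r\beta_k)$. Each row block at row offset $-1$ contributes $\sqrt2M^2$, and at row offset $0$ it contributes $\sqrt2M^2\cos\beta_k$. Three kinds of score follow.
\begin{itemize}
\item The correct position scores $\tfrac{d}{2}M^2+\tfrac{\sqrt2 d}{4}M^2$.
\item A first-row binary position at column offset $r\neq0$ scores $2M^2\sum_{k\le d/4}\cos(r\beta_k)+\tfrac{\sqrt2d}{4}M^2$.
\item A second-row binary position, including the query itself at $r=0$, scores $2M^2\sum_{k\le d/4}\cos(r\beta_k)+\sqrt2M^2\sum_{k>d/4}\cos\beta_k$. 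For $r=0$ the first term equals $\tfrac d2M^2$.
\end{itemize}
Non-binary positions score $0$.

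\textbf{Step 3: the main obstacle, concentration over all offsets simultaneously.} Every wrong-token position must lose by at least $M^2d/2-2M^2\sqrt{d\log(L+a)}$. This needs two estimates to hold at once:
\[
\sum_{k\le d/4}\cos(r\beta_k)\le \sqrt{d\log(L+a)}\quad\text{for all }1\le|r|\le L+a+b,
\]
\[
\sum_{k>d/4}\cos\beta_k\le \sqrt{d\log(L+a)}.
\]
For each fixed $r\neq0$, the variables $\cos(r\beta_k)$ are i.i.d., mean zero and bounded in $[-1,1]$. Hoeffding's inequality over $d/4$ terms then gives a tail of at most $\exp(-2\log(L+a))=(L+a)^{-2}$. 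A union bound over the $O(L+a+b)$ offsets and the single row event gives total failure probability at most $1/L$, using \Cref{ass:Lbound} to absorb constants.

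On this event, every wrong position loses to the correct position by at least $\tfrac d2M^2-2M^2\sqrt{d\log(L+a)}$. The only other position with $r=0$ is the self position, and it loses by
\[
\sqrt2M^2\Bigl(\tfrac d4-\sum_{k>d/4}\cos\beta_k\Bigr)\ge \sqrt2 M^2\Bigl(\tfrac d4-\sqrt{d\log(L+a)}\Bigr).
\]
This bound is weaker than the claimed $\tfrac d2M^2-2M^2\sqrt{d\log(L+a)}$. I expect this self-position comparison to be the delicate point. If the gap falls short, I would rebalance the norms of the column and row parts so that the self position loses by the full column amount, or accept a slightly different constant, which is harmless for the later use of the lemma.

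Finally, $g^+_{s,i}$ is at least the correct-position score, while $g^-_{s,i}$ is at most the larger of $0$ and the wrong-binary bounds above. Taking the minimum over $(s,i)$ gives the claimed lower bound on $f_L$.
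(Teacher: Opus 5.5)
Your Step 1 geometry, your zero keys for non-binary tokens, and your one-sided Hoeffding estimate are all correct. However, the construction does not prove the lemma, and the failure is exactly the self-position comparison you flag.

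Whenever $s_{i-1}\neq s_i$, the query position itself is a wrong-token position at relative position $(0,0)$. It shares the column of the correct source, so the column blocks cannot separate the two, and the whole gap must come from the row blocks. With your aligned row key $\tfrac{M}{\sqrt2}\mR_k(1)(1,1)^\top$, the $\ell_\infty$ constraint forces that block down to $\ell_2$ norm $M$. Each row block therefore separates row offset $-1$ from row offset $0$ by only $\sqrt2M^2(1-\cos\beta_k)$, and the self gap is $\sqrt2M^2\bigl(\tfrac d4-\sum_{k>d/4}\cos\beta_k\bigr)\approx \tfrac{\sqrt2}{4}M^2d$. The correct source is the global maximum in your construction, so this is an upper bound on your $f_L$, not just a weak estimate of it. Whenever $d$ is a large multiple of $\log L$, which \Cref{ass:Lbound} permits, your $f_L$ really is below $\tfrac d2M^2-2M^2\sqrt{d\log(L+a)}$.

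Neither fallback works.
\begin{itemize}
\item Rebalancing column against row norms is impossible. The column part is already saturated at $M$, and it contributes identically to the correct and self positions.
\item A smaller leading constant is not harmless. The proof of \Cref{thm:margin} needs this lemma's bound to exceed the threshold $M^2d/2-3M^2\sqrt{d\log L}$ used in \Cref{lm:upper_cornercase}. That lemma then subtracts a column term of order $\tfrac{\sqrt2}{\pi}M^2d\approx0.45\,M^2d$, so a leading coefficient of $\tfrac{\sqrt2}{4}\approx0.354$ leaves nothing.
\end{itemize}

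The missing idea, which is what the paper does, is to anti-align the row part at offset $0$ rather than align it at offset $-1$. Take the column parts of $\vq$ and $\vk$ both equal to $M\mathbf 1$, the row part of $\vq$ equal to $-M\mathbf 1$, and the row part of $\vk$ equal to $M\mathbf 1$. The paper gives every token these vectors; keeping your zero keys on non-binary tokens also works. A column block at offset $r$ then contributes $2M^2\cos(r\beta_k)$. A row block contributes $-2M^2$ at row offset $0$ and $-2M^2\cos\beta_k$ at row offset $\pm1$. The resulting scores are:
\begin{itemize}
\item The correct source scores $\tfrac d2M^2-2M^2\sum_{k>d/4}\cos\beta_k$.
\item A first-row position at column offset $r\neq0$ loses by $\tfrac d2M^2-2M^2\sum_{k\le d/4}\cos(r\beta_k)$.
\item The self position scores exactly $0$, so it loses by $\tfrac d2M^2-2M^2\sum_{k>d/4}\cos\beta_k$.
\item Every other second-row position scores at most $0$, since its column sum is at most $\tfrac d4$.
\end{itemize}
On your concentration event this gives exactly the claimed bound.

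One smaller point concerns the union bound. Appealing to \Cref{ass:Lbound} cannot turn $O(L+a+b)\cdot(L+a)^{-2}$ into $1/L$, since there is no spare constant factor. Instead, use that cosine is even, and that with zero non-binary keys the binary positions only realize column offsets $1\le|r|\le L-1$. You then need at most $L$ events, each of probability at most $(L+a)^{-2}$, which gives total failure probability at most $1/L$.
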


\begin{proof}
For each $\ttC$, we define
\[
\vk_\ttC=\begin{bmatrix}
M \\ \vdots \\M \\ M \\\vdots\\ M
\end{bmatrix},\vq_\ttC=\begin{bmatrix}
M \\ \vdots \\M \\ -M \\\vdots\\ -M
\end{bmatrix}.
\]
Here, for each $\vq_\ttC$, there are $d/2$ $M$'s and $d/2$ $(-M)$'s.
Notice that for any $i,s$, $D(s,i,i+a)=(0,1)$. Also, for any $j\neq i+a$, there exists $-L\leq t\leq L+a$ such that $D(s,i,j)=(t,0) (t\neq 0)$ or $(t,1)$. If $D(s,i,j)=(t,0)$, 
\begin{align*}
&\vq_{s_{i-1}}^\top \mR_{D(s,i,i+1)}\vk_{S(s,i,i+1)}-\vq_{s_{i-1}}^\top \mR_{D(s,i,j)}\vk_{S(s,i,j)}\\
=&M^2\cdot\frac{d}{2}-2M^2\sum_{i=1}^{d/4}\cos(t\beta_i)
\end{align*}
If $D(s,i,j)=(t,1)$,
\begin{align*}
&\vq_{s_{i-1}}^\top \mR_{D(s,i,i+1)}\vk_{S(s,i,i+1)}-\vq_{s_{i-1}}^\top \mR_{D(s,i,j)}\vk_{S(s,i,j)}\\
=&M^2\cdot\frac{d}{2}-2M^2\sum_{i=1}^{d/4}\cos(t\beta_i)-2M^2\sum_{i=d/4+1}^{d/2}\cos(\beta_i)+M^2\cdot\frac{d}{2}.
\end{align*}
Thus
\begin{align*}
&\vq_{s_{i-1}}^\top \mR_{D(s,i,i+1)}\vk_{S(s,i,i+1)}-\vq_{s_{i-1}}^\top \mR_{D(s,i,j)}\vk_{S(s,i,j)}\\
\geq& M^2 d/2-2M^2\max\left\{\max_{-L\leq t\leq L+a,t\neq 0}\left\{\sum_{i=1}^{d/4}\cos(t\beta_i)\right\},\sum_{i=d/4+1}^{d/2}\cos(\beta_i)\right\}.
\end{align*}

Therefore, with probability at least $1-\frac{1}{L}$, we have
\[
\vq_{s_{i-1}}^\top \mR_{D(s,i,i+1)}\vk_{S(s,i,i+1)}-\vq_{s_{i-1}}^\top \mR_{D(s,i,j)}\vk_{S(s,i,j)}\geq M^2 d/2-2M^2\sqrt{d\log (L+a)}.
\]
Here, the last inequality comes from \Cref{lm:cos_chernoff} where we let $l=2L+a+2$, $m=d/4$ and $c=2$.

\end{proof}

The following lemma shows that if $V_0\leq M-1$ or $V_1\leq M-1$, then the loss is big.

\begin{lemma}
\label{lm:Vnormal}
If $V_0\leq M-1$ or $V_1\leq M-1$, then 
$$
\cL\geq (1-p)\log (1+\exp(-2M))+p\cdot\log (1+\exp(-2M+2)).
$$
\end{lemma}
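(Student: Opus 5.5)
Without loss of generality I take $V_0\le M-1$; the case $V_1\le M-1$ is symmetric under swapping the tokens $0$ and $1$. The plan is to bound every per-example loss $\cL(s,i)$ from below by an elementary inequality, and then use \Cref{ass:prob} to show that examples whose target is $0$ carry total weight at least $p$. Recall the reduced form derived above:
\[
\cL(s,i)=\log\bigl(1+\exp(-2A_{s_i}(s,i)V_{s_i}+2A_{1-s_i}(s,i)V_{1-s_i})\bigr).
\]
The constraint \eqref{eq:infty_requirement} gives $|V_0|,|V_1|\le M$. The attention weights satisfy $A_0,A_1\ge 0$ and $A_0+A_1\le 1$.

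\textbf{Step 1 (a uniform lower bound).} For every pair $(s,i)$, the exponent satisfies $-2A_{s_i}V_{s_i}+2A_{1-s_i}V_{1-s_i}\ge -2A_{s_i}M-2A_{1-s_i}M\ge -2M$. Since $x\mapsto\log(1+e^x)$ is increasing, this gives $\cL(s,i)\ge\log(1+\exp(-2M))$.

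\textbf{Step 2 (the improved bound when $s_i=0$).} Here the exponent is $-2A_0V_0+2A_1V_1\ge -2A_0V_0-2A_1M$. If $V_0\ge 0$, this is at least $-2A_0(M-1)-2A_1M\ge -2(A_0+A_1)M+2A_0$. The remaining difficulty is that $A_0$ could be small. In that case, however, $A_1$ is also small or the non-binary tokens absorb the mass; either way $A_0+A_1$ is small. Concretely, $-2A_0(M-1)-2A_1M\ge -2(M-1)$ whenever $A_0+A_1\le 1$ and $M\ge 1$: if $A_1\le 1-A_0$, the expression is at least $-2A_0(M-1)-2(1-A_0)M=-2M+2A_0$. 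This is not obviously $\ge -2M+2$, so I have to be careful here.

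The trick is to exploit the $i$ with $s_{i}=0$ more cleverly, or else to accept the bound in the form the lemma needs. I would instead argue directly: the lemma only requires the per-example bound $\log(1+\exp(-2M+2))$ on a set of mass $p$. So I would choose one specific string in which the attention structure forces $A_0$ to be large. The natural choice is $s=0^{\ell}$ with target $0$, for which $A_1=0$ identically. The exponent is then $-2A_0V_0\ge -2(M-1)$ (when $V_0\ge0$; when $V_0<0$ it is $\ge 0$), so $\cL(s,i)\ge\log(1+\exp(-2M+2))$. This also works for $V_0<0$, because $0\ge -2M+2$.

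\textbf{Step 3 (weighting).} By \Cref{ass:prob}, the string $s=0^{L}$ (or any all-zero string) has probability at least $Lp$. Its normalized contribution $\frac1{|s|}\sum_i\cL(s,i)$ is therefore at least $\log(1+\exp(-2M+2))$, and it enters with weight at least $p$. All remaining mass, at most $1-p$ in the normalized sense, is bounded below by Step 1. Combining the two gives $\cL\ge(1-p)\log(1+\exp(-2M))+p\log(1+\exp(-2M+2))$, using that the bound from Step 2 dominates the bound from Step 1 so that assigning exactly weight $p$ is valid.

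The main obstacle is Step 2: a generic string with target $0$ need not have large $A_0$. Restricting to all-zero strings, where the wrong token is absent, is what makes the argument go through.
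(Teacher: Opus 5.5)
Your argument is correct and is essentially the paper's proof: assume $V_0\le M-1$, use an all-zero string (where $A_1\equiv 0$ makes the exponent $-2A_0V_0\ge -2M+2$), bound every other per-example loss by $\log(1+\exp(-2M))$, and use \Cref{ass:prob} to give the all-zero example weight at least $p$. In a clean write-up, drop the abandoned claim in Step 2 that $-2A_0(M-1)-2A_1M\ge -2(M-1)$ whenever $A_0+A_1\le 1$, since it fails at $A_0=0$, $A_1=1$. Also state explicitly that the $V_0<0$ case uses $M\ge 1$, which the paper assumes implicitly as well.
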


\begin{proof}
Without loss of generality, let $V_0\leq M-1$, we consider the string $s$ only containing 0, we know that the loss
\[
\cL(s,i)=\log (1+\exp(-2A_0(s,i)V_0))\geq \log(1+\exp(-2M+2))
\]
since there is no token $1$ in the sequence.
Notice that for any other $s',i'$, $\cL(s',i')\geq \log(1+\exp(-2M))$. Since the probability of sampling $(s,i)$ is at least $p$, we have
\[
\cL\geq (1-p)\log (1+\exp(-2M))+p\cdot\log (1+\exp(-2M+2)).
\]
The result is similar for $V_1\leq M-1$. Therefore, the lemma follows.
\end{proof}

The following lemma shows that small loss requires a big $f_L(\mQ,\mK,\mV)$.
\begin{lemma}
\label{lm:loss_lowerbound}
For any $S>M$, if $f_L(\mQ,\mK,\mV)\leq S$, then we have
\[
\cL\geq \log (1+\exp(-2M))+\frac{\exp(-2M)}{1+\exp(-2M)}\cdot\frac{M\exp(-S)}{2L+a+b}\cdot p.
\]
\end{lemma}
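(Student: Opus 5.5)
The plan is to fix a single worst-case pair $(s,i)$ that witnesses the small value of $f_L$, show that this pair alone forces a definite excess loss above the baseline $\log(1+\exp(-2M))$, and observe that every other pair costs at least the baseline. Throughout I use the reduction already made above: $V_{\ttC,0}+V_{\ttC,1}=0$, and non-binary tokens have zero value (\Cref{ass:zerovalue}). This gives
\[
\cL(s,i)=\log\bigl(1+\exp(-2A_{s_i}(s,i)V_{s_i}+2A_{1-s_i}(s,i)V_{1-s_i})\bigr).
\]

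\textbf{Step 1 (case split on $\mV$).} If $V_0\le M-1$ or $V_1\le M-1$, I invoke \Cref{lm:Vnormal}. It then suffices to check that
\[
p\bigl[\log(1+e^{-2M+2})-\log(1+e^{-2M})\bigr]
\]
dominates the claimed excess term. The bracket is of order $(e^2-1)e^{-2M}/(1+e^{-2M})$. The claimed excess carries the factor $Me^{-S}/(2L+a+b)\le Me^{-M}$, because $S>M$, and this factor is tiny for large $M$. So the inequality holds.

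Otherwise $V_0,V_1\in(M-1,M]$, and in particular both are positive. Then the term $2A_{1-s_i}V_{1-s_i}$ is nonnegative. Hence for every pair
\[
\cL(s,i)\ge \log\bigl(1+\exp(-2A_{s_i}(s,i)M)\bigr)\ge\log(1+e^{-2M}).
\]

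\textbf{Step 2 (the witnessing pair).} Pick $(s,i)$ with $|s|\le L$ attaining the minimum in $f_L$, so that $g^+_{s,i}-g^-_{s,i}\le S$. Let $j^\ast$ be a position whose token differs from $s_i$ and whose score equals $g^-_{s,i}$. All scores in the row are at most $\max(g^+,g^-)$, and the row has $T=|s|+a+b+i-1\le 2L+a+b$ entries. So the softmax weight on $j^\ast$ is at least $\exp(g^- - \max(g^+,g^-))/T\ge e^{-S}/T$, using $S>0$. This weight is not counted in $A_{s_i}(s,i)$, so
\[
A_{s_i}(s,i)\le 1-\frac{e^{-S}}{2L+a+b}.
\]
This step works whether $j^\ast$ is a binary token or a delimiter, which is why I bound $A_{s_i}$ from above rather than $A_{1-s_i}$ from below.

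\textbf{Step 3 (convexity and averaging).} Set $x=-2M$ and $\delta=2Me^{-S}/(2L+a+b)$. The function $x\mapsto\log(1+e^x)$ is convex with derivative $e^x/(1+e^x)$. Therefore
\[
\cL(s,i)\ge\log(1+e^{x+\delta})\ge\log(1+e^{-2M})+\frac{e^{-2M}}{1+e^{-2M}}\cdot\frac{2Me^{-S}}{2L+a+b}.
\]
Finally, $\cL(\vtheta)$ is an average of the $\cL(s',i')$, each at least the baseline, and the weight of $(s,i)$ is at least $p$ by \Cref{ass:prob}. Combining these gives the stated bound, with a spare factor of $2$.

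The main obstacle I expect is Step 2. One must control the attention mass escaping from the correct token even when the competing position is a zero-value delimiter, and even when wrong positions outscore all correct ones. The argument above, which bounds the mass on $j^\ast$ by comparing it with the row maximum, handles both situations uniformly.
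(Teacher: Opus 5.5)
Your proof is correct and follows the same route as the paper: the case split via \Cref{lm:Vnormal}, a witnessing pair $(s,i)$ from the definition of $f_L$ that forces attention weight at least $e^{-S}/(2L+a+b)$ onto a wrong-token position, the tangent-line bound of \Cref{lm:inequality}, and averaging with weight at least $p$. Your Step 2 bounds $A_{s_i}$ from above against the row maximum, which spells out the delimiter case and the case $g^-_{s,i}\ge g^+_{s,i}$ that the paper passes over, and it gains a factor of $2$. Also, Step 1 needs no largeness of $M$: the bracket is at least $2e^{-2M}/(1+e^{-2M})$, while $Me^{-S}/(2L+a+b)\le Me^{-M}\le 1/e<2$.
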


\begin{proof}
First, if $V_0\leq M-1$ or $V_1\leq M-1$, by \Cref{lm:Vnormal} and \Cref{lm:inequality},
\begin{align*}
\cL&\geq (1-p)\log (1+\exp(-2M))+p\cdot\log (1+\exp(-2M+2))\\
&\geq \log (1+\exp(-2M))+\frac{\exp(-2M)}{1+\exp(-2M)}\cdot\frac{M\exp(-S)}{2L+a+b}\cdot p.
\end{align*}

In the following, we only consider the case such that $V_0,V_1>M-1$.

By definition, there exists a pair of $(s,i)$ such that
\[
g_{s,i}^+(\mQ,\mK,\mV)-g_{s,i}^-(\mQ,\mK,\mV)\leq S.
\]
Without loss of generality, we also assume that $s_i=0$.
The above inequality means that there exists a token $\ttC\neq 0$ in $s$ such that the attention score from $s_i$ to it is at least the attention score to any other minus $S$.
As there are at most $2L+a+b$ tokens in the input, the attention to token $\ttC$ is at least $\exp(-S)/(2L+a+b)$.
Therefore,
\begin{align*}
\cL(s,i)&=\log\left(1+\exp\left(-2A_{s_i}(s,i)V_{s_i}+2A_{1-s_i}(s,i)V_{1-s_i}\right)\right)\\
&\geq \log\left(1+\exp\left(-2M+M\cdot\frac{\exp(-S)}{2L+a+b}\right)\right)\\
&\geq \log (1+\exp(-2M))+\frac{\exp(-2M)}{1+\exp(-2M)}\cdot M\cdot\frac{\exp(-S)}{2L+a+b}.
\end{align*}
The last inequality is by \cref{lm:inequality}.
Notice that for any other $(s',i')$, we also have
\[
\cL(s',i')\geq \log (1+\exp(-2M)).
\]
Thus the result follows.

\end{proof}

\subsubsection{Large Training Margin Generalizes to Longer Strings}

We next show that, if $f_L(\mQ,\mK,\mV)$ is sufficiently large, then for inputs far longer than those seen during training, most of the attention is assigned to positions containing the correct token value, which nearly guarantees correct prediction.

\begin{lemma}
\label{lm:upper_cornercase}
Assume that $M>L^2$.
Then, with probability at least $1-\frac{2}{d}$, for any parameter
$\vtheta=(\mQ,\mK,\mV)$ such that
\[
f_L(\mQ,\mK,\mV)
\geq
M^2d/2-3M^2\sqrt{d\log L},
\]
and any pair $(s,i)$ with $|s|\leq L^{\sqrt{d}/2}$, the total attention to token $s_i$ over token $1-s_i$ satisfies
\[
\frac{A_{s_i}(s,i)}{A_{1-s_i}(s,i)}
\geq
\exp(M^2).
\]
\end{lemma}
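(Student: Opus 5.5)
The plan is to use the margin condition on training strings to pin down the geometry of the query/key vectors. Once that geometry is fixed, the long-string attention pattern follows from a concentration bound over the random frequencies $\vbeta$, union-bounded over all relative offsets up to $L^{\sqrt d/2}$.

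\emph{Setup.} Write the score $\vq_{\ttC}^\top \mR_{(t,r)}\vk_{\ttC'}$ as a sum over the $d/2$ rotation blocks. The first $d/4$ blocks are column blocks, rotated by $t\beta_j$; the last $d/4$ are row blocks, rotated by $r\beta_j$. Under \Cref{eq:infty_requirement}, each block contributes at most $\|q_j\|\|k_j\|\le 2M^2$, so every score is at most $dM^2$. The only relative positions that occur are $(t,0)$ and $(t,1)$; under $b=a+2$ the correct source sits at $(0,1)$. For the construction of \Cref{lm:f_lowerbound} the margin is exactly $M^2d/2$ minus fluctuation. The hypothesis $f_L\ge M^2d/2-3M^2\sqrt{d\log L}$ therefore says we are within a lower-order slack of that construction.

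\emph{Step 1: near-alignment of the blocks.} I would pick training strings $s$ of length $\le L$ so that a wrong token (the one the margin penalises) can be placed at every column offset $|t|\le L$, in both the source row and the output row. This is possible because every string of length $\le L$ is in the support by \Cref{ass:prob}. This gives, for every relevant query token $c\in\{0,1,\texttt{B}_b\}$ and value tokens $c'\ne c''$, the family of inequalities
\[
\vq_c^\top\mR_{(0,1)}\vk_{c'}-\vq_c^\top\mR_{(t,r)}\vk_{c''}\ge M^2d/2-3M^2\sqrt{d\log L},\qquad |t|\le L,\ r\in\{0,1\}.
\]
Averaging over $t\in[-L,L]$ makes the column-block part of the subtracted term average to $O(M^2\sum_j 1/(L\|t\beta_j\|))$. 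For random $\beta_j$ this is small with probability $1-1/d$, after excluding $\beta_j$ within $L^{-1/2}$ of $0$ or $2\pi$, a union over $d/2$ frequencies.

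Summing the $r=0$ and $r=1$ cases then forces two things. The column blocks must be nearly aligned, with $q_j^\top R(0)k_j\ge 2M^2-\epsilon_j$ and $\sum_j\epsilon_j=O(M^2\sqrt{d\log L})$. The row blocks must satisfy an analogous alignment, namely rotating $k$ by $\beta_j$ nearly matches $q$ while the unrotated pairing is bounded away from it. So all tokens' $q,k$ are, up to $\ell_1$-small error, the specific vectors of \Cref{lm:f_lowerbound}.

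\emph{Step 2: long strings.} Take $|s|\le L^{\sqrt d/2}$. Every wrong position has offset $(t,r)$ with $|t|\le 2L^{\sqrt d/2}$. Using Step 1, its score deficit relative to the correct source is at least
\[
2M^2\sum_{j\le d/4}(1-\cos t\beta_j)+(\text{row-block gap if } r=0)-O(M^2\sqrt{d\log L}).
\]
I would apply \Cref{lm:cos_chernoff}, with a union bound over $l\approx 4L^{\sqrt d/2}$ offsets, to get $\sum_{j\le d/4}\cos t\beta_j\le O(\sqrt{d\cdot\sqrt d\log L})=O(d^{3/4}\sqrt{\log L})$. Since $d>\log^3L$ (\Cref{ass:Lbound}), this is $o(d)$, so the deficit is $\Omega(M^2 d)$, with failure probability $\le 1/d$. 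Exponentiating and summing over at most $3L^{\sqrt d/2}$ positions gives $A_{1-s_i}\le 3L^{\sqrt d/2}e^{-\Omega(M^2d)}A_{s_i}$. Because $M>L^2$, this is $\le e^{-M^2}A_{s_i}$. Adding the two failure events gives probability $1-2/d$.

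\emph{Main obstacle.} The hard step is Step 1, the inverse direction: deducing near-alignment of every block from a single worst-case margin. The margin constrains only differences of scores, and query vectors vary across $c\in\{0,1,\texttt{B}_b\}$. I would first try a clean averaging argument over $t$ with explicit Dirichlet-kernel bounds. If the per-block errors cannot be controlled in $\ell_1$, I would fall back to working directly with the averaged inequality, bounding the long-range score by the same average plus a concentration term, without ever reconstructing the vectors.
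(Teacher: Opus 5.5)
\paragraph{Where Step~1 fails.} Your Step~2 and the probability bookkeeping are essentially fine, but Step~1, which everything else relies on, does not hold as written.

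First, $f_L$ controls $g^+_{s,i}-g^-_{s,i}$, and $g^+_{s,i}$ is a \emph{maximum} over all positions carrying the correct token. It is not the score of the correct source at offset $(0,1)$. Suppose the previously copied token equals the target ($c=c'$). Then the correct token also sits at the query position $(0,0)$ and at offset $(-1,1)$ in the source row. So no training string gives you an inequality with $\vq_c^\top\mR_{(0,1)}\vk_{c'}$ alone on the left.

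Second, take the cases where the inequality is clean. Write $X_{cc'}(t)$ and $Y_{cc'}(r)$ for the column-block and row-block parts of $\vq_c^\top\mR_{(t,r)}\vk_{c'}$. Averaging over $t$ only gives $X_{cc'}(0)+Y_{cc'}(1)-Y_{cc''}(r)\ge M^2d/2-\epsilon$. Summing over $r\in\{0,1\}$ cannot force $X_{cc'}(0)\approx M^2d/2$, because the row difference $Y_{cc'}(1)-Y_{cc''}(r)$ can carry the whole margin. Column alignment follows only after you add the inequalities for the \emph{two} targets $c'\leftrightarrow c''$ under the same query token. That is exactly the case the first point blocks.

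Your row-block description also fails for the reference construction of \Cref{lm:f_lowerbound}: there the row blocks of $\vq_\ttC$ are the negatives of those of $\vk_\ttC$. So the reconstruction you feed into Step~2 is not established. In any case, Step~2 must measure deficits against $g^+_{s,i}$, not against the correct source.

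\paragraph{What the fallback is missing, and what the paper does.} Your fallback points in the right direction but lacks the estimate that makes it work. Because $\vtheta$ may depend on $\vbeta$, the long-range column score has no mean-zero concentration unless you already have alignment. What holds uniformly is the exact box supremum of a single $2\times2$ block: $\max_{u,w\in[-M,M]^2}u^\top \mR_j(t)\, w=2M^2\max\{|\cos t\beta_j|,|\sin t\beta_j|\}$. The mean of $\max\{|\cos|,|\sin|\}$ under a uniform angle is $2\sqrt2/\pi<1$. So Hoeffding plus a union bound over offsets (\Cref{lm:cos_concentration}) caps the column part of any wrong score at about $\frac{\sqrt2}{\pi}M^2d<\frac12M^2d$.

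The paper combines this with a transfer step that keeps $g^+$ as a max. For the long pair $(s,i)$ and a wrong position $s^*$, it builds one training pair $(s',i')$ with $|s'|=L$. Every correct-token position of $(s',i')$ reappears at the same offset in $\Str(s,i)$, so $g^+_{s,i}\ge g^+_{s',i'}$. In that pair, $L-2$ wrong tokens fill consecutive columns of $s^*$'s row. Averaging the margin over them kills the column part by the Dirichlet bound (\Cref{lm:cossum}). This leaves $g^+_{s,i}-B\ge M^2d/2-O(M^2\sqrt{d\log L})$, where $B$ is the row contribution shared with $s^*$. Subtracting the box bound then gives a gap of at least $0.01M^2d$, with no reconstruction needed.

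Your route can be salvaged by using this same domination $g^+_{s,i}\ge g^+_{s',i'}$ and pairing the two targets. The resulting alignment of $(\vq_c,\vk_c)$ may then sit at the argmax column offset rather than at $0$. That version yields a larger gap, but the argument is noticeably longer.
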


\begin{proof}
We consider an arbitrary pair $(s,i)$ with $|s|\leq L^{\sqrt{d}/2}$.
Without loss of generality, we assume that $s_i=0$.
When $i=1$, we replace $\vq_{s_{i-1}}$ below by the query vector of the last prompt token $\vq_{\symO}$.

Fix a position attaining $g_{s,i}^+(\mQ,\mK,\mV)$.
We denote the contributions of the $x$-axis and the $y$-axis to this attention score by
$g_{s,i}^{+x}(\mQ,\mK,\mV)$ and
$g_{s,i}^{+y}(\mQ,\mK,\mV)$, respectively.
Thus,
\[
g_{s,i}^+(\mQ,\mK,\mV)
=
g_{s,i}^{+x}(\mQ,\mK,\mV)
+
g_{s,i}^{+y}(\mQ,\mK,\mV).
\]

Now consider an arbitrary position $j^*$ such that
\[
S(s,i,j^*)=1.
\]
We denote this token by $s^*$.
We want to show that the attention score to $s^*$ is much smaller than
$g_{s,i}^+(\mQ,\mK,\mV)$.

We construct a training pair $(s',i')$ as follows.
If $i=1$, we let
\[
|s'|=L,\qquad i'=1,
\]
and set
\[
s'_1=s_i,\qquad
s'_2=\cdots=s'_L=1-s_i.
\]
If $i>1$, we let
\[
|s'|=L,\qquad i'=L,
\]
and set
\[
(s'_{L-1},s'_L)=(s_{i-1},s_i),
\qquad
s'_1=\cdots=s'_{L-2}=1-s_i.
\]

For this construction, every position containing token $s_i$ in
$\Str(s',i')$ has the same relative position as a position containing
token $s_i$ in $\Str(s,i)$.
Therefore,
\[
g_{s',i'}^+(\mQ,\mK,\mV)
\leq
g_{s,i}^+(\mQ,\mK,\mV).
\]

We first consider the case where the relative column position between
$s^*$ and the query token is zero.
In this case, $s^*$ can only be the query position itself, and hence
$i>1$ and $s_{i-1}=1-s_i$.
The same token with the same relative position $(0,0)$ also appears in
$\Str(s',i')$.
Therefore, by the definition of $f_L$,
\[
g_{s,i}^+(\mQ,\mK,\mV)
-
\vq_{s_{i-1}}^\top
\mR_{(0,0)}
\vk_{1}
\geq
f_L(\mQ,\mK,\mV)
\geq
M^2d/2-3M^2\sqrt{d\log L}.
\]
Thus, the desired score gap already holds in this case.

In the following, we consider the case where the relative column
position of $s^*$ is nonzero.
In the constructed pair $(s',i')$, there are at least $L-2$ tokens equal
to $1$ in the same row as $s^*$.
For every such position $j$, we have
\[
g_{s',i'}^-(\mQ,\mK,\mV)
\geq
\vq_{s_{i-1}}^\top
\mR_{D(s',i',j)}
\vk_{1}.
\]
Together with
\[
g_{s,i}^+(\mQ,\mK,\mV)
\geq
g_{s',i'}^+(\mQ,\mK,\mV),
\]
this gives
\begin{equation}
\label{eq:2dattention}
g_{s,i}^+(\mQ,\mK,\mV)
-
\vq_{s_{i-1}}^\top
\mR_{D(s',i',j)}
\vk_{1}
\geq
M^2d/2-3M^2\sqrt{d\log L}.
\end{equation}

Notice that
\[
\vq_{s_{i-1}}^\top
\mR_{D(s',i',j)}
\vk_{1}
\]
consists of two parts corresponding to the $x$-axis and the $y$-axis.
We denote them by
\[
\left(
\vq_{s_{i-1}}^\top
\mR_{D(s',i',j)}
\vk_{1}
\right)_x
\quad\text{and}\quad
\left(
\vq_{s_{i-1}}^\top
\mR_{D(s',i',j)}
\vk_{1}
\right)_y.
\]

Let
\[
B:=
\left\{
\begin{matrix}
\left(
\vq_{s_{i-1}}^\top
\mR_{(0,1)}
\vk_{1}
\right)_y,
&
s^*\text{ is in the first row},
\\[2mm]
\left(
\vq_{s_{i-1}}^\top
\mR_{(0,0)}
\vk_{1}
\right)_y,
&
s^*\text{ is in the second row}.
\end{matrix}
\right.
\]
Thus, $B$ is the contribution of the $y$-axis to the attention score
of $s^*$.
Moreover, all the $L-2$ positions considered above have the same
$y$-axis contribution $B$.

By \cref{lm:cossum}, taking $\delta=3/d^2$ and applying a union bound
over the $d/4$ RoPE blocks corresponding to the $x$-axis, with
probability at least $1-\frac{1}{d}$, the absolute value of every
relevant interval sum of sines and cosines is at most $d^2$ as for any $1\leq j\leq d/4$,
\[
\left|\sum_{k=2}^{L-1} \cos(k\theta_j)\right|,\left|\sum_{k=2}^{L-1}\sin(k\theta_j)\right|,\left|\sum_{k=1}^{L-2} \cos(k\theta_j)\right|,\left|\sum_{k=1}^{L-2}\sin(k\theta_j)\right|\leq d^2.
\]

Since every coordinate of $\vq$ and $\vk$ is bounded by $M$, this gives
\[
\left|
\frac{1}{L-2}
\sum_j
\left(
\vq_{s_{i-1}}^\top
\mR_{D(s',i',j)}
\vk_{1}
\right)_x
\right|
\leq
\frac{M^2d^3}{L-2},
\]
where the sum is over the $L-2$ positions considered above.

Taking the average of \cref{eq:2dattention} over these positions, we
obtain
\[
g_{s,i}^{+x}(\mQ,\mK,\mV)
+
g_{s,i}^{+y}(\mQ,\mK,\mV)
-
B
\geq
M^2d/2
-
3M^2\sqrt{d\log L}
-
\frac{M^2d^3}{L-2}.
\]

We next bound the $x$-axis contribution to the attention score of
$s^*$.
For every nonzero relative column position $t$, the random variables
$t\beta_j\bmod 2\pi$ are independently distributed as
$\Unif[0,2\pi]$.
Therefore, by \cref{lm:cos_concentration}, for each such $t$, with
probability at least $1-L^{-\sqrt d}$,
\[
\left(
\vq_{s_{i-1}}^\top
\mR_{D(s,i,j^*)}
\vk_{1}
\right)_x
\leq
M^2
\left(
\frac{\sqrt{2}}{\pi}d
+
2d^{3/4}\sqrt{\log L}
\right).
\]
Applying a union bound over all possible relative column positions for
strings of length at most $L^{\sqrt d/2}$, with probability at least
$1-\frac{1}{d}$, this inequality holds simultaneously for every such
position.

Combining the above inequalities, the attention score gap between the
maximal position containing token $0$ and the position $j^*$ is at least
\begin{align*}
&
g_{s,i}^+(\mQ,\mK,\mV)
-
\vq_{s_{i-1}}^\top
\mR_{D(s,i,j^*)}
\vk_{1}
\\
\geq\;&
M^2d/2
-
3M^2\sqrt{d\log L}
-
\frac{M^2d^3}{L-2}
-
M^2
\left(
\frac{\sqrt{2}}{\pi}d
+
2d^{3/4}\sqrt{\log L}
\right)
\\
\geq\;&
0.01M^2d.
\end{align*}
The last inequality follows from \cref{ass:Lbound}, after increasing
the universal lower bound on $d$ and $L$ in that assumption if
necessary.

Since $j^*$ is arbitrary, the attention score to the maximal position
containing token $0$ is larger than the attention score to every
position containing token $1$ by at least $0.01M^2d$.
The total number of tokens in the input is at most
\[
2|s|+a+b
\leq
3L^{\sqrt d/2}.
\]
Therefore,
\[
\frac{A_{s_i}(s,i)}{A_{1-s_i}(s,i)}
\geq
\frac{1}{3L^{\sqrt d/2}}
\exp(0.01M^2d)
\geq
\exp(M^2),
\]
where the last inequality follows from $M>L^2$ and $d>10000$.

Combining the two high-probability events above, the result holds with
probability at least $1-\frac{2}{d}$.
\end{proof}

\subsubsection{Proof of \cref{thm:margin}}
\begin{proof}
Let
\[
S_0
=
M^2d/2-2M^2\sqrt{d\log(L+a)}
\]
and
\[
S
=
M^2d/2-3M^2\sqrt{d\log L}.
\]

By \cref{lm:reduction_upperbound} and \cref{lm:f_lowerbound}, with
probability at least $1-\frac{1}{L}$, there exists a feasible parameter
set such that
\begin{align}
\cL
\leq\;&
\log(1+\exp(-2M))
\nonumber\\
&+
\frac{\exp(-2M)}{1+\exp(-2M)}
\cdot
6ML\exp(-S_0)
\nonumber\\
&+
\frac{72\exp(-2M)}{1+\exp(-2M)}
\cdot
M^2L^2\exp(-2S_0).
\label{eq:global-min-upper}
\end{align}

Notice that
\[
S_0-S
=
M^2
\left(
3\sqrt{d\log L}
-
2\sqrt{d\log(L+a)}
\right)
>0
\]
when $L$ is sufficiently large.
Therefore, when $M$ is sufficiently large,
\begin{align}
&
\frac{\exp(-2M)}{1+\exp(-2M)}
\cdot
\frac{M\exp(-S)}{2L+a+b}
\cdot p
\nonumber\\
>\;&
\frac{\exp(-2M)}{1+\exp(-2M)}
\cdot
6ML\exp(-S_0)
\nonumber\\
&+
\frac{72\exp(-2M)}{1+\exp(-2M)}
\cdot
M^2L^2\exp(-2S_0).
\label{eq:loss-comparison}
\end{align}

Let
\[
\vtheta_M^*
=
(\mQ^*,\mK^*,\mV^*)
\]
be a global minimizer under the constraint $\|\vtheta\|_\infty\leq M$.
Suppose that
\[
f_L(\mQ^*,\mK^*,\mV^*)\leq S.
\]
Then, by \cref{lm:loss_lowerbound} and
\cref{eq:loss-comparison},
\begin{align*}
\cL(\vtheta_M^*)
\geq\;&
\log(1+\exp(-2M))
\\
&+
\frac{\exp(-2M)}{1+\exp(-2M)}
\cdot
\frac{M\exp(-S)}{2L+a+b}
\cdot p
\end{align*}
is strictly larger than the upper bound in
\cref{eq:global-min-upper}.
This contradicts the global optimality of $\vtheta_M^*$.
Therefore,
\[
f_L(\mQ^*,\mK^*,\mV^*)
>
M^2d/2-3M^2\sqrt{d\log L}.
\]

We next show that
\[
V_0,V_1>M-1.
\]
Suppose otherwise.
Then, by \cref{lm:Vnormal},
\[
\cL(\vtheta_M^*)
\geq
(1-p)\log(1+\exp(-2M))
+
p\log(1+\exp(-2M+2)).
\]
Moreover, by \Cref{lm:inequality},
\begin{align*}
&
(1-p)\log(1+\exp(-2M))
+
p\log(1+\exp(-2M+2))
\\
\geq\;&
\log(1+\exp(-2M))
+
\frac{2p\exp(-2M)}{1+\exp(-2M)}.
\end{align*}
When $M$ is sufficiently large, we have
\begin{align*}
\frac{2p\exp(-2M)}{1+\exp(-2M)}
>\;&
\frac{\exp(-2M)}{1+\exp(-2M)}
\cdot
6ML\exp(-S_0)
\\
&+
\frac{72\exp(-2M)}{1+\exp(-2M)}
\cdot
M^2L^2\exp(-2S_0).
\end{align*}
Thus, $\cL(\vtheta_M^*)$ would again be strictly larger than the loss
of the feasible parameter set in \cref{eq:global-min-upper}, which
contradicts the global optimality of $\vtheta_M^*$.
Therefore,
\[
V_0,V_1>M-1.
\]

Applying \cref{lm:upper_cornercase}, with probability at least
$1-\frac{2}{d}$, for every pair $(s,i)$ satisfying
$|s|\leq L^{\sqrt d/2}$,
\[
\frac{A_{s_i}(s,i)}{A_{1-s_i}(s,i)}
\geq
\exp(M^2)
\geq
2.
\]
Since
\[
\vv_{\ttC}=0
\qquad
\text{for every }
\ttC\in\cV\backslash\{0,1\},
\]
the Transformer correctly predicts $s_i$ if
\[
A_{s_i}(s,i)V_{s_i}
-
A_{1-s_i}(s,i)V_{1-s_i}
>0.
\]
Using
\[
V_{s_i}>M-1,
\qquad
V_{1-s_i}\leq M,
\]
we have
\begin{align*}
A_{s_i}(s,i)V_{s_i}
-
A_{1-s_i}(s,i)V_{1-s_i}
&>
2A_{1-s_i}(s,i)(M-1)
-
A_{1-s_i}(s,i)M
\\
&=
A_{1-s_i}(s,i)(M-2)
\\
&>0
\end{align*}
for $M>3$.
Therefore, the Transformer correctly copies every binary string with
length at most $L^{\sqrt d/2}$.

Finally, since $L>d^4$, we have $\frac{1}{L}\leq\frac{1}{d}$.
Combining the probability $1-\frac{1}{L}$ from
\cref{lm:f_lowerbound} with the probability
$1-\frac{2}{d}$ from \cref{lm:upper_cornercase}, the result holds with
probability at least
\[
1-\frac{1}{L}-\frac{2}{d}
\geq
1-\frac{3}{d},
\]
which completes the proof.
\end{proof}

\subsection{Technical Lemmas}
\label{subsec:techlemma}
In this subsection, we provide technical lemmas used in the previous subsection.

\begin{lemma}
\label{lm:inequality}
Consider the function $g(x)=\log(1+\exp(-2M+x))$.
Then for any $x>0$,
\[
g(x)\geq \log(1+\exp(-2M))+\frac{\exp(-2M)}{1+\exp(-2M)}\cdot x.
\]
Moreover, when $0\leq x\leq 0.1$, we have
\[
g(x)\leq \log(1+\exp(-2M))+\frac{\exp(-2M)}{1+\exp(-2M)}\cdot x+\frac{2\exp(-2M)}{1+\exp(-2M)}\cdot x^2
\]
\end{lemma}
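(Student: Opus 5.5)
The statement is elementary, so I would argue directly from one-variable calculus. Write $c:=\exp(-2M)$ and $h(x):=\log(1+c\,e^{x})$, so that $g(x)=h(x)$ and $h(0)=\log(1+\exp(-2M))$.

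Differentiating gives
\[
h'(x)=\frac{c e^x}{1+c e^x}=\sigma(x-2M),
\qquad
h''(x)=\sigma(x-2M)\bigl(1-\sigma(x-2M)\bigr),
\]
where $\sigma$ is the logistic function. In particular $h'(0)=\frac{\exp(-2M)}{1+\exp(-2M)}$.

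\textbf{Lower bound.} The function $h$ is convex because $h''\ge 0$. Convexity places $h$ above its tangent line at $0$:
\[
h(x)\ge h(0)+h'(0)\,x \qquad\text{for every real }x.
\]
This holds in particular for $x>0$, which is exactly the first claim.

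\textbf{Upper bound.} I would use Taylor's theorem with Lagrange remainder:
\[
h(x)=h(0)+h'(0)x+\tfrac12 h''(\xi)x^2 \qquad\text{for some }\xi\in[0,x].
\]
Since $1-\sigma\le 1$, we get $h''(\xi)\le\sigma(\xi-2M)=\frac{c e^{\xi}}{1+c e^{\xi}}$. This expression is increasing in $\xi$. Moreover, $\frac{c e^{\xi}}{1+c e^{\xi}}\le \frac{c e^{\xi}}{1+c}$, because the denominator only grows with $\xi$. For $\xi\le x\le 0.1$ this gives
\[
h''(\xi)\le e^{0.1}\,\frac{c}{1+c}.
\]
Therefore
\[
\tfrac12 h''(\xi)x^2\le \tfrac{e^{0.1}}{2}\cdot\frac{c}{1+c}\,x^2\le \frac{2c}{1+c}\,x^2,
\]
since $e^{0.1}/2<2$. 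This is the second claim, with room to spare.

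The only point needing any care is bounding $h''$ uniformly on $[0,0.1]$ relative to $h'(0)$. The bound $(1-\sigma)\le 1$ together with the monotonicity of $\sigma$ handles it, so I expect no real obstacle.
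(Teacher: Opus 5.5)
Your proof is correct. The lower bound is exactly the paper's argument: monotonicity of $g'$ gives convexity, and convexity gives the tangent line at $0$.

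For the upper bound you take a slightly different route. The paper never uses $g''$. It bounds $g(x)-g(0)=\int_0^x g'(t)\,dt\le x\,g'(x)$ using the same monotonicity of $g'$. It then controls $g'(x)$ through $e^x\le 1+x+x^2$ on $[0,0.1]$ and $\frac{cy}{1+cy}\le\frac{c}{1+c}\,y$ for $y\ge 1$, and finally absorbs $x^3\le x^2$. You instead use the second-order Lagrange remainder together with $h''=\sigma(1-\sigma)\le\sigma=h'$, so the estimate reduces to $h'(\xi)\le e^{\xi}h'(0)$.

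Both arguments rest on the same fact: $g'$ grows by at most a factor $e^{x}$ over $[0,x]$. Yours isolates the quadratic term as a curvature term and gives the sharper constant $e^{0.1}/2$ in place of $2$. The paper's stays first-order and reuses only the monotonicity of $g'$ already needed for the lower bound, at the cost of the auxiliary polynomial bound on $e^x$.
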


\begin{proof}
Direct calculation shows that
\[
\frac{d g(x)}{dx}=\frac{\exp(-2M+x)}{1+\exp(-2M+x)},
\]
which increases with $x$.

Therefore,
\[
g(x)\geq \log(1+\exp(-2M))+\frac{d g(x)}{dx}\bigg|_{x=0}\cdot x=\log(1+\exp(-2M))+\frac{\exp(-2M)}{1+\exp(-2M)}\cdot x.
\]
On the other hand, notice that when $x\leq 0.1$, $\exp(x)\leq 1+x+x^2$. Therefore,
\begin{align*}
g(x)&\leq \log(1+\exp(-2M))+x\cdot \frac{\exp(-2M+x)}{1+\exp(-2M+x)}\\
&\leq \log(1+\exp(-2M))+x\cdot \frac{\exp(-2M)(1+x+x^2)}{1+\exp(-2M)(1+x+x^2)}\\
&\leq \log(1+\exp(-2M))+x\cdot \frac{\exp(-2M)}{1+\exp(-2M)}(1+x+x^2)\\
&\leq \log(1+\exp(-2M))+\frac{\exp(-2M)}{1+\exp(-2M)}\cdot x+\frac{2\exp(-2M)}{1+\exp(-2M)}\cdot x^2.
\end{align*}
Thus the result follows.
\end{proof}

\begin{lemma}
\label{lm:cos_chernoff}
Let $\theta_1,\ldots,\theta_m \sim \Unif[0,2\pi]$ be independent. Then for any
$\delta \in (0,1)$, with probability at least $1-\delta$,
\[
\left|\sum_{i=1}^m \cos(\theta_i)\right|
\leq \sqrt{2m\log\frac{2}{\delta}}.
\]
In particular, for any constant $c,l>0$, with probability at least $1-l^{-c}$,
\[
\left|\sum_{i=1}^m \cos(\theta_i)\right|
\leq \sqrt{2m(c\log l+\log 2)}.
\]
\end{lemma}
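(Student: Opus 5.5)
This is Hoeffding's inequality for sums of independent bounded variables, so the plan is to verify its hypotheses and then invert the tail bound.

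First, the summands $X_i:=\cos(\theta_i)$ are independent. This holds because the $\theta_i$ are independent and each $X_i$ is a measurable function of $\theta_i$ alone. Each $X_i$ takes values in $[-1,1]$, an interval of length $2$. Each has mean zero, since $\E[\cos\theta]=\frac{1}{2\pi}\int_0^{2\pi}\cos\theta\,d\theta=0$.

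Next, apply Hoeffding's inequality to $\sum_{i=1}^m X_i$. With interval widths $b_i-a_i=2$, it gives, for every $t>0$,
\[
\Pr\left[\left|\sum_{i=1}^m X_i\right|\geq t\right]\leq 2\exp\left(-\frac{2t^2}{\sum_{i=1}^m (b_i-a_i)^2}\right)=2\exp\left(-\frac{t^2}{2m}\right).
\]
Setting the right-hand side equal to $\delta$ gives $t=\sqrt{2m\log\frac{2}{\delta}}$, which is exactly the first claim. If one prefers to avoid citing Hoeffding, there is a self-contained route. Use the sub-Gaussian moment bound $\E[e^{\lambda\cos\theta}]=I_0(\lambda)\leq e^{\lambda^2/4}\leq e^{\lambda^2/2}$, which follows by comparing the power series term by term. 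Then apply the Chernoff argument with the optimal choice of $\lambda$, and take a union bound over the two tails.

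The ``in particular'' statement is a substitution. Take $\delta=l^{-c}$, so that $\log\frac{2}{\delta}=c\log l+\log 2$. This needs $\delta\in(0,1)$, which holds whenever $l>1$. For $l\le 1$ the claimed bound $\sqrt{2m(c\log l+\log 2)}$ may be undefined or fail to be a valid bound. In the paper's applications, such as \Cref{lm:f_lowerbound}, $l$ is large, so I would note the implicit requirement $l>1$. Beyond this caveat there is no real obstacle; the only point requiring care is getting the constant in Hoeffding's exponent right for the interval width $2$.
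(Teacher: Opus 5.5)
Your proposal is correct and matches the paper's proof: Hoeffding's inequality on the mean-zero variables $X_i=\cos\theta_i\in[-1,1]$ gives the two-sided bound $2\exp(-t^2/(2m))$, which you invert at $t=\sqrt{2m\log(2/\delta)}$, and the second form follows by substituting $\delta=l^{-c}$. Your caveat that the second form implicitly needs $l>1$ is a fair refinement that the paper leaves implicit (for $l\le 1$ the probability guarantee is vacuous and the square root may be undefined).
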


\begin{proof}
Let $X_i=\cos(\theta_i)$. 
Then $X_1,\ldots,X_m$ are independent, $\E[X_i]=0$, and $X_i\in[-1,1]$ for all $i$. 
By Hoeffding's inequality, for any $t>0$, \[ \Pr\left[\sum_{i=1}^m X_i \geq t\right] \leq \exp\left( -\frac{2t^2}{\sum_{i=1}^m (1-(-1))^2} \right) = \exp\left(-\frac{t^2}{2m}\right). \] 
Applying the same inequality to $-X_i$ gives \[ \Pr\left[\sum_{i=1}^m X_i \leq -t\right] \leq \exp\left(-\frac{t^2}{2m}\right). \] 
Therefore, by a union bound, \[ \Pr\left[ \left|\sum_{i=1}^m \cos(\theta_i)\right| \geq t \right] \leq 2\exp\left(-\frac{t^2}{2m}\right). \] 
Taking \[ t=\sqrt{2m\log\frac{2}{\delta}} \] proves the first claim. 
Finally, setting $\delta=l^{-c}$ gives \[ \left|\sum_{i=1}^m \cos(\theta_i)\right| \leq \sqrt{2m\log(2l^c)} = \sqrt{2m(c\log l+\log 2)} \] with probability at least $1-l^{-c}$. 
\end{proof}

\begin{lemma}
\label{lm:cos_concentration}
Let $\theta_1,\ldots,\theta_m \sim \Unif[0,2\pi]$ be independent, and define
\[
h(\theta)=\max\{|\cos\theta|,|\sin\theta|\}.
\]
Then for any $\delta\in(0,1)$, with probability at least $1-\delta$,
\[
\sum_{i=1}^m h(\theta_i)
\le
\frac{2\sqrt2}{\pi}\,m+\sqrt{\frac{m}{2}\log\frac{1}{\delta}}.
\]
\end{lemma}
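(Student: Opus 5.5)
The statement is \Cref{lm:cos_concentration}: with probability at least $1-\delta$, $\sum_{i=1}^m h(\theta_i)\le \frac{2\sqrt2}{\pi}m+\sqrt{\frac m2\log\frac1\delta}$. The plan is a direct application of Hoeffding's inequality to the i.i.d.\ bounded variables $X_i:=h(\theta_i)=\max\{|\cos\theta_i|,|\sin\theta_i|\}$. This mirrors the proof of \Cref{lm:cos_chernoff}, with a different mean and range.

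The first step is to compute the mean $\E[h(\theta)]$ for $\theta\sim\Unif[0,2\pi]$. The function $h$ has period $\pi/2$. On $[0,\pi/2]$ it is also symmetric about $\pi/4$, equal to $\cos\theta$ on $[0,\pi/4]$ and to $\sin\theta$ on $[\pi/4,\pi/2]$. Therefore
\[
\E[h(\theta)]=\frac{4}{\pi}\int_0^{\pi/4}\cos\theta\,d\theta=\frac{4}{\pi}\cdot\frac{\sqrt2}{2}=\frac{2\sqrt2}{\pi}.
\]
This gives exactly the leading term $\frac{2\sqrt2}{\pi}m$ in the bound.

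The second step is to bound the range. Since $\max\{|\cos\theta|,|\sin\theta|\}^2\ge \frac12(\cos^2\theta+\sin^2\theta)$, each $X_i$ lies in $[1/\sqrt2,1]$. The range therefore has width $c:=1-1/\sqrt2$. We could use the crude interval $[0,1]$ of width $1$ instead, which is what the stated constant corresponds to. One-sided Hoeffding then gives
\[
\Pr\Bigl[\sum_i X_i-\tfrac{2\sqrt2}{\pi}m\ge t\Bigr]\le \exp\Bigl(-\frac{2t^2}{m c^2}\Bigr)\le \exp\Bigl(-\frac{2t^2}{m}\Bigr).
\]

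The final step is to set $t=\sqrt{\frac m2\log\frac1\delta}$. The right-hand side becomes $\exp(-\log\frac1\delta)=\delta$, which yields the claim; using the tighter width $c$ only strengthens it. The only nonroutine point is computing the mean via the $\pi/2$-periodicity of $h$.
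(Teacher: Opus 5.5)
Your proposal is correct and follows essentially the same route as the paper: compute $\E[h(\theta)]=\frac{2\sqrt2}{\pi}$ by splitting the quarter-period at $\pi/4$, then apply one-sided Hoeffding to variables in $[0,1]$ with $t=\sqrt{\frac m2\log\frac1\delta}$. Your remark that $h\ge 1/\sqrt2$, which narrows the range, is valid and only sharpens the bound, but the lemma does not need it.
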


\begin{proof}
Let
\[
X_i := h(\theta_i)=\max\{|\cos\theta_i|,|\sin\theta_i|\}.
\]
Then $X_1,\ldots,X_m$ are independent and satisfy $0\le X_i\le 1$.

We first compute the mean of $X_i$. By symmetry,
\[
\mathbb E[X_i]
=
\frac{4}{2\pi}\int_0^{\pi/2} \max\{\cos\theta,\sin\theta\}\,d\theta.
\]
On $[0,\pi/4]$, we have $\cos\theta\ge \sin\theta$, while on $[\pi/4,\pi/2]$, we have $\sin\theta\ge \cos\theta$. Hence
\[
\mathbb E[X_i]
=
\frac{2}{\pi}
\left(
\int_0^{\pi/4}\cos\theta\,d\theta
+
\int_{\pi/4}^{\pi/2}\sin\theta\,d\theta
\right).
\]
Evaluating the integrals gives
\[
\int_0^{\pi/4}\cos\theta\,d\theta
=
\sin(\pi/4)-\sin(0)
=
\frac{\sqrt2}{2},
\]
and
\[
\int_{\pi/4}^{\pi/2}\sin\theta\,d\theta
=
-\cos(\pi/2)+\cos(\pi/4)
=
\frac{\sqrt2}{2}.
\]
Therefore,
\[
\mathbb E[X_i]
=
\frac{2}{\pi}\cdot \sqrt2
=
\frac{2\sqrt2}{\pi}.
\]

Now apply Hoeffding's inequality to the independent random variables
$X_1,\dots,X_m\in[0,1]$. For any $t>0$,
\[
\Pr\!\left(
\sum_{i=1}^m X_i - \mathbb E\!\left[\sum_{i=1}^m X_i\right]
\ge t
\right)
\le
\exp\!\left(-\frac{2t^2}{m}\right).
\]
Set
\[
t=\sqrt{\frac{m}{2}\log\frac{1}{\delta}}.
\]
Then
\[
\Pr\!\left(
\sum_{i=1}^m X_i
\ge
\frac{2\sqrt2}{\pi}\,m+\sqrt{\frac{m}{2}\log\frac{1}{\delta}}
\right)
\le \delta.
\]
Equivalently, with probability at least $1-\delta$,
\[
\sum_{i=1}^m h(\theta_i)
=
\sum_{i=1}^m X_i
\le
\frac{2\sqrt2}{\pi}\,m+\sqrt{\frac{m}{2}\log\frac{1}{\delta}}.
\]
This proves the lemma.
\end{proof}

\begin{lemma}
\label{lm:cossum}
Let $\theta\sim\Unif[0,2\pi]$, then for any $\delta<0.1$, with probability at least $1-\delta$, for any positive integer $s<t$,
\[
\left|\sum_{k=s}^t \cos(k\theta)\right|,\left|\sum_{k=s}^t\sin(k\theta)\right|\leq \frac{3}{\delta}.
\]
\end{lemma}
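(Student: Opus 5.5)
The plan is to evaluate the partial sum in closed form with the Dirichlet-kernel identity and then show that the one bad quantity, $1/|\sin(\theta/2)|$, is small except on a set of $\theta$ of measure at most $\delta$. For $\theta\notin 2\pi\mathbb{Z}$ and integers $s<t$, write
\[
\sum_{k=s}^t e^{ik\theta}=\frac{e^{i(t+1)\theta}-e^{is\theta}}{e^{i\theta}-1}.
\]
The numerator has modulus at most $2$. The denominator has modulus exactly $|e^{i\theta}-1|=2|\sin(\theta/2)|$. Since the cosine and sine sums are the real and imaginary parts of this complex sum, both are bounded in absolute value by
\[
\left|\sum_{k=s}^t e^{ik\theta}\right|\le \frac{1}{|\sin(\theta/2)|}.
\]
This holds simultaneously for every pair $s<t$. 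No union bound over $(s,t)$ is needed, because the bound depends only on $\theta$.

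Next I would define the good event $\{|\sin(\theta/2)|\geq \delta/3\}$. On this event both sums are at most $3/\delta$, which is exactly the claimed bound. It remains to bound the measure of the complement, i.e.\ the set where $|\sin(\theta/2)|<\delta/3$ for $\theta\in[0,2\pi]$. Let $\phi=\theta/2\in[0,\pi]$. Since $\sin\phi\geq \frac{2}{\pi}\min\{\phi,\pi-\phi\}$ on $[0,\pi]$, the bad set in $\phi$ lies inside $[0,\pi\delta/6)\cup(\pi-\pi\delta/6,\pi]$. That set has length $\pi\delta/3$. In terms of $\theta$ the length doubles to $2\pi\delta/3$, so the probability is $(2\pi\delta/3)/(2\pi)=\delta/3\le\delta$. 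The endpoint $\theta\in\{0,2\pi\}$ falls inside the bad set anyway, so it needs no separate treatment.

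No step is a real obstacle. The only care needed is the constant bookkeeping in the last step: using Jordan's inequality, choosing the threshold $\delta/3$ so that the failure probability lands below $\delta$, and checking that $\delta<0.1$ keeps the bad intervals within $[0,\pi]$.
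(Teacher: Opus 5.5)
Your proposal is correct and takes essentially the same route as the paper. Both arguments bound the partial sums by $1/|\sin(\theta/2)|$ (you via the complex geometric series, the paper via real telescoping with product-to-sum identities) and then control the event $|\sin(\theta/2)|<\delta/3$; your Jordan-inequality computation, giving failure probability $\delta/3$, makes explicit the measure bound that the paper only asserts, and the paper even writes that inequality in the wrong direction ($|\sin(\theta/2)|\leq\delta/3$ instead of $\geq$).
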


\begin{proof}
For each $m$, we have
\begin{align*}
\left|\sum_{k=s}^{t} \cos(k\theta)\right|&=\left|\frac{\sum_{k=s}^t \left(\sin\left(\frac{2k+1}{2}\theta\right)-\sin\left(\frac{2k-1}{2}\theta\right)\right)}{2\sin(\theta/2)}\right|\\
&=\left|\frac{\sin\left(\frac{2t+1}{2}\theta\right)-\sin\left(\frac{2s-1}{2}\theta\right)}{2\sin(\theta/2)}\right|\\
&\leq \frac{1}{|\sin(\theta/2)|}.
\end{align*}
Similarly,
\begin{align*}
\left|\sum_{k=s}^{t} \sin(k\theta)\right|&=\left|\frac{\sum_{k=s}^t \left(\cos\left(\frac{2k-1}{2}\theta\right)-\cos\left(\frac{2k+1}{2}\theta\right)\right)}{2\sin(\theta/2)}\right|\\
&=\left|\frac{\cos\left(\frac{2s-1}{2}\theta\right)-\cos\left(\frac{2t+1}{2}\theta\right)}{2\sin(\theta/2)}\right|\\
&\leq \frac{1}{|\sin(\theta/2)|}.
\end{align*}

As $\theta$ is uniformly sampled, with probability at least $1-\delta$, we have $|\sin(\theta/2)|\leq \delta/3$.
Therefore,
\[
\left|\sum_{k=s}^t \cos(k\theta)\right|,\left|\sum_{k=s}^t\sin(k\theta)\right|\leq \frac{3}{\delta},
\]
and the result follows.
\end{proof}

%% file: appendix-prompt-templates.tex
\newpage
\section{Prompt Templates of Copy Test for Frontier LLMs}
\label{appendix:prompts}

This section lists the prompt templates used for evaluating frontier LLMs on the copy tests. 
For all tasks, the model is instructed to output only the copied sequence, with no explanation or additional formatting.

\subsection{Recursive-Flip Binary Copy}

For recursive-flip binary copying task, the prompt template consists both system prompt and user prompt as shown below.

The following is the system prompt:
\begin{Verbatim}[
    fontsize=\footnotesize,
    breaklines=true,
    breakanywhere=true
]
"You are taking a copying test.\nYour task is to copy the binary sequence exactly.\nOutput only the copied binary sequence.\nDo not add any explanation, quotes, punctuation, or formatting.\nDo not add spaces or newlines inside the sequence.\n"
\end{Verbatim}

The following is the user prompt:
\begin{Verbatim}[
    fontsize=\footnotesize,
    breaklines=true,
    breakanywhere=true
]
"Copy the following binary sequence exactly:\n\n{s}"
\end{Verbatim}
where \texttt{\{s\}} is the binary string to be copied.

\subsection{Imbalanced Binary Copy}

For the imbalanced binary copy task, the prompt template consists both system prompt and user prompt as shown below.

The following is the system prompt:
\begin{Verbatim}[
    fontsize=\footnotesize,
    breaklines=true,
    breakanywhere=true
]
"You are taking a copying test.\nYour task is to copy the binary sequence exactly.\nOutput only the copied binary sequence.\nDo not add any explanation, quotes, punctuation, or formatting.\nDo not add spaces or newlines inside the sequence.\n"
\end{Verbatim}

The following is the user prompt:
\begin{Verbatim}[
    fontsize=\footnotesize,
    breaklines=true,
    breakanywhere=true
]
"Copy the following sequence exactly:\n\n{s}"
\end{Verbatim}
where \texttt{\{s\}} is an imbalanced sequence over \texttt{a} and \texttt{b}. The target output is exactly \texttt{\{s\}}.

\subsection{Python List Conversion}

For the Python list conversion task, each input consists of a comma-separated sequence of sensor values. The model is asked to output the same sequence formatted as a single-line Python list. During evaluation, we uniformly sample one prompt from the following list:
\begin{lstlisting}[style=promptlist]
[
    "Copy the following sensor sequence exactly. Output only one single-line Python list, with no line breaks, no spaces, and no extra text:",
    "Rewrite the following sensor measurements as exactly one single-line Python list. Do not add any line breaks, spaces, explanation, or extra characters:",
    "Return the following numbers as one single-line Python list only. Keep the values and order exactly the same. No line breaks, no spaces, no extra text:",
    "Convert the following sensor readings into a Python list on a single line. Output only the list, with no spaces, no line breaks, and no other text:",
    "Repeat the following sequence exactly as a one-line Python list. Do not change any number. Do not insert line breaks, spaces, or commentary:",
    "Write the following measurements as exactly one Python list in a single line. Output only the list itself, with no spaces, no newlines, and no explanation:",
    "Copy the following numeric sequence exactly into a single-line Python list. Preserve every value and its order. No line breaks, no spaces, no extra text:",
    "Output exactly one single-line Python list containing the following numbers in the same order. Do not add spaces, newlines, or any surrounding text:",
    "Turn the following sensor values into a Python list written on one line only. Keep the sequence exactly unchanged. No spaces, no line breaks, no extra text:",
    "Produce exactly the following sequence as a single-line Python list. Output only the list, with no spaces, no line breaks, and no additional content:",
]
\end{lstlisting}

Then, we use such input:
\begin{Verbatim}[
    fontsize=\footnotesize,
    breaklines=true,
    breakanywhere=true
]
"{prompt}:\n{s}"
\end{Verbatim}
Here \texttt{{prompt}} is the sampled prompt and \texttt{\{s\}} is the comma-separated input sequence. If \texttt{\{s\}} is
\texttt{1.55,1.71,\ldots}, then the gold output is
\texttt{[1.55,1.71,\ldots,]}.

\subsection{Repeat-Structure Copy Test}

For the repeat-structure copy test, we use the following prompt:
\begin{Verbatim}[
    fontsize=\footnotesize,
    breaklines=true,
    breakanywhere=true
]
"Copy the following sequence exactly.\nOutput only the copied sequence and nothing else.\nDo not add explanations, quotation marks, or newlines.\n\nSequence:{s}"
\end{Verbatim}
where \texttt{\{s\}} is the corrupted repeat-structure sequence. The gold output is exactly \texttt{\{s\}}.

\section{Prompt Templates for Finetuning Data}

\newsavebox{\mybox} 

Following is the chat template used for finetuning experiments in Section~\ref{sec:llm-experiments}:
\begin{tcolorbox}
\ttfamily\small
\noindent\textbf{\#\#\# user:}\\
\textcolor[RGB]{163,89,121}{\{input\}}

\vspace{0.8em}

\noindent\textbf{\#\#\# assistant:}\\
\textcolor[RGB]{163,89,121}{\{output\}}
\end{tcolorbox}
where \textcolor[RGB]{163,89,121}{\texttt{\{input\}}} is the user prompt, and \textcolor[RGB]{163,89,121}{\texttt{\{output\}}} is the language model response. The response is always the sequence that that needs to be copied, and nothing else. The user prompt is constructed with a set of prompt templates that resembles natural language instructions that a user would send to an AI assistant when the users want the assistant to copy a sequence. This set of prompt templates are listed below as a Python list. They are uniformly and randomly sampled from when generating finetuning data. The \textcolor[RGB]{163,89,121}{\texttt{\{s\}}} indicates the portion that is to be replaced with the binary string to be copied.

\begin{lstlisting}[style=promptlist]
[
    # With newline
    "Please echo the next sequence verbatim (no extra text). Here it is:\n{s}",
    "Copy the text below exactly as-is (keep every character):\n{s}",
    "Task: return the string shown below. Do not add or remove anything.\n{s}",
    "Print the following content exactly, unchanged. Output only the content:\n{s}",
    "Write the following string exactly as it is:\n{s}",
    "I want you to say the following string and nothing else:\n{s}",
    "Say the following string exactly as it is:\n{s}",
    "Here is a string:\n{s}\n\nPlease print the string exactly as it is.",
    "I will give you a string. You need to repeat it exactly as it is. Here it is:\n{s}",
    "{s}\n\nPlease print the above string exactly as it is.",
    "{s}\n\nCan you write out the above string and nothing else.",
    "Can you copy a string of characters? Here is the string I want you to copy:\n{s}",
    "Repeat this string and say nothing else:\n{s}",

    # Without newline
    "Repeat the following string after me (output only the string): {s}",
    "Echo this sequence exactly (no extra characters): {s}",
    "Return the exact same string and nothing else -> {s}",
    "Copy-paste the following payload exactly as written: {s}",
    "Output EXACTLY the string below, unchanged: {s}",
    "Just say \"{s}\" and nothing else.",
    "Say '{s}' and say nothing else.",
]
\end{lstlisting}

\newpage